\documentclass[12pt]{article}

\usepackage{times}
\usepackage{fullpage}
\usepackage{parskip}
\usepackage[utf8]{inputenc} %
\usepackage[T1]{fontenc}    %
\usepackage[
  backref=true,
  backend=biber,
  natbib=true,
  style=alphabetic,
  sorting=alphabeticlabel,
  sortcites=true,
  minbibnames=3,
  maxbibnames=999,
  mincitenames=4,
  maxcitenames=4,
  minalphanames=4,
  maxalphanames=4,
  url=false,
  doi=false,
]{biblatex}

\DeclareSortingTemplate{alphabeticlabel}{
  \sort[final]{%
    \field{labelalpha} %
  }
  \sort{%
    \field{year}   %
  }
  \sort{%
    \field{title}  %
  }
}

\AtBeginRefsection{\GenRefcontextData{sorting=ynt}}
\AtEveryCite{\localrefcontext[sorting=ynt]}
\usepackage[hypertexnames=false,hidelinks]{hyperref} %
\hypersetup{
  pdftitle={Understanding Private Evolution as Learning-Augmented Clustering},
  pdfauthor={Audra McMillan, Kunal Talwar, Felix Zhou},
}
\usepackage{url}            %
\usepackage{xurl}           %
\usepackage{booktabs}       %
\usepackage{amsfonts}       %
\usepackage{nicefrac}       %
\usepackage{microtype}      %
\usepackage[dvipsnames]{xcolor}         %
\usepackage{amsmath}
\usepackage{amssymb}
\usepackage{mathtools}
\usepackage{amsthm}
\usepackage[capitalise,noabbrev,nameinlink]{cleveref}
\usepackage{graphicx}
\usepackage{wrapfig}

\usepackage{thmtools}
\usepackage{thm-restate}
\usepackage{subcaption}
\usepackage[ruled,vlined,linesnumbered,algosection,nosemicolon]{algorithm2e}
\usepackage[shortlabels,inline]{enumitem}
\usepackage{float}
\usepackage{comment}
\usepackage{xspace}
\usepackage{dsfont}
\usepackage{mathrsfs}

\newtheorem{theorem}{Theorem}[section]

\newtheorem{assumption}{Assumption}
\newtheorem{definition}[theorem]{Definition}
\newtheorem{lemma}[theorem]{Lemma}
\newtheorem{proposition}[theorem]{Proposition}
\newtheorem{corollary}[theorem]{Corollary}
\newtheorem{example}[theorem]{Example}

\AddToHook{env/theorem/begin}{\crefalias{section}{theorem}}
\AddToHook{env/fact/begin}{\crefalias{section}{fact}}
\AddToHook{env/assumption/begin}{\crefalias{section}{assumption}}
\AddToHook{env/definition/begin}{\crefalias{section}{definition}}
\AddToHook{env/lemma/begin}{\crefalias{section}{lemma}}
\AddToHook{env/proposition/begin}{\crefalias{section}{proposition}}
\AddToHook{env/corollary/begin}{\crefalias{section}{corollary}}
\AddToHook{env/example/begin}{\crefalias{section}{example}}
\AddToHook{env/remark/begin}{\crefalias{section}{remark}}

\crefname{section}{Section}{Sections}
\crefname{theorem}{Theorem}{Theorems}
\crefname{fact}{Fact}{Facts}
\crefname{assumption}{Assumption}{Assumptions}
\Crefname{assumption}{Assumption}{Assumptions}
\crefname{definition}{Definition}{Definitions}
\crefname{lemma}{Lemma}{Lemmas}
\crefname{proposition}{Proposition}{Propositions}
\crefname{corollary}{Corollary}{Corollaries}
\crefname{example}{Example}{Examples}
\crefname{remark}{Remark}{Remarks}
\crefname{equation}{Equation}{Equations}
\crefname{appendix}{Appendix}{Appendices}
\crefname{algorithm}{Algorithm}{Algorithms}
\crefname{algocf}{Algorithm}{Algorithms}
\crefname{figure}{Figure}{Figures}

\newcommand{\eg}{\emph{e.g.}}
\newcommand{\ie}{\emph{i.e.}}
\newcommand{\iid}{\emph{i.i.d.}}

\newcommand{\numSamples}{m}
\newcommand{\subsamplePE}{\texttt{SubsamplePE}}
\newcommand{\augPE}{\texttt{AugPE}}

\newcommand{\inbrace}[1]{\ensuremath{\left\{#1\right\}}}

\newcommand{\inparen}[1]{\ensuremath{\left(#1\right)}}
\newcommand{\insquare}[1]{\ensuremath{\left[#1\right]}}

\newcommand{\R}{\mathbb{R}}

\newcommand{\E}{\mathbb{E}}

\newcommand{\sset}{\subseteq}

\newcommand{\mcal}{\mathcal}

\newcommand{\nfrac}{\nicefrac}

\newcommand{\GAPE}{{\sc GAPE}}

\DeclarePairedDelimiter{\card}{\lvert}{\rvert}
\DeclarePairedDelimiter{\abs}{\lvert}{\rvert}
\DeclarePairedDelimiter{\norm}{\lVert}{\rVert}
\DeclarePairedDelimiter{\set}{\lbrace}{\rbrace}
\DeclarePairedDelimiter{\iprod}{\langle}{\rangle}

\DeclareMathOperator{\argmin}{argmin}

\DeclareMathOperator{\Var}{Var}
\DeclareMathOperator{\supp}{supp}

\newcommand{\eps}{\ensuremath{\varepsilon}}

\DeclareMathOperator{\Cov}{Cov}

\DeclareMathOperator{\proj}{proj}
\DeclareMathOperator{\RanAPI}{RandomAPI}
\DeclareMathOperator{\VarAPI}{VariationAPI}
\DeclareMathOperator{\NNhist}{NNhistogram}
\DeclareMathOperator{\Selection}{Selection}
\DeclareMathOperator{\ddim}{ddim}
\DeclareMathOperator{\tr}{tr}
\newcommand{\dd}{\mathop{}\!\mathrm{d}}
\DeclareMathOperator{\Lap}{Lap}
\DeclareMathOperator{\Rad}{Rad}
\DeclareMathOperator{\BL}{BL}

\DeclareMathOperator{\LSCost}{LSObj}
\DeclareMathOperator{\range}{range}

\NewDocumentEnvironment{pf}{o}
  {\IfNoValueTF{#1}{\begin{proof}}{\begin{proof}[Proof of #1.]}}
  {\IfNoValueTF{#1}{\end{proof}}{\end{proof}}}

\newcommand{\myparagraph}[1]{\paragraph{#1}}
\newcommand{\tpdp}{0}

\title{Understanding Private Evolution as Learning-Augmented Clustering}

\author{%
  Audra McMillan \\
  Apple
  \and
  Kunal Talwar \\
  Apple
  \and
  Felix Zhou\thanks{Part of this work was completed during an internship at Apple Research.} \\
  Yale University
}
\date{}

\begin{document}

\maketitle

\begin{abstract}
   Private Evolution (PE) is a differentially private algorithm for synthetic data generation. While it can be viewed as a Wasserstein learning algorithm, it performs much better in practice than worst‑case Wasserstein analyses would predict. We recast PE as generative model‑augmented Wasserstein learning. We show theoretically that when we take into account the use of a generative model that is able to capture something about the true distribution, then we can obtain much better performance bounds. For example, if the generator gives samples in the same low-dimensional space as the distribution, then sample complexity depends on intrinsic, not ambient, dimension. We also show that standard variants of PE can fail to converge on simple well-clustered instances, and propose a new geometry-aware version of PE with provable convergence on such instances. Experimentally, we show that our new algorithm is competitive with standard baselines and can improve recall.
\end{abstract}

\section{Introduction} 

When interacting with sensitive data, differentially private algorithms allow for accurate analysis and learning while providing a provable privacy guarantee. A differentially private (DP) synthetic data set is designed to capture various properties of a real dataset and can be useful for answering queries, as well as for evaluating and training models. DP synthetic data can avoid the computational and complexity overheads of differentially private algorithms for the same tasks. Additionally, a private synthetic dataset can be reused as often as needed, without having to worry about privacy budgets. This has led to significant interest in designing algorithms for, and understanding the fundamental limits of, private synthetic data generation.

Foundation models have recently been useful for a wide array of tasks, and have recently been used as a tool for synthetic data generation. \citet{lin2024images} recently proposed a genetic algorithm for generating private synthetic data known as Private Evolution (PE). This elegant algorithm (see \cref{alg:private-evolution-template}) interacts with the sensitive data through a very simple interface, making it suitable for both centralized and federated settings. Given access to an appropriate foundation model through a {\em VariationAPI}, it generates remarkably good synthetic data in both the image and text domains. This has led to a flurry of research aimed at understanding and improving the algorithm.

The private synthetic data problem can be naturally formalized as distribution learning in Wasserstein distance. This formulation has been explored in many recent works, including those analyzing private evolution. These analyses however give sample complexity bounds that are exponential in the dimension $d$, and this is in general unavoidable: any algorithm for Wasserstein learning in $d$ dimensions provably needs exponential in $d$ sample complexity in the worst case~\citep{singh2018minimax,dudley1969speed}. When used with embeddings of text or data as defining the metric, these minimax bounds are nearly uninformative. Even with embedding dimension as small as 128, one would need roughly $2^{128}$ times as many samples to reduce the Wasserstein error by a factor of two.

In this work, we provide a fresh viewpoint on the PE algorithm, through the lens of beyond worst case analysis. We study the algorithm in two frameworks beyond the worst case. As the foundation model API is crucial in the success of PE, we argue that any analysis of PE must exploit properties of the foundation model. We phrase the problem in the {\em learning augmented algorithms} framework, where our Wasserstein learning algorithm is analyzed under assumptions about the foundation model.

A natural assumption one can make is that generator distribution is (locally) low-dimensional. Indeed 
\citet{lin2024images} showed that the variation API output has effective dimensionality ($\approx 150$) that is significantly smaller than the ambient dimension for images. We empirically show that for typically-used generators for the text domain, the effective dimension is even smaller ($\approx 15$; \cref{apx:intrinsic-dimensionality}). For the generator to be useful, it needs to be ``compatible'' with the distribution being learnt. We measure this compatibility in terms of the relative variance of the VariationAPI in the direction of the private data (\cref{def:reachable}). In \Cref{sec:low-dim-var-api}, we show that these assumptions allow us to circumvent these worst-case rates to obtain an $\tilde O(\lambda^{1+\frac1{2k}} n^{-\frac1{k}})$ rate of convergence, where $\lambda, k$ are two measures of ``intrinsic dimensionality'' of the $\VarAPI$.

\begin{theorem}[Informal; see \Cref{thm:private-evolution-worst-case}]\label{thm:private-evolution-worst-case-informal}
    Suppose the $\VarAPI$ is supported on a set of doubling dimension at most $k\geq2$ and has effective rank $\lambda$ with respect to the dataset.
    Then \Cref{alg:private-evolution-worst-case} (a standard instantiation of PE) is $(\eps, \delta)$-DP and outputs a synthetic dataset with expected $W_1$-error
    \[
        \tilde O\inparen{\lambda^{1+\frac1{2k}} \inparen{\frac{\sqrt{\log(\nicefrac{1.25}\delta)}}{\eps n}}^{\frac1{k}} }\,.
    \]
\end{theorem}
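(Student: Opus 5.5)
The plan is to split the $W_1$ error of the output into a privacy (noise) term and an approximation term, and to bound each using the two assumptions on the $\VarAPI$. The privacy guarantee itself is routine. In each round the private data enter only through a nearest-neighbor histogram: every private point votes for its closest candidate, so one private point changes the histogram by at most one in $\ell_1$ and by at most $\sqrt2$ in $\ell_2$. Adding Gaussian noise with scale $\sigma\asymp\sqrt{T\log(\nfrac{1.25}\delta)}/\eps$ and then composing over $T$ rounds (advanced composition or zCDP) gives $(\eps,\delta)$-DP. Thresholding, selection and variation are post-processing. The rest of the argument is the utility analysis.

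For utility, I will use the fact that the $W_1$ distance between the private empirical measure and the output is at most the transport cost of sending each private point to its nearest candidate, plus the mass misallocated by the noisy histogram times the diameter. With $m$ candidates and per-coordinate noise $\sigma$, the misallocated mass is roughly $m\sigma/n$ (up to logarithmic factors, after clipping negative counts). The nearest-neighbor cost is a covering term. Because $\VarAPI$ outputs lie in a set of doubling dimension $k$, a ball of radius $r$ around the current candidate can be covered by roughly $(r/\rho)^{k}$ balls of radius $\rho$. So if the $\VarAPI$ samples spread over that set, about $m\approx (r/\rho)^k$ samples place a candidate within $\rho$ of every private point in the region. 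I will then argue by induction over rounds that the typical nearest-candidate distance contracts geometrically, from $r_t$ to $r_{t+1}$ with $r_{t+1}\le c\,r_t$. This is where the effective-rank condition (\cref{def:reachable}) is needed: variations of a candidate must move toward the private point with a constant fraction of their variance, which should cost a factor polynomial in $\lambda$. Setting the per-round error to $\rho + \lambda\,m\sigma/n$ with $m\approx(\lambda^{1/2}/\rho)^{k}$ and optimizing in $\rho$ gives
\[
\rho^{k+1}\approx \lambda^{1+k/2}\sigma/n, \qquad \text{i.e.} \qquad \rho\approx \lambda^{\frac{1}{2}+\frac{1}{2k}}\,(\sigma/n)^{1/(k+1)}.
\]
This is close to the claimed rate. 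I would adjust the bookkeeping of how the noise interacts with the selected mass, replacing $\ell_1$ misallocation with the contribution from nearest-neighbor cells only, so that the exponent becomes exactly $\frac1k$ and the prefactor $\lambda^{1+\frac1{2k}}$. The $\log$ factors from $T=O(\log n)$ rounds are absorbed into $\tilde O$.

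I expect the main obstacle to be the contraction step: showing that the relative-variance assumption forces at least one variation close to each private point, uniformly over all clusters, while the noise is large enough to delete low-count candidates. My first approach would be a per-private-point argument. I would lower-bound the probability that a variation lands in the $\rho$-ball toward the point, using the doubling measure at scale $\rho$ together with an anti-concentration bound along the private direction coming from $\lambda$. I would then take a union bound over the at most $n$ points and over the $T$ rounds, and choose the number of variations per candidate to make the failure probability $1/n$. Points whose cells are zeroed out by the noise are charged to the noise term via the threshold.
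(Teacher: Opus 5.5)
Your privacy argument is fine; the gaps are in the utility analysis, and your own computation exposes the main one. Charging the noise as ``misallocated mass times the diameter'' is a total-variation bound of order $m\sigma$ per round (with $\sigma$ the normalized noise level of \Cref{alg:private-evolution-worst-case}, i.e.\ your $m\sigma/n$). Balancing that against an approximation error of order $m^{-1/k}$ can only give $\sigma^{1/(k+1)}$, which is exactly the exponent $\frac1{k+1}$ you found. The proposed fix (``contribution from nearest-neighbor cells only'') is not an argument.

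The missing idea is that independent noise on nearby candidates largely cancels against $1$-Lipschitz test functions. So the perturbation should be measured in bounded-Lipschitz distance: $\E\,\BL(\mu_V,\mu_V+Z)\le m_V\sigma\,\mcal G_{m_V}(\mcal F_{\BL})$. The Gaussian complexity is then bounded by Dudley chaining, using the covering bound $\log N(\mcal F_{\BL},\norm{\cdot}_\infty;\beta)\le(5/\beta)^k\log(8/\beta)$ on a diameter-one set of doubling dimension $k$. This gives $\mcal G_n(\mcal F_{\BL})=\tilde O(n^{-1/k})$, hence a noise error $\tilde O(\sigma m^{1-1/k})$, a factor $m^{-1/k}$ better than yours. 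The same complexity bound, via symmetrization, controls the error of drawing $m$ i.i.d.\ points from the $\BL$-projected histogram by $\tilde O(m^{-1/k})$. Choosing $m=\lceil\sigma^{-1}\rceil$ balances the two terms at $\sigma^{1/k}$.

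Note also that \Cref{alg:private-evolution-worst-case} has no threshold and deletes no low-count candidates; that is \GAPE. Its selection step is $\BL$-projection followed by resampling, and your proposal omits the resampling error term altogether.

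Second, your contraction mechanism is not supported by the hypotheses. Doubling dimension is a property of the support $\Omega$, not of the $\VarAPI$ law. It gives no lower bound on the probability that a variation lands in a particular $\rho$-ball, so ``about $(r/\rho)^k$ samples cover the region'' would need a density (doubling-measure) assumption that is not made. In the paper, doubling dimension enters only through the covering numbers above, and contraction comes entirely from reachability. Take $s^2\in[c\norm{x-y}_2^2/(400\lambda),\,c\norm{x-y}_2^2/(200\lambda)]$ and expand $\norm{y+\xi-x}_2^2=\norm{x-y}_2^2-2\iprod{\xi,x-y}+\norm{\xi}_2^2$. Anticoncentration plus the effective-rank bound give $\iprod{\xi,x-y}\ge\sqrt{cs^2/\lambda}\,\norm{x-y}_2$ with probability $\frac13$, and Markov gives $\norm{\xi}_2^2\le12s^2$ with probability $\frac{11}{12}$. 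So with probability $\frac14$ the squared distance shrinks by a factor $1-\frac{c}{25\lambda}$. Generating one variation at each dyadic scale $2^{-\ell}$, $0\le\ell\le\ell^\star$, guarantees that some scale falls in this window whenever $\rho(x,y_x)>\alpha$, without knowing $\rho(x,y_x)$. This yields $\E\,W_1(\mu_V,\mu_D)\le(1-\gamma)W_1(\mu_S,\mu_D)+\alpha$ with $\gamma=\frac{c}{200\lambda}$.

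This also fixes your $\lambda$ bookkeeping. The recursion $\Gamma_{t+1}\le(1-\gamma)\Gamma_t+\alpha+\tilde O(\sigma m^{1-1/k}+m^{-1/k})$ accumulates to $\gamma^{-1}$ times the per-round error, which is the source of the factor $\lambda^{1}$. The factor $\lambda^{1/(2k)}$ comes from $\sigma\propto\sqrt T$ with $T=\lceil\gamma^{-1}\log(2+n\eps)\rceil=\Theta(\lambda\log(2+n\eps))$. So $T$ is not $O(\log n)$, and its $\lambda$-dependence cannot be absorbed into $\tilde O$.
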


This result only depends on the intrinsic dimensionality, and is independent of the ambient dimension! This is in sharp contrast to what is achievable without a generator: it is easy to show that even when the distribution of interest is supported on some $1$-dimensional subspace, any Wasserstein learning algorithm for privately learning this distribution (under concentrated DP) needs at least $\sqrt{d}$ samples when the ambient dimension is $d$. This implies that one cannot hope to get bounds independent of the ambient dimension without access to (and compatibility assumptions about) the generator. We thus give the first theoretical explanation for the importance of the generator in PE.

This analysis still has the Wasserstein error falling as $n^{-\frac1{k}}$, when the intrinsic dimensionality is $k$. While this bound is significantly better than the $n^{-\frac 1 d}$ dependence previously known, it can be a relatively slow rate when the intrinsic dimensionality $k$ is moderately large. Such a dependence is provably unavoidable in the worst-case, and we turn our attention to simple models for the data distribution to better understand the algorithm.

Inspired by \citet{lin2024images} who studied a duplicated data model, we study a simple clustered data model, where the data distribution is supported on a small number of well-separated clusters. While we would expect any reasonable algorithm to perform well on such instances, we surprisingly show that known variants of PE \citep{lin2024images, xie2024text} can badly fail even on these very simple instances. 
To the best of our knowledge, we are the first to identify this failure mode.

We identify the Selection step (where the algorithm selects the next iteration of the synthetic data from a set of candidates generated by the $\VarAPI$) in PE to be the main reason for this: both random selection and rank-based selection can miss a constant fraction of the clusters. We argue that the selection process should be made geometry-aware, and propose such a new variant of PE that modifies the selection step in a careful way. We show that when the clusters are well-separated and sufficiently large, then our algorithm \GAPE\ ({\bf G}eometrically {\bf A}ware {\bf P}rivate {\bf E}volution) provably recovers a good approximation to the private data.

\begin{theorem}[Informal; see \Cref{thm:private-evolution-clustering}]\label{thm:private-evolution-clustering-informal}
    Suppose the dataset consists of $\kappa$ clusters of diameter $r$ with intercluster distance $R>0$.
    There are such instances on which PE with ranking or subsampling outputs $S$ with coverage error $\max_{x\in D}\rho(x,S)=\Omega(R)$.
    On the other hand, suppose that the $\VarAPI$ satisfies the reachability condition with effective rank $\lambda$, that $R\gg\lambda r$, and that the minimum cluster size is
    \[
        \tilde\Omega\inparen{\lambda^{\frac14} \sqrt{\frac{n\log(\nicefrac{1.25}\delta)}\eps}}\,,
    \]
    then, under \Cref{asmp:warm-start}, for fixed $c$, \GAPE\ (\Cref{alg:private-evolution-clustering}) is $(\eps, \delta)$-DP and outputs $S$ with coverage error $O(\lambda r)$ with high probability.
\end{theorem}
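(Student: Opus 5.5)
The statement has two parts: a lower bound against ranking- and subsampling-based PE, and an upper bound for \GAPE. I will prove them separately, treating \Cref{asmp:warm-start} and the reachability definition (\cref{def:reachable}) as black boxes as far as possible.

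\textbf{Lower bound.} I would build explicit instances on which standard PE selection starves clusters.
\begin{itemize}
\item Take $\kappa$ well-separated clusters in which one large cluster holds most of the points and the other clusters are small but still of the allowed size.
\item The nearest-neighbor histogram then gives votes roughly proportional to cluster mass.
\item Under \emph{rank-based selection}, the top-ranked candidates all lie near the heavy cluster once the histogram is noisy. The remaining clusters receive no survivors, and the $\VarAPI$ only perturbs locally, so it never reaches them again.
\item Under \emph{subsampling}, the same instance works. Alternatively, take clusters whose initial candidates are few. Then with constant probability per cluster, none of its candidates is sampled in some round, and a union bound or independence over rounds shows that a constant fraction of clusters is lost.
\end{itemize}
Once a cluster has no nearby candidate, every point in it is at distance $\Omega(R)$ from $S$. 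This gives the claimed coverage error.

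\textbf{Upper bound: invariant.} The plan is an induction over iterations with the following invariant: each cluster $C_j$ contains a synthetic point within distance $O(\lambda r)$ of it, at least once the process has converged. \Cref{asmp:warm-start} supplies the base case.

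\textbf{Upper bound: privacy.} In each round \GAPE\ releases a nearest-neighbor histogram with Gaussian noise of scale $\sigma \asymp \sqrt{T\log(1.25/\delta)}/\eps$. Composition (or zCDP accounting) then gives $(\eps,\delta)$-DP, and \emph{all further steps are post-processing}.

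\textbf{Upper bound: inductive step.} There are two ingredients.
\begin{enumerate}
\item \emph{Votes survive noise.} Since $R \gg \lambda r$, every private point of $C_j$ votes only for candidates within distance $O(\lambda r)$ of $C_j$, never for candidates near another cluster. The candidates near $C_j$ therefore jointly receive $|C_j|$ votes.
\item \emph{Geometry-aware selection.} The selection groups candidates by geometry, for example by a greedy $R/2$-net, and allocates survivors per group instead of globally. Each cluster then keeps candidates as long as its noisy vote exceeds the threshold.
\end{enumerate}
To make the noisy vote exceed the threshold, I would take a Gaussian tail bound with a union bound over candidates and rounds. The noise summed over the up to $\sqrt{\lambda}$-ish candidates that compete within a cluster has magnitude about $\sigma\lambda^{1/4}\sqrt{n}$-scale. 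This is what dictates the minimum cluster size $\tilde\Omega(\lambda^{1/4}\sqrt{n\log(1.25/\delta)/\eps})$.

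\textbf{Upper bound: progress.} Within a cluster, reachability with effective rank $\lambda$ says that a variation of a point at distance $\Delta \gtrsim \lambda r$ moves closer to the data with constant probability. This gives a geometric decrease of the distance down to $O(\lambda r)$ over $T = O(\log)$ rounds. It is a potential argument like the one used for \Cref{thm:private-evolution-worst-case}, applied per cluster.

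\textbf{Main obstacle.} The hardest step is controlling selection under noise. I must show that the geometry-aware grouping does not merge clusters and that per-cluster noise cannot erase a cluster, while getting the exact $\lambda^{1/4}\sqrt n$ dependence. My first approach is to bound the number of candidates per cluster via the doubling or effective-rank structure of the $\VarAPI$ output. I would then apply concentration to the sum of Gaussian noises over that set rather than a crude union bound.
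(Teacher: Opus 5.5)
\textbf{Gap in the upper bound: the local search is never analyzed.} Your privacy argument matches the paper's, but your upper bound does not analyze the algorithm in the statement. \GAPE\ has no net-based grouping and no per-group allocation. It keeps only candidates whose \emph{individual} noisy count exceeds $\tau$. It then runs swap-based local search on the truncated min-cost-flow objective $\LSCost$ with just $m$ centers. The crux of the theorem is that a locally optimal solution has a center within $R/8$ of \emph{every} cluster, and your ``geometry-aware selection'' step simply assumes this.

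The paper proves it with a swap argument. Suppose $C_i$ has no center within $R/8$. The total flow into centers is $\sum_v\tilde H(v)\leq n+m_V\tau$, so some center $y_2$ receives at most $(n+m_V\tau)/m$. Because costs are truncated at $R/3$, closing $y_2$ costs at most $\frac R3\cdot\frac{n+m_V\tau}{m}$. Now open a thresholded candidate $y_1$ within $R/8$ of $C_i$. This reroutes at least $\card{C_i}-2m_V\tau$ units of flow from centers at distance at least $3R/4$ to distance at most $R/4+r$. The gain is at least $\frac R{16}(\card{C_i}-2m_V\tau)$. The hypothesis $\card{C_i}>8n/m+3m_V\tau$ makes this swap strictly improving, which contradicts termination. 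This is where the $n/m$ term in the cluster-size condition comes from, and it is absent from your account. A net-based selection rule might be analyzable, but that would be a theorem about a different algorithm.

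\textbf{Consequences: the cluster size, the invariant, and the lower bound.} Your derivation of the cluster size does not hold up.
\begin{itemize}
\item Thresholding in \GAPE\ is per candidate, so the crude bound $\|\tilde H-H\|_\infty\leq\tau$ is exactly the right tool. It places every thresholded candidate in $\supp H$. Pigeonhole over the at most $m_V$ candidates voted for by $C_i$ then gives one candidate with more than $2\tau$ true votes once $\card{C_i}>2m_V\tau$.
\item None of the ingredients you invoke is needed: no concentration of summed noise, no ``$\sqrt\lambda$ candidates per cluster'', and no doubling structure (which is not assumed in this section). Also, $\sigma$ carries no $\sqrt n$.
\item The factor $\lambda^{1/4}\sqrt n$ has a different origin. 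Each round contracts distances only by $1-c/(50\lambda)$, so $T=\Theta((\lambda/c)\log(R/r))$, not $O(\log)$. Since $\sigma\propto\sqrt T$, we get $m_V\tau=\tilde O(m\sqrt{\lambda\log(1.25/\delta)}/\eps)$. Balancing this against $n/m$ with $m\approx\sqrt{n\eps}/\lambda^{1/4}$ yields the stated bound.
\end{itemize}
Your invariant also needs fixing. It should be ``a synthetic point within $R/8$ of each cluster''. This is what warm start provides, and it already ensures that no candidate receives votes from two clusters. Votes do not land within $O(\lambda r)$ early on. Progress then follows from the recursion
\[
\max_x\rho(x,S^{(t)})\leq r+\max\{r,(1-\tfrac{c}{50\lambda})\max_x\rho(x,S^{(t-1)})\},
\]
whose fixed point $50r\lambda/c$ is where $O(\lambda r)$ comes from. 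This recursion needs two things. First, contraction must hold for \emph{every} $x$ in every round with high probability, which comes from the $\lceil 4\log(3Tn/\beta)\rceil$ draws per scale; constant probability is not enough. Second, each selected point must lie in $\supp H$, so it is the nearest candidate of some voter in its cluster. For the lower bound, ranking compares per-candidate counts, so failure must come from vote dilution, as in the paper's 70/30 example, not from per-cluster mass. If you also want the starved clusters to meet \GAPE's size condition, you must explain why ranking still drops them. Without noise, any candidate with more than $n/m$ votes is kept.
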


Finally, in \Cref{sec:experiments}, we empirically evaluate \GAPE\ on text datasets to understand if the algorithm's provable performance in synthetic models translates to improvements on real-world data that may not be as well-clustered.
We show that when the input dataset size is significantly larger than the number of synthetic datapoints being generated, \GAPE\ is competitive with existing variants of PE, with improvements that depend on the dataset, metric, and privacy budget.

In summary, we make the following contributions:
\begin{itemize} %
    \item We bring a beyond-worst-case analysis viewpoint to understanding PE. We give a significantly improved analysis of PE for Wasserstein Learning, in the learning-augmented framework. Our work shows that access to the generator provably helps in learning. We empirically study the low dimensionality of typically-used text generators.
    \item We show that in a simple clustered data model, existing PE variants can provably underperform. This motivates us to propose a new algorithm (\GAPE) and we show that it provably learns well-clustered data.
    \item Our empirical results show that \GAPE\ is competitive with existing PE variants, and improves recall in several settings.
\end{itemize}

\section{Preliminaries}

An outline of related works in this area can be found in Appendix~\ref{relatedworks}. A notation list can be found in Appendix~\ref{notation}. Differential privacy background can be found in Appendix~\ref{DPprelims}.

\subsection{Private Evolution Framework}
The PE framework assumes access to two generative APIs. $\RanAPI$ generates unconditional samples from some underlying distribution. This models a foundation model which may be prompted with some data-independent prompt.
 $\VarAPI$ takes as input data and generates variations conditioned on this input. This models a foundation model that is asked to generate variations of the input data.
PE begins with a random dataset obtained from a $\RanAPI$. It then iteratively improves the synthetic dataset, moving it closer to the private data, over $T$ rounds.
At each iteration, it generates variations using the $\VarAPI$ with the hope that some subset of the variations is a better synthetic dataset than the one from the previous round. 
Each true data point votes for the variation most like it (its nearest variation in the embedding space) to generate the nearest neighbor histogram. Gaussian noise is added to achieve differential privacy.
Finally, a $\Selection$ mechanism is applied to the noisy histogram to choose the next iteration of synthetic data.

Any distance metric $\rho$ can be used in the nearest neighbor computation, although, as in prior work, we will focus on the setting where $\rho$ is the Euclidean distance in an embedding space. That is, if $f$ is an embedding that maps data points into a Euclidean embedding space $\mathbb{R}^d$, then $\rho(x,y)=\|f(x)-f(y)\|_2$.
For simplicity of notation, throughout the remainder of this paper, we will assume that all computation occurs inside the embedding space $\mathbb{R}^d$. That is, we will assume that $\RanAPI$ outputs elements in $\mathbb{R}^d$ and $\VarAPI$ takes as input and output elements in $\mathbb{R}^d$. This is without loss of generality, assuming the existence of an embedding model that projects the data into $\mathbb{R}^d$. Thus, the ambient dimension $d$ is understood to be the embedding dimension. 

In \Cref{alg:private-evolution-template} we give pseudocode for the general outline of PE studied in prior works~\citep{lin2024images,xie2024text,gonzalez2025private}.
The $\RanAPI$, $\VarAPI$, and selection mechanism are the components studied below.

\newcommand{\mycommentstyle}[1]{\textcolor{gray}{#1}}
\SetCommentSty{mycommentstyle}

\SetKwComment{Comment}{$\triangleright$\ }{}

\SetKwProg{Fn}{Function}{}{end}

\SetKwFunction{privateEvolutionTemplate}{privateEvolutionTemplate}
\begin{algorithm}[htp]
    \caption{Private Evolution Template\label{alg:private-evolution-template}}
    \KwData{private dataset $D$; privacy parameters $\varepsilon$, $\delta$; number of synthetic samples $m$; iterations $T$; generative APIs $\RanAPI$, $\VarAPI$; distance metric $\rho$; selection mechanism $\Selection$}
    \Fn{\privateEvolutionTemplate{D}}{
        \BlankLine
        $\sigma \gets \frac{2\sqrt{T \log(\nfrac{1.25}\delta)}}{\eps}$\;
        \Comment{noise magnitude for privacy}
        \BlankLine
        $S^{(0)} \gets \RanAPI(m)$\;
        \For{$t=1, \dots, T$}{
            $V^{(t)} \gets \VarAPI(S^{(t-1)})$\;
            $H^{(t)} \gets \NNhist(D, V^{(t)}, \rho)$\;
            \Comment{nearest-neighbor histogram (\Cref{alg:nearest-neighbors-histogram})}
            $\tilde H^{(t)}\gets H^{(t)} + \mcal N(0, \sigma^2 I_{\card{V^{(t)}}})$\;
            $S^{(t)}\gets \Selection(\tilde H^{(t)}, \numSamples)$\;
            \Comment{abstract selection algorithm}
        }
        \Return $S^{(T)}$\;
	}
\end{algorithm}

$\NNhist$ builds a nearest neighbor histogram where each real data point votes for the synthetic data point closest to it, as shown in \Cref{alg:nearest-neighbors-histogram}.

\SetKwFunction{NNhistogram}{NNhistogram}
\begin{algorithm}[H]
    \caption{Nearest Neighbors Histogram\label{alg:nearest-neighbors-histogram}}
    \Fn{\NNhistogram{$D$, $V$, $\rho$}}{
        $H\gets [0, 0, \dots, 0]$\;
        \For{$x\in D$}{
            $\bar v\gets \argmin_{v\in V} \rho(x, v)$\;
            $H[\bar v] \gets H[\bar v] + 1$\;
        }
        \Return $H$\;
	}
\end{algorithm}

In \Cref{alg:private-evolution-worst-case}, we present pseudocode for a particular instantiation PE \citet{gonzalez2025private}. This is the version of PE that we will study in \Cref{sec:low-dim-var-api}. %
We write $\VarAPI(y)$ as a shorthand for $\bigcup_{\ell=0}^{\ell^\star} \VarAPI(y, 2^{-\ell})\cup \set{y}$
to denote the set of all generated variations
at different scales, including the original synthetic point, with scale $1$ and smallest scale $2^{-\ell^\star}$.
We also use the notation $\VarAPI(S) \coloneqq \cup_{y\in S} \VarAPI(y)$ in the natural way.
In this instantiation, the selection mechanism is implemented using a bounded-Lipschitz (BL) projection and subsampling. More details on the BL distance are given in \Cref{sec:proofoutline}. 

\SetKwFunction{privateEvolutionBeyond}{privateEvolutionBeyond}
\begin{algorithm}[htp]
    \caption{Beyond Worst-Case Private Evolution\label{alg:private-evolution-worst-case}}
    \Fn{\privateEvolutionBeyond{$D$, $T$, $\eps$, $\delta$, $k$, $\lambda$, $c$}}{
        \BlankLine
        $\sigma \gets \frac{2\sqrt{T \log(\nfrac{1.25}\delta)}}{n\eps}$\;
        \BlankLine
        $m\gets\lceil\sigma^{-1}\rceil$, $\alpha\gets\min\{1,\sigma^{1/k}\}$\;
        $\ell^\star\gets\lceil\log_2(200\lambda/(c\alpha^2))\rceil$\;
        \Comment{set $2^{-\ell^\star}$ to be the smallest $\VarAPI$ scale}
        $\VarAPI(\cdot) \gets \bigcup_{\ell=0}^{\ell^\star} \VarAPI(\cdot, 2^{-\ell})\cup \set{\cdot}$\;
        \Comment{For each previous candidate, generate $1$ variation at each of $\ell^\star+1$ scales}
        $S^{(0)} \gets \RanAPI(m)$ \Comment{arbitrary initialization}
        \For{$t=1, \dots, T$}{
            $V^{(t)} \gets \bigcup_{y\in S^{(t-1)}}\VarAPI(y)$\;
            $H^{(t)} \gets \frac1n\NNhist(D, V^{(t)}, \rho)$\;
            $\tilde H^{(t)}\gets H^{(t)} + \mcal N(0, \sigma^2 I_{\card{V^{(t)}}})$\;
            $\bar H^{(t)}\gets \proj_{\BL}(\tilde H^{(t)})$\;
            \Comment{Compute BL projection back to simplex}
            $S^{(t)}\sim_{\iid{}} (\bar H^{(t)})^{\otimes m}$\;
            \Comment{Subsample $m$ elements for next iteration}
        }
        \Return $S^{(T)}$\;
	}
\end{algorithm} 

\section{VariationAPI with Low Effective Rank \& Doubling Dimension}\label{sec:low-dim-var-api}

In this section, we show that exploiting the intrinsic low dimensionality of the $\VarAPI$ allows us to circumvent worst-case lower bounds for Wasserstein learning.
We identify two notions of dimensionality that can be used to provide convergence guarantees for private evolution with subsampling in terms of $W_1$-distance. The new convergence guarantees are independent of the ambient dimension. If $\VarAPI$ is low-dimensional then these new bounds are substantially better than the worst-case lower bounds, which are exponential in the ambient (embedding) dimension. 
Prior works~\cite{gonzalez2025private,lin2024images} modeled $\VarAPI$ as an isotropic Gaussian distribution with tunable variance.
We expand this and model $\VarAPI$ as a general distribution in $\mathbb{R}^d$.
The contraction argument (\Cref{lem:contraction-one-scale}) includes the specific Gaussian $\VarAPI$ as a special case.

\paragraph{\Cref{asmp:doubling-space}: Low Doubling Dimension}
Prior to stating the main result, we introduce the specific notions of ``low-dimensionality'' studied in this work.
This requires formalizing the definition of the $\VarAPI$.
We assume sample access to two distribution classes $\RanAPI$ and $\VarAPI(y, s^2)$ for $y\in \Omega$ and $s > 0$.
For this section, we focus mainly on $\VarAPI$ and assume that $\RanAPI$ samples from some distribution supported on $\Omega$.
We explore the impact of $\RanAPI$ (the initialization conditions) in \Cref{sec:selection-clustering}.
The distribution $\VarAPI = \VarAPI(y, s^2)$ is modeled with parameters $y, s^2$ that specify the input point and the variation scale.
\begin{definition}\label{def:variation-api}
    Let $Z\sim\VarAPI(y,s^2)$ denote a distribution
    supported on $\Omega$, parameterized by $y\in\Omega$
    and $s>0$, satisfying\footnote{We can relax the second condition to $\tr\Cov[Z]= \Theta(s^2)$.}
    $\E[Z]=y$ and $\tr\Cov[Z]=s^2$.
\end{definition}
The parameter $y$ is the data point for which we are producing variations. The parameter $s^2$ is inspired by the temperature parameter of large language models (LLMs). As we increase $s$, the variations being produced become more diverse.

We now introduce the first notion of low-dimensionality, which is related to the support of the $\VarAPI$. 
Recall that a metric space has doubling dimension $k$ if for all $r>0$, every ball of radius $r$ is contained in $2^k$ balls of radius $\nicefrac{r}2$.
\begin{assumption}\label{asmp:doubling-space}
    $(\Omega, \rho=\norm{\cdot}_2)$ is a metric space which has doubling dimension $\ddim(\Omega) = k\leq d$.
\end{assumption}
Intuitively, the doubling dimension controls the number of points needed to approximate a distribution supported on $\Omega$ in $W_1$-distance at a given resolution.
We will only need \Cref{asmp:doubling-space} for our general beyond worst-case result in this section (\Cref{sec:low-dim-var-api}). It is not required for \Cref{sec:selection-clustering}.

\paragraph{\Cref{asmp:reachable}: Reachability}
In order to ensure that PE makes progress in every iteration,
we need to ensure that $\VarAPI$ is likely to output variants that are closer to the private data than the synthetic data from the previous round. That is, $\VarAPI$ puts sufficient mass in the direction of the data.
We formalize this intuition in \Cref{asmp:reachable} below.
\begin{definition}[Reachable]\label{def:reachable}
    We say that the point $x\in \Omega$ is \emph{reachable
    from a distribution $Q$ with effective rank $\lambda\geq1$} if
    for $Z\sim Q$ and $y\coloneqq \E[Z]$, with $x\neq y$ and $(x-y)^\top\Cov[Z](x-y)>0$,
    \begin{enumerate}[noitemsep,topsep=0em]
        \item  
        $
            \frac{\tr\Cov[Z]\cdot \norm{x-y}_2^2}{(x-y)^\top \Cov[Z] (x-y)}
            \leq \lambda\,;
        $ (effective rank)
        \item $\Pr_{Z\sim Q}\left[ (x-y)^\top (Z-y) \geq \sqrt{c\cdot \Var[(x-y)^\top Z]} \right] \geq \frac13$ for an absolute constant $0<c\leq2$.\footnote{Any constant strictly less than $\nicefrac12$ is reasonable in place of $\nicefrac13$.} (anticoncentration)
    \end{enumerate}
\end{definition}

The first condition in \Cref{def:reachable} states that the one dimensional variance of $Q$ in the direction of $x$ is at least a $1/\lambda$ fraction of the total variance $\tr\Cov[Z]$. If $Q$ was isotropic in $\mathbb{R}^d$ then we would have $\lambda=d$, so $\lambda$ acts as an ``effective rank". The second condition is an anticoncentration bound on $Q$ in the direction of $x$, with constant probability, a sample moves toward $x$ by an amount comparable to the standard deviation.

\begin{assumption}\label{asmp:reachable}
    Fix distinct points $x,y\in\Omega$, $\lambda\geq1$,
    and an absolute constant $0<c\leq2$.
    For every $s>0$, $x$ is reachable from $\VarAPI(y,s^2)$
    with effective rank $\lambda$ and anticoncentration constant $c$.
\end{assumption}

\begin{wrapfigure}{r}{0.4\textwidth}
    \centering
    \includegraphics[width=\linewidth]{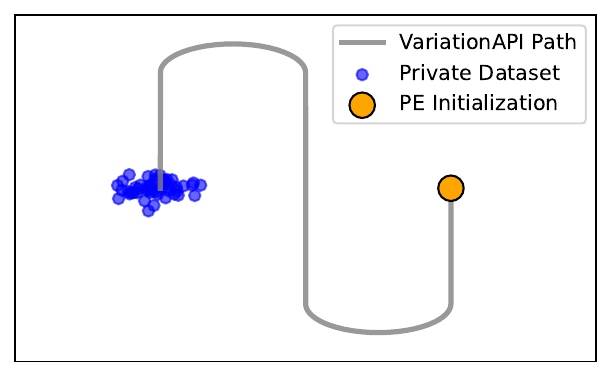}
    \caption{A pathological example of $\VarAPI$ output space.}
    \label{fig:snaking-path}
\end{wrapfigure}

\Cref{asmp:reachable} is required for PE to make progress at each iteration.
\Cref{ex:snake} describes a scenario where this condition fails, and PE fails to converge.

\begin{example}\label{ex:snake}
    \Cref{fig:snaking-path} depicts a dataset and PE initialization, along with a plausible output space of the $\VarAPI$; a one-dimensional manifold (a line) in $\mathbb{R}^2$. Given a synthetic data point $y$, $\VarAPI(y,s^2)$ outputs variants that are close to $y$ in the geodesic distance. \Cref{asmp:reachable} does not hold as $\VarAPI$ does not put mass in the direction of the dataset for most points $y$.
    One may hope that PE would walk along the $\VarAPI$ path and eventually output something close to the private dataset. However, PE can actually stall at a local optimum. Suppose that the PE initialization is at the end of a vertical segment (as in \Cref{fig:snaking-path}) and the variations output by $\VarAPI$ all lie along the same vertical segment as $y$. Although some variants will be closer to the private data in geodesic distance, they are all farther in the Euclidean metric used by PE; so $y$ is chosen and no progress is made.
    
\end{example}

\Cref{ex:reachability-2d} gives a simple instance where \Cref{asmp:reachable} does not hold and PE provably fails.

A promising future direction is to replace the Euclidean metric in the nearest neighbor histogram with a more sophisticated learned metric, which may help avoid the Euclidean local minima illustrated in \Cref{fig:snaking-path}.
Since the manifold on which the generator moves is a function of the generator, such a metric may be learnable without accessing the private dataset.

\subsection{Main Theorem}

Now, we are ready to state the main theorem of this section. Let $\mu_D$ denote the empirical distribution of $D$ and $\mu_S^{(T)}$ denote the empirical distribution of the synthetic dataset after $T$ iterations of \Cref{alg:private-evolution-worst-case}.

\begin{restatable}{theorem}{thmPrivateEvolutionWorstCase}\label{thm:private-evolution-worst-case}
    Fix a domain $\Omega\sset \R^d$ with diameter $1$ and doubling dimension at most $k\geq2$ (\Cref{asmp:doubling-space}).
    Let $D\sset \Omega$ and suppose that, at each iteration with synthetic dataset $S$,
    \Cref{asmp:reachable} holds with parameters $\lambda,c$
    for every $x\in D\setminus S$ and some
    $y_x\in\argmin_{y\in S}\rho(x,y)$.
    If
    $
        T\geq \frac{200\lambda}{c} \log\inparen{\frac{c}{200\lambda \sigma^{\frac1{k}}}}$ then
        $\E\insquare{ W_1\inparen{\mu_S^{(T)}, \mu_D} }
        = \tilde O\inparen{\lambda \sigma^{\frac1{k}}}\,
    $
    where the expectation is over the randomness of the algorithm.
    Further, for $\eps,\delta\in(0,1)$, if $\sigma = \frac{2\sqrt{T \log(\nicefrac{1.25}{\delta})}}{n\eps}$ and
    $
        T =  \left\lceil\frac{200\lambda}{c}\log(2+n\eps)\right\rceil \,,
    $
    then \Cref{alg:private-evolution-worst-case} is $(\eps, \delta)$-DP and $\E\insquare{ W_1\inparen{\mu_S^{(T)}, \mu_D} }$ is at most
    \begin{align*}
        \tilde O\inparen{\lambda^{1+\frac1{2k}} \inparen{\frac{\sqrt{\log(2+n\eps)\log(\nicefrac{1.25}\delta)}}{\eps n}}^{\frac1{k}} }\,.
    \end{align*}
\end{restatable}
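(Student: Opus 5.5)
Privacy is the easy part and will be handled first. Each iteration releases a histogram $H^{(t)}$ with $\ell_2$-sensitivity at most $\sqrt2/n$, since changing one point of $D$ moves one unit of vote mass. The Gaussian mechanism with the stated $\sigma$, composed over $T$ rounds (advanced composition or zCDP accounting, which is what the factor $2\sqrt{T\log(1.25/\delta)}$ is calibrated for), gives $(\eps,\delta)$-DP. The BL projection, the subsampling, and the $\VarAPI$ calls are post-processing of the noisy histograms and of data-independent randomness.

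For utility, the plan is to track the potential $\Phi_t=\frac1n\sum_{x\in D}\rho(x,S^{(t)})$, the average nearest-neighbour distance from data to synthetic set. We will show it contracts geometrically down to a noise floor, and then convert it to a $W_1$ bound.

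\textbf{Step 1 (one-step contraction).} Fix $x$ with nearest synthetic point $y_x$ at distance $\Delta=\norm{x-y_x}_2$. The set $V^{(t)}$ contains $y_x$ and one variation at each scale $2^{-\ell}$. Pick the scale $s$ with $s\asymp \Delta/\sqrt{\lambda}$ up to a factor $2$; it exists as long as $\Delta\gtrsim 2^{-\ell^\star}\sqrt{\lambda}$, which is why $\ell^\star$ is tied to $\alpha$. By the effective-rank condition, $\Var[(x-y)^\top Z]\ge s^2\Delta^2/\lambda$. With probability $\ge 1/3$ (anticoncentration) the variation moves toward $x$ by $\gtrsim \sqrt{c}\,s\Delta/\sqrt\lambda$ along $x-y$. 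Its total squared displacement has expectation $s^2$, so by Markov it is $O(s^2)$. Expanding $\norm{x-Z}^2=\Delta^2-2(x-y)^\top(Z-y)+\norm{Z-y}^2$ and optimizing $s$ gives $\E[\min(\rho(x,V^{(t)}),\Delta)]\le(1-c/(C\lambda))\Delta$ for $C$ around $200$. This is presumably \Cref{lem:contraction-one-scale}. Averaging over $x$ gives $\E[\frac1n\sum_x\rho(x,V^{(t)})]\le(1-c/(C\lambda))\Phi_{t-1}+O(\alpha)$.

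\textbf{Step 2 (noise, projection, subsampling).} Next we must show that passing from $V^{(t)}$ with the true histogram to $S^{(t)}$ loses only additive error. The $W_1$ distance between $\mu_D$ and the histogram-weighted measure $H^{(t)}$ on $V^{(t)}$ equals $\frac1n\sum_x\rho(x,V^{(t)})$. Adding Gaussian noise and projecting onto the simplex in BL norm costs $O(\text{BL-norm of noise})$. That quantity is bounded using the doubling dimension: cover $V^{(t)}$ at scale $\alpha$ with $N(\alpha)\le\alpha^{-k}$ cells, and bound the sup over $1$-Lipschitz test functions by $\alpha+\sigma\sqrt{N(\alpha)}$ via chaining. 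Subsampling $m=\lceil\sigma^{-1}\rceil$ points costs a similar empirical-process term $\alpha+\sqrt{\alpha^{-k}/m}$. With $\alpha=\sigma^{1/k}$ these balance to $\tilde O(\sigma^{1/k})$, possibly requiring Dudley integration to avoid a square-root loss. This gives $\E\Phi_t\le(1-c/(C\lambda))\E\Phi_{t-1}+\tilde O(\sigma^{1/k})$.

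The subtle point is that $\Phi$ measures data-to-synthetic distance, while BL/$W_1$ bounds control transport. I would therefore run the recursion on $\E W_1(\mu_D,\mu_S^{(t)})$ directly, using $\Phi_t\le W_1(\mu_D,\mu_{S^{(t)}})$ to feed Step 1.

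\textbf{Step 3 (unrolling).} Unrolling gives $\E W_1\le e^{-cT/(C\lambda)}+\frac{C\lambda}{c}\tilde O(\sigma^{1/k})$. The stated condition on $T$ kills the first term, leaving $\tilde O(\lambda\sigma^{1/k})$. Substituting $\sigma=2\sqrt{T\log(1.25/\delta)}/(n\eps)$ with $T=\Theta(\lambda\log(2+n\eps))$ yields the extra $\lambda^{1/(2k)}$ factor and the final bound.

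The main obstacle I expect is Step 2: keeping the BL-projection error at $\tilde O(\sigma^{1/k})$ uniformly over the data-dependent, growing support $V^{(t)}\subseteq\Omega$, without losing a factor from $\card{V^{(t)}}=m(\ell^\star+2)$.
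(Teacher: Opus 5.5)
Your architecture matches the paper's proof. It has Gaussian-mechanism privacy plus post-processing, the decomposition $W_1(\mu_D,\mu_S^{(t)})\le W_1(\mu_D,\mu_V)+2\BL(\mu_V,\tilde\mu_V)+W_1(\bar\mu_V,\mu_S^{(t)})$, contraction of the first term by matching a dyadic scale to $s^2\asymp c\Delta^2/\lambda$, a recursion on $\E W_1$, and $m=\lceil\sigma^{-1}\rceil$. Step 1 is sound. Your observation that $W_1(\mu_D,\mu_V)=\frac1n\sum_x\rho(x,V)$ while $\frac1n\sum_x\rho(x,S)\le W_1(\mu_D,\mu_S)$ is exactly the lemma the paper imports from Gonz\'alez et al.

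\textbf{The gap is in Step 2.} The single-scale estimates you write do not give $\sigma^{1/k}$, and Dudley integration, which you list as ``possibly'' needed, is in fact the crux. For fixed $V$ with $\card{V}=M$ and $Z\sim\mcal N(0,\sigma^2 I_M)$, a one-scale discretization gives $\E\sup_{f\in\mcal F_{\BL}}\sum_v Z_v f(v)\lesssim\sigma M\alpha+\sigma\sqrt{M\,N(\alpha)}$, not $\alpha+\sigma\sqrt{N(\alpha)}$. The within-cell residual pairs with $\norm{Z}_1\approx\sigma M$, and the aggregated noise on a cell $P$ has standard deviation $\sigma\sqrt{\card{P}}$. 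Since $M\approx m\approx\sigma^{-1}$ up to logarithms, this is $\tilde O(\alpha+\sqrt\sigma\,\alpha^{-k/2})$. Your subsampling bound $\alpha+\sqrt{\alpha^{-k}/m}$ is the same expression. At $\alpha=\sigma^{1/k}$ both second terms are of order one, so the claimed balance fails; optimizing a single scale gives only $\sigma^{1/(k+2)}$.

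\textbf{How to close it.} Chaining replaces $\sqrt{\log N(\mcal F_{\BL},\norm{\cdot}_\infty;\alpha)}\approx\alpha^{-k/2}$ by $\int_\alpha^1\beta^{-k/2}\,d\beta\approx\alpha^{1-k/2}$ (a logarithm when $k=2$), which recovers exactly the missing factor $\alpha$. The paper bounds both error terms by the Gaussian complexity: $\E\BL(\mu,\mu+Z)\le M\sigma\,\mcal G_M(\mcal F_{\BL})$, and, by symmetrization, $\E W_1(\mu_m,\mu)\le\sqrt{2\pi}\,\mcal G_m(\mcal F_{\BL})$. It then shows $\mcal G_n(\mcal F_{\BL})=\tilde O(n^{-1/k})$ via Dudley and the sup-norm covering bound $\log N(\mcal F_{\BL},\norm{\cdot}_\infty;\beta)\le(5/\beta)^k\log(8/\beta)$ for bounded Lipschitz functions on a doubling space. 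This yields $\tilde O(\sigma m_V^{1-1/k}+m^{-1/k})=\tilde O(\sigma^{1/k})$.

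The same device removes the obstacle you anticipate. $\mcal G_n$ is a supremum over all configurations in $\Omega$, and the noise is independent of $V^{(t)}$, so the data dependence of the support is irrelevant after conditioning. Moreover, $\card{V}$ enters only through $\sigma m_V^{1-1/k}$, so $m_V=m(\ell^\star+2)$ costs just an $O(\log(\lambda/\sigma))$ factor.

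Two minor points:
\begin{enumerate}
\item The calibration $\sigma=2\sqrt{T\log(1.25/\delta)}/(n\eps)$ comes from composing the $T$ Gaussian releases exactly (equivalent to one Gaussian mechanism with sensitivity $\sqrt{2T}/n$) followed by the classical Gaussian-mechanism bound. Advanced composition would not give this calibration.
\item Because $\sigma$ depends on $T$, you still need to check that $T=\lceil\gamma^{-1}\log(2+n\eps)\rceil$ with $\gamma=c/(200\lambda)$ satisfies $T\ge\gamma^{-1}\log(\gamma/\sigma^{1/k})$. This follows from $\sigma\ge2\sqrt{\log(1.25/\delta)}/(n\eps)$ and $\gamma\le1/100$.
\end{enumerate}
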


The first thing to note about this bound is that it only depends on the intrinsic dimension $k$ and the effective rank $\lambda$, \emph{not} the ambient dimension $d$.
The bound is approximately
\[
    \tilde{O}\left(\lambda^{1+1/(2k)}
    \left(\frac{\sqrt{\log(1.25/\delta)}}{\eps n}\right)^{1/k}\right).
\]
While this still scales as $n^{-1/k}$ for intrinsic dimension $k$, it is a significant improvement over the $n^{-1/d}$ worst case scaling for Wasserstein learning.

There are several other notions of low-dimensionality that are studied in data analysis. In particular, it is commonly conjectured that high dimensional datasets that occur in practice actually lie on low-dimensional manifolds embedded in the ambient space\footnote{This is known as the ``Manifold Hypothesis".}. We conjecture that our results extend to the setting of low-dimensional manifolds with low curvature.

\subsection{Proof Outline}\label{sec:proofoutline} 
We defer the full proof of \Cref{thm:private-evolution-worst-case} to \Cref{apx:private-evolution-worst-case} and sketch the high-level details below.
The key idea is that if $x$ is reachable from $\VarAPI(y, s^2)$, then the distance between $x$ and a single variation at scale $s^2\approx \rho(x, y)^2/\lambda$ is at most $(1-\nfrac{c}{50\lambda})\rho(x, y)$ with constant probability (\Cref{lem:contraction-one-scale}).
In other words, the set of variations will contain a point that approaches the true underlying dataset by some multiplicative factor.
The catch is that we do not have knowledge of $\rho(x, y)$ but can approximate the correct value by producing a variation at each scale (powers of two).
We provide more background information followed by a more detailed exposition below.
This reduces the $W_1$-distance between the true dataset and the set of variations.
The remaining steps are to control the distortion in $W_1$-distance due to Gaussian noise necessary for privacy and subsampling to reduce the size of the support.

Given two (possibly signed) measures $\mu$ and $\nu$, the bounded-Lipschitz distance between them is given by
$
    \BL(\mu, \nu) \coloneqq \sup_{f\in \mcal F_{\BL}} \abs*{\int f \dd\mu - \int f \dd\nu}\,,
$
where the supremum is taken over all bounded 1-Lipschitz functions
$\mcal F_{\BL} \coloneqq \set{f: \|f\|_{\infty}\leq 1, \abs{f(x) - f(y)}\leq \rho(x, y)}$.
By Kantorovich duality~\citep{villani2008optimal}, $\BL(\mu, \nu)= W_1(\mu,\nu)$ when $\mu$ and $\nu$ are probability measures on our diameter-one domain. The BL-distance will be useful to us since it is defined for signed measures, which naturally occur in the noisy histogram obtained from adding Gaussian noise to ensure DP.

For a dataset $D$ and
current synthetic dataset $S^{(t)}$,
let $\mu = \mu_D$ and $\mu_S^{(t)}$ to denote the empirical distribution over $D$ and $S^{(t)}$. Given variations $V^{(t)}$ generated by the $\VarAPI$, let $\mu_V^{(t)}$ be the distribution induced by the nearest neighbor histogram on $V^{(t)}$. That is, $\mu_V^{(t)}$ is supported on $V^{(t)}$ and the probability mass of a point $y$ is proportional to the number of point's in $D$ for which $y$ is that points nearest neighbor in $V^{(t)}$.
Let $\tilde \mu_V^{(t)}$ be the signed measure induced by the \emph{noisy} nearest neighbor histogram and $\bar \mu_V^{(t)}$ be the bounded-Lipschitz projection of $\tilde \mu_V^{(t)}$ onto the space of probability measures.

By repeatedly applying the triangle inequality, \citet{gonzalez2025private} bounded the $W_1$-distance between the private and synthetic datasets $W_1(\mu_D, \mu_S^{(t+1)})$ by
\begin{align*}
    &\overbrace{\E[W_1(\mu_D, \mu_V^{(t+1)})]}^{A} + 2\cdot \overbrace{\E[\BL(\mu_V^{(t+1)}, \tilde \mu_V^{(t+1)})]}^{B} + \overbrace{\E[W_1(\mu_S^{(t+1)}, \bar \mu_V^{(t+1)})]}^{C}\,.
\end{align*}
Here, the factor of 2 is due to the fact that $\BL(\tilde \mu_V, \bar\mu_V)\le \BL(\tilde\mu_V, \mu_V)$.
While \citet{gonzalez2025private} focused on bounding each term using the ambient dimension, we provide a more refined analysis using the different notions of intrinsic dimension.

Recall that the smaller  the effective dimension of the $\VarAPI$ (\Cref{asmp:reachable}),  $\lambda$, the higher ``fraction of mass'' the $\VarAPI$ puts in the direction of the data. This means that we can expect the contraction rate at each iteration (term A) to scale with $\lambda$. Indeed, we can bound the contraction rate of each iteration as a function of $\lambda$: $A\leq (1-\frac{c}{200\lambda}) W_1(\mu_D, \mu_S^{(t)})+\alpha$.

Next, using a chaining argument, it can be shown that $B\leq\tilde O\inparen{\sigma m_V^{1-\frac1{k}}}$.
Here $\sigma$ is the magnitude of Gaussian noise added to the histogram, $m_V\geq\card{V}$ is a public upper bound on the number of variations, and $k$ is the doubling dimension of the domain (\Cref{asmp:doubling-space}).
Roughly speaking, $k$ bounds the number of ``directions'' that adding Gaussian noise can warp the nearest neighbors histogram distribution $\mu_V^{(t+1)}$.

Finally, the error introduced by approximating the projected distribution via subsampling can be bounded by $C\leq\tilde O(m^{-1/k})$ using a similar chaining argument.
Choosing $m$ appropriately to balance the terms yields the desired outcome.

\section{Candidate-Selection via Clustering}\label{sec:selection-clustering}

\citet{lin2024images} originally studied PE with subsampling, where the next iteration of synthetic data points is obtained from the noisy histogram by subsampling $m$ points with replacement.
Then, \citet{xie2024text} studied PE with ranking for text data, where the $m$ points on the noisy histogram with the highest count are selected.
This section introduces \GAPE, a new variation of PE.
 
\citet{lin2024images} analyzed the case where the dataset consists of $\kappa$ points with large multiplicity
and showed the private evolution with subsampling provably converges in this setting.
However,
as we show in a simple example,
both the ranking and subsampling selection rules can perform arbitrarily worse than \GAPE, even in the slightly more general setting of well-separated clusters. Both ranking and subsampling fail to consider the relative location of histogram points and are based only on the noisy counts.
This motivates us to design a ``geometry-aware" selection mechanism which uses a clustering-based algorithm to select a representative set of candidates taking geometry into account.

\begin{example}[Ranking \& Subsampling Failure]\label{ex:ranking-subsampling-fails}
     Consider a dataset of two well-separated clusters where one cluster contains 70\% of the population. We aim to generate 3 synthetic points (ideally 2 in the majority, 1 in the minority). Even starting with this exact distribution and generating 2 variants per point, vote dilution can eliminate the minority. %
     If each cluster's votes split evenly, noiseless ranking picks only majority points; subsampling selects only majority variants at some iteration with high probability. Once the minority cluster is dropped in an iteration, it may never be recovered if variations cannot move between clusters. %
\end{example}

\myparagraph{Setting.}
In this setting,
the dataset consists of $\kappa$-clusters such that the intracluster radius is small,
but the intercluster (separation) between clusters is large.
Specifically, we assume the following assumptions hold.
\begin{assumption}[$(r, R)$-Well-Clustered]\label{asmp:separated-clusters}
    The data set consists of $\kappa$ clusters of diameter $r$ 
    such that the minimum distance between two points in different clusters is at least some given distance $R>0$ (well-separated).
    Note that we assume $R$ and $r$ are known, but not $\kappa$.
\end{assumption}

\begin{assumption}[Warm-Start]\label{asmp:warm-start}
    For each cluster, there is an initial synthetic data point $s\in S$ within distance $\nfrac{R}8$ of it.
\end{assumption}
In practice, PE is usually initialized using some prior knowledge of the dataset (\eg{}, \Cref{apx:experimental-details}).
A missed cluster may never be recovered (\Cref{ex:ranking-subsampling-fails}).

We work under the general low effective rank $\VarAPI$ model from \Cref{sec:low-dim-var-api} (\Cref{def:reachable}) but do not assume low doubling dimension (\Cref{asmp:doubling-space}).

We are now ready to state the main result of this section.
\begin{restatable}[Cluster Discovery]{theorem}{thmPrivateEvolutionClustering}\label{thm:private-evolution-clustering}
    Let $D\sset \R^d$ be a dataset of size $n$ that is $(r, R)$-well-clustered (\Cref{asmp:separated-clusters})
    and suppose that, at each iteration with synthetic dataset $S$,
    \Cref{asmp:reachable} holds with parameters $\lambda,c$
    for every $x\in D\setminus S$ and some
    $y_x\in\argmin_{y\in S}\rho(x,y)$.
    Further assume that $R\geq 400r\lambda/c$, that the initial dataset $S\sset \R^d$ satisfies the warm-start condition (\Cref{asmp:warm-start})
    and that every cluster $C_i$ satisfies $\card{C_i}>8n/m+3m_V\tau$, where $\tau=\sigma\sqrt{2\log(\nicefrac{6Tm_V}{\beta})}$ and $m$ ($m_V=\tilde O(m)$) upper bounds the number of synthetic (variation) points per iteration.
    Then after $T=\lceil(50\lambda/c)\log(\nfrac Rr)\rceil$ iterations, GAPE (\Cref{alg:private-evolution-clustering}) with noise parameter $\sigma$ outputs a synthetic dataset $S^{(T)}$ such that, with probability $1-\beta$,
    \[
        \rho(D, S^{(T)}) = \sup_{x\in D} \inf_{s\in S^{(T)}} \rho(x, s) \leq \frac{100r \lambda}{c} = O(r\lambda)\,.
    \]
    For $\eps,\delta\in(0,1)$ and
    $
        \sigma = \frac{2\sqrt{T \log(\nicefrac{1.25}{\delta})}}{\eps}\,,
    $
    the algorithm is $(\eps, \delta)$-DP and succeeds with cluster sizes
    \[
        \card{C_i} = \tilde \Omega\inparen{\frac{m_V\sqrt\lambda}{\eps}+\frac nm}\,.
    \]
\end{restatable}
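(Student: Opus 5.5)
The plan is to prove privacy and utility separately. Utility goes through a deterministic induction, conditioned on a single high-probability event controlling the noise and the randomness of the $\VarAPI$.

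\textbf{Privacy.} Each iteration releases one Gaussian-noised nearest-neighbor histogram. Replacing one record of $D$ moves one vote, so the histogram has $\ell_2$-sensitivity $\sqrt2$. Everything else in \GAPE{} (the $\VarAPI$ calls and the clustering-based selection) is post-processing of the noisy counts and public randomness. Composing $T$ Gaussian mechanisms with $\sigma = 2\sqrt{T\log(\nicefrac{1.25}\delta)}/\eps$ then gives $(\eps,\delta)$-DP, by the same accounting as for \Cref{alg:private-evolution-template}.

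\textbf{Good event.} I would condition on the event $\mathcal E$ on which every noise coordinate in every round has magnitude at most $\tau=\sigma\sqrt{2\log(\nicefrac{6Tm_V}{\beta})}$. A Gaussian tail bound plus a union bound over at most $Tm_V$ coordinates gives failure probability at most $\beta/3$.

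I would also need that, in every round and for every cluster, some variation succeeds in the sense of \Cref{lem:contraction-one-scale}. That lemma gives success probability $\ge$ a constant at the right scale, and \GAPE{} generates variations at every dyadic scale $2^{-\ell}$. Producing $O(\log(\kappa T/\beta))$ independent copies per scale, which is absorbed into $m_V=\tilde O(m)$, makes the failure probability at most $\beta/3$ after a union bound over clusters and rounds.

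\textbf{Invariant.} On these events I would prove by induction on $t$ that for each cluster $C_i$ there is a selected point $s_i^{(t)}\in S^{(t)}$ with
\[
\max_{x\in C_i}\rho(x,s_i^{(t)})\le d_t,
\qquad d_{t+1}\le\Bigl(1-\tfrac{c}{50\lambda}\Bigr)d_t+r,
\qquad d_0\le \nfrac R8 .
\]
The base case $d_0\le\nfrac R8$ is the warm start (\Cref{asmp:warm-start}).

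\textbf{Inductive step.} Fix a point $x\in C_i$. By reachability (\Cref{asmp:reachable}) and \Cref{lem:contraction-one-scale}, some variation $v$ of $s_i^{(t)}$ satisfies $\rho(x,v)\le(1-\nfrac{c}{50\lambda})d_t$. Since $C_i$ has diameter $r$, every $x'\in C_i$ has its nearest variation within $(1-\nfrac{c}{50\lambda})d_t+r$.

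The key geometric observation is that \emph{every} variation receiving at least one vote from $C_i$ lies within $d_{t+1}+r$ of all of $C_i$. Such a variation is within $d_{t+1}$ of its voter, and the voter is within $r$ of every other point of $C_i$. Because $R\ge400r\lambda/c$ and all $d_t\le\nfrac R8$, these voted points are far from every other cluster. Hence the votes of distinct clusters land on disjoint, well-separated groups of candidates.

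\textbf{Selection.} This is the step I expect to be the main obstacle, since the selection rule is where ranking and subsampling break (\Cref{ex:ranking-subsampling-fails}). Cluster $C_i$ spreads $|C_i|$ votes over at most $m_V$ candidates, so its top candidate has true count at least $|C_i|/m_V$. The hypothesis $|C_i|>8n/m+3m_V\tau$ is what guarantees this candidate's noisy count exceeds the selection threshold, which involves roughly $2\tau$ and the $\nfrac nm$ mass per slot. Meanwhile, on $\mathcal E$, candidates with zero true count have noisy count at most $\tau$ and are discarded.

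The clustering step then merges candidates within distance $O(\nfrac R4)$ and keeps a representative from each merged group. This representative is a genuinely voted point, so by the geometric observation it is within $d_{t+1}+r$ of all of $C_i$. Its group cannot be absorbed into another cluster's group, by the separation above. The $8n/m$ term should ensure that the budget of $m$ selected points is never exhausted before every cluster is represented.

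I would need to check carefully that the extra $+r$ from picking an arbitrary voted point rather than the best one only changes the recursion constants. The recursion then becomes $d_{t+1}\le(1-\nfrac{c}{50\lambda})d_t+2r$, whose fixed point is $100\lambda r/c$.

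\textbf{Conclusion.} Unrolling the recursion for $T=\lceil(50\lambda/c)\log(\nfrac Rr)\rceil$ rounds shrinks the $\nfrac R8$ initial term to $O(r)$. This gives $\rho(D,S^{(T)})\le 100r\lambda/c$ with probability $1-\beta$. Substituting the value of $\sigma$ into the cluster-size condition gives $|C_i|=\tilde\Omega(m_V\sigma+\nfrac nm)$. Since $T=\tilde O(\lambda)$, we have $\sigma=\tilde O(\sqrt\lambda/\eps)$, which yields $|C_i|=\tilde\Omega(\nfrac{m_V\sqrt\lambda}{\eps}+\nfrac nm)$ as stated.
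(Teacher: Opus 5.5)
\textbf{Gap: you analyze the wrong selection rule.} Your privacy accounting, the noise event $\|\tilde H-H\|_\infty\le\tau$, the fact that every thresholded candidate lies in $\supp H$, and the pigeonhole step giving each cluster a candidate with many true votes all match the paper. The gap is the selection step, which is the heart of the theorem. You analyze a rule that merges candidates at scale $O(R/4)$ and keeps one representative per group, but \GAPE{} does nothing of the kind. It runs swap-based local search over $\tilde H_\tau$ to locally minimize $\LSCost$. This is a signed min-cost-flow objective with truncated cost $\min\{\rho,R/3\}$ and a budget of $m$ centers. A priori, such a local optimum could spend all $m$ centers on the large clusters.

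Your sentence that ``the $8n/m$ term should ensure the budget is never exhausted'' is therefore not an argument. Also, the $n/m$ term does not enter through the threshold, which is just $\tau$. What is missing is an exchange argument against local optimality:
\begin{enumerate}
\item Suppose the returned centers contain none within $R/8$ of $C_i$, and fix an optimal flow. Total inflow to centers is at most $n+m_V\tau$, so some center $y_2$ receives at most $(n+m_V\tau)/m$.
\item By truncation, closing $y_2$ and rerouting its inflow to another center costs at most $\frac R3\cdot\frac{n+m_V\tau}{m}$.
\item The candidates voted for by $C_i$ carry at least $|C_i|-m_V\tau$ positive weight, and at most $m_V\tau$ of it can go to negative-weight nodes. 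So at least $|C_i|-2m_V\tau$ units currently travel to centers at distance $\ge 3R/4$, each at cost $R/3$.
\item Opening a thresholded $y_1$ near $C_i$ cuts each such unit's cost to at most $R/4+r$. This is a gain of at least $\frac R{16}(|C_i|-2m_V\tau)$.
\item The hypothesis $|C_i|>8n/m+3m_V\tau$, which also forces $m>8$, is exactly what makes the gain exceed the loss. This contradicts local optimality.
\end{enumerate}
The same hypothesis gives $\sum_v\tilde H(v)\ge n-m_V\tau>0$. So the flow problem is feasible and local search never takes its early-return branch; your proposal does not address this either.

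\textbf{Smaller issues in your contraction step.} Reachability is only assumed from a nearest synthetic point $y_x$, not from your designated $s_i^{(t)}$. This is easily fixed: contract from $y_x$, which is at least as close. More substantively, you contract one representative per cluster and then pay the diameter twice, giving $d_{t+1}\le(1-\frac{c}{50\lambda})d_t+2r$. Its fixed point $100r\lambda/c$ equals $R/4$ when $R=400r\lambda/c$. So your invariant $d_t\le R/8$ is not preserved, and the final bound slightly exceeds $100r\lambda/c$.

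The paper avoids this in three ways:
\begin{enumerate}
\item It union-bounds over all $n$ points (hence the $\lceil 4\log(3Tn/\beta)\rceil$ copies per scale). This gives $\rho(x,V)\le\max\{r,(1-\frac{c}{50\lambda})\rho(x,S)\}$ for every $x\in D$.
\item It keeps the coverage invariant in the weak form $\rho(y_i,C_i)\le R/8$. This is automatic for every voted candidate, since $\max\{r,(1-\frac{c}{50\lambda})(R/8+r)\}\le R/8$.
\item It tracks $\max_{x\in D}\rho(x,S^{(t)})\le\max\{2r,\,r+(1-\frac{c}{50\lambda})\max_{x\in D}\rho(x,S^{(t-1)})\}$. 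This loses only one $r$ per round and yields $50r\lambda/c+r\le 100r\lambda/c$.
\end{enumerate}
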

Recall that $m_V=\tilde O(m)$.
Hence, for fixed $\eps$, by choosing $m=\tilde\Theta(n^{1/2}/\lambda^{1/4})$, we see that \Cref{thm:private-evolution-clustering} only requires a minimum cluster size of roughly $\tilde\Omega(n^{\frac12} \lambda^{\frac14})$.
Thus, this theorem still holds even if the  cluster sizes are very unbalanced.

\myparagraph{GAPE}
We present our Geometry-Aware PE variant, \GAPE\, in \Cref{alg:private-evolution-clustering} and the specific local search cost in \Cref{def:local-search-cost}. %
The main change from Algorithm~\ref{alg:private-evolution-worst-case} is that we use a selection mechanism that approximately minimizes a geometry aware cost function. This cost function is similar to k-median, where the goal is to choose a synthetic dataset whose empirical distribution is close in $W_1$ distance to the distribution induced by the nearest neighbor histogram. 
We adapt the k-median cost slightly to handle the negative weights that arise in the \emph{noisy} histogram.

Define a truncated distance function
    $c(x, y)\coloneqq \min\set{\rho(x, y), \nicefrac{R}3}$.

\begin{definition}[Local-Search Objective]\label{def:local-search-cost}
Given a dataset $D\sset \R^d$ with signed weight vector $\tilde H = \{w_d\}_{d\in D}$, and points $z_\ell\in \R^d$ for $\ell\in [m]$, define the following minimum cost directed flow problem with respect to this truncated distance
    \begin{align*}
        \LSCost(D, \tilde H, z) \coloneqq &
        \min_{f} \sum_{x\in D^+} \sum_{y\in D^-} c(x, y) f(x, y) + \sum_{\ell=1}^{m} \sum_{x\in D^+} c(x, z_\ell) f(x, z_{\ell}) \\
        \sum_{y\in D^-} f(x, y) + \sum_\ell f(x, z_{\ell}) &= w_x \qquad \forall x\in D^+\\
        \sum_{x\in D^+} f(x, y) &= -w_y \qquad \forall y\in D^- \\
        f &\geq 0
    \end{align*}
    where $D^+$ denotes candidate points with positive weight $w_x$, and $D^-$ denotes candidate points with negative weight $w_y$.
    For fewer than $m$ selected points, the sums over $\ell$ range over those points.
\end{definition}

We wish to select at most $m=m_S$ candidate points as the synthetic dataset in the next iteration. Our goal is to select a set of points $S^{(t)}\subseteq\tilde H_\tau^{(t)}$ that approximately minimizes $\LSCost(V^{(t)}, \tilde H^{(t)}, \cdot)$, where $\tilde H^{(t)}_{\tau}$ is the set of candidates whose noisy count exceeds $\tau$. Thresholding allows us to ensure that we are unlikely to select a data point that is not close to any of the private data.
To do this efficiently we use local search by repeatedly  swapping selected and unselected candidate points 
that decrease the local search cost (\Cref{def:local-search-cost}).
See \Cref{alg:local-search} for exact pseudocode.

\SetKwFunction{LocalSearch}{LocalSearch}
\SetKwFunction{gape}{GAPE}
\SetKwData{solution}{solution}
\begin{algorithm}[htb]
    \caption{Geometrically Aware Private Evolution\label{alg:private-evolution-clustering}}
    \Fn{\gape{$D$, $T$, $\eps$, $\delta$, $m$, $R$, $r$, $\lambda$, $c$, $\beta$}}{
        \BlankLine
        $\sigma \gets \frac{2\sqrt{T \log(\nfrac{1.25}\delta)}}{\eps}$\;
        \BlankLine
$\ell^\star\gets \lceil\log_2(200\lambda/(cr^2))\rceil$\;
        \Comment{set $2^{-\ell^\star}$ to be the smallest $\VarAPI$ scale}
        $\VarAPI(\cdot) \gets \set{\cdot}\cup \bigcup_{\ell=\lceil-\log_2(R^2)\rceil}^{\ell^\star} \VarAPI(\cdot, 2^{-\ell})^{\otimes \lceil4\log(3Tn/\beta)\rceil}$\;
        \Comment{Generate independent variations at each scale}
        $m_V\gets m\inparen{1+\inparen{\ell^\star-\lceil-\log_2(R^2)\rceil+1}\lceil4\log(3Tn/\beta)\rceil}$\;
        $\tau\gets \sigma\sqrt{2\log(\nicefrac{6Tm_V}\beta)}$\;
        \Comment{threshold value for filtering candidates}
        $S^{(0)} \gets \RanAPI(m)$\;
        \For{$t=1, \dots, T$}{
            $V^{(t)} \gets \bigcup_{y\in S^{(t-1)}} \VarAPI(y)$\;
            $H^{(t)} \gets \NNhist(D, V^{(t)}, \rho)$\;
            $\tilde H^{(t)}\gets H^{(t)} + \mcal N(0, \sigma^2 I_{\card{V^{(t)}}})$\;
            $\tilde H_\tau^{(t)} \gets \set{v\in V^{(t)}: \tilde H^{(t)}(v) > \tau}$\;
            \Comment{filter candidates for local search}
            $S^{(t)}\gets$ \LocalSearch{$V^{(t)}$, $\tilde H^{(t)}$, $\tilde H_\tau^{(t)}$, $m$}\;
            \Comment{local search (\Cref{alg:local-search})}
        }
        \Return $S^{(T)}$\;
	}

\end{algorithm}

\begin{algorithm}[htb]
    \caption{Local Search\label{alg:local-search}}
    \Fn{\LocalSearch{$V$, $\tilde H$, $\tilde H_\tau$, $m$}}{
        \lIf{$\tilde H_\tau=\varnothing$ or $\sum_{v\in V}\tilde H(v)<0$}{\Return $\{v\}$ for any $v\in V$}
        $\solution \gets$ any $\min\{m,\card{\tilde H_\tau}\}$ elements of $\tilde H_\tau$\;
        \While{true}{
            \For{$z\in \solution$ and $x\in \tilde H_\tau\setminus\solution$}{
                \If{$\LSCost(V, \tilde H, \solution\cup\set{x}\setminus\set{z}) < \LSCost(V, \tilde H, \solution)$} {
                    $\solution \gets \solution\cup\set{x}\setminus\set{z}$ \Comment{cost-decreasing swap (\Cref{def:local-search-cost})}
                    \textbf{restart the while loop}\;
                }
            }
            \Return \solution\;
        }
	}

\end{algorithm}

\myparagraph{Analysis.}
The proof of \Cref{thm:private-evolution-clustering} is detailed in \Cref{apx:private-evolution-clustering}, and we describe the proof idea below.

For a synthetic data point $s\in S^{(t)}$ within distance $\nicefrac{R}8$ of some cluster $C$, it can be shown that the generated variations $V^{(t+1)}$ contain, with high probability, a point which makes progress towards $C$.
Moreover, the support of the noiseless histogram $\supp H$, that is, the subset of $V^{(t+1)}$ with at least 1 vote, only contains points that have progressed towards some cluster.
Thus, as long as the selection mechanism chooses a point from $\supp H$ within distance $\nicefrac{R}8$ of each cluster, \GAPE\ will consistently make progress towards all clusters down to distance $O(r\lambda)$.

By restricting the local search to candidates with sufficiently high noisy votes, we can ensure that such candidates are contained in $\supp H$.
In addition, if a cluster $C$ is sufficiently large, there will be at least one point with a sufficiently high noisy vote to survive the thresholding.
It remains only to ensure that the local search terminates with a point within distance $\nicefrac{R}8$ of each cluster. We proceed to show that local search must place a point near every cluster, since otherwise swapping the center with the least incoming flow for a candidate near an uncovered cluster would decrease the cost.

\subsection{Experiments}\label{sec:experiments}
We report the various embedding distribution distances between real and synthetic data on various datasets. These experiments demonstrate that the improvements of \GAPE\ translate to real datasets.

\textbf{Datasets.} 
Following previous work~\cite{xie2024text}, we use PubMed\footnote{\url{https://www.ncbi.nlm.nih.gov}}, a dataset of $\approx75000$ abstracts of medical papers crawled by \citet{yu2023training}.
We also use Reddit\footnote{\url{https://huggingface.co/datasets/roskoN/dstc8-reddit-corpus}}, a larger dataset of $\approx5$ million Reddit dialogues curated by \citet{lee2019multi}. 

\textbf{Generators \& Semantic Embeddings.} We use
GPT-2~\cite{radford2019language} for both $\RanAPI$ and $\VarAPI$. To provide a fair comparison, for PubMed, we keep the instructional prompts and LLM temperature (1.0) the same as in~\cite{xie2024text}. 
Modified instructional prompts for Reddit can be found in the appendix. 
We also increase the temperature to 1.7.
For the embedding model, we use Sentence Transformers~\citep{reimers2019sentence} (\texttt{sentence-t5-base}, 768-dimensions). 

\textbf{Baselines.}
We compare against private evolution with subsampling~\cite{lin2024images} (\subsamplePE), and ranking~\cite{xie2024text} (\augPE).
For a fair comparison, we use the original implementation\footnote{\url{https://github.com/microsoft/DPSDA}} of private evolution for baselines and implement our new selection mechanism.

\ifnum\tpdp=0
\textbf{Setup.}
Experiments were completed using 8 Nvidia H100 GPUs (80 GB memory each). PubMed experiments ran for 3-4 hours each and Reddit experiments ran for between 3 and 8 hours each.
See \Cref{apx:experimental-details} for more setup details for our experiments.
\fi

\textbf{Results.}
Figures~\ref{fig:pubmed} and~\ref{fig:reddit} compare \GAPE\ with $\augPE$ and $\subsamplePE$ on the PubMed and Reddit datasets. Our results highlight that, as expected, $\subsamplePE$ has the worst recall. \GAPE\ is competitive with $\augPE$ in both W1-distance\footnote{For efficiency, the Sinkhorn W1 was computed instead of the true W1-distance.} and recall. For large $\epsilon$, \GAPE\ has higher recall than $\augPE$. The quality of synthetic data can be measured in a variety of ways.
We include a broader set of comparison metrics in Appendix~\ref{furtherresults}.

\begin{figure}[tbp]
    \centering
    \begin{subfigure}{0.49\linewidth}
        \centering
        \includegraphics[width=\linewidth]{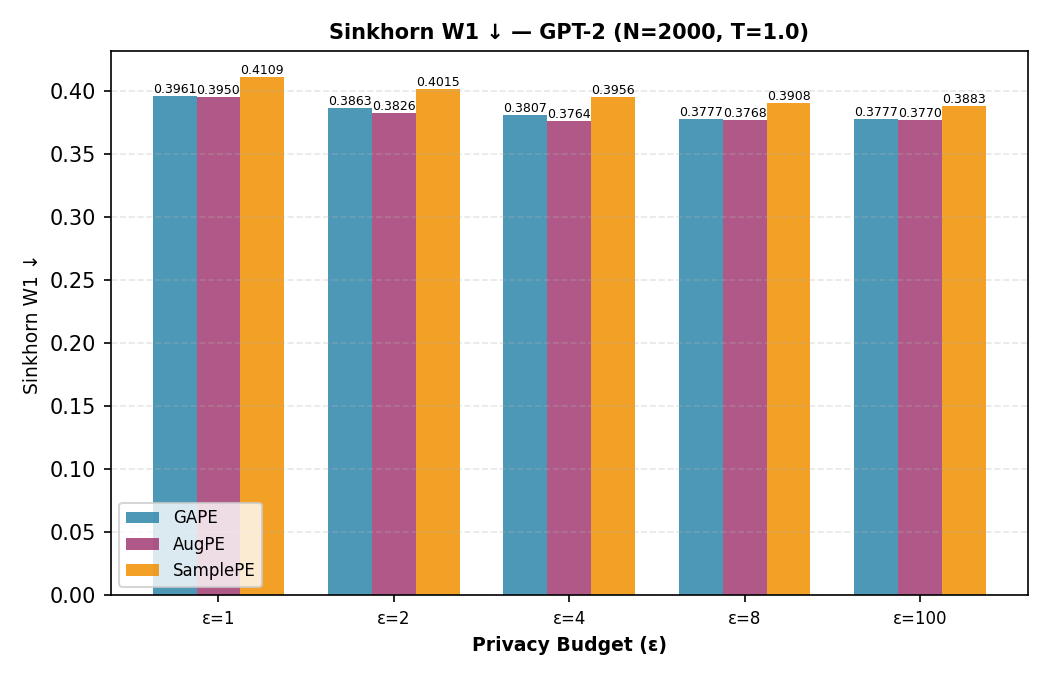}
    \end{subfigure}
    \hfill
    \begin{subfigure}{0.49\linewidth}
        \centering
        \includegraphics[width=\linewidth]{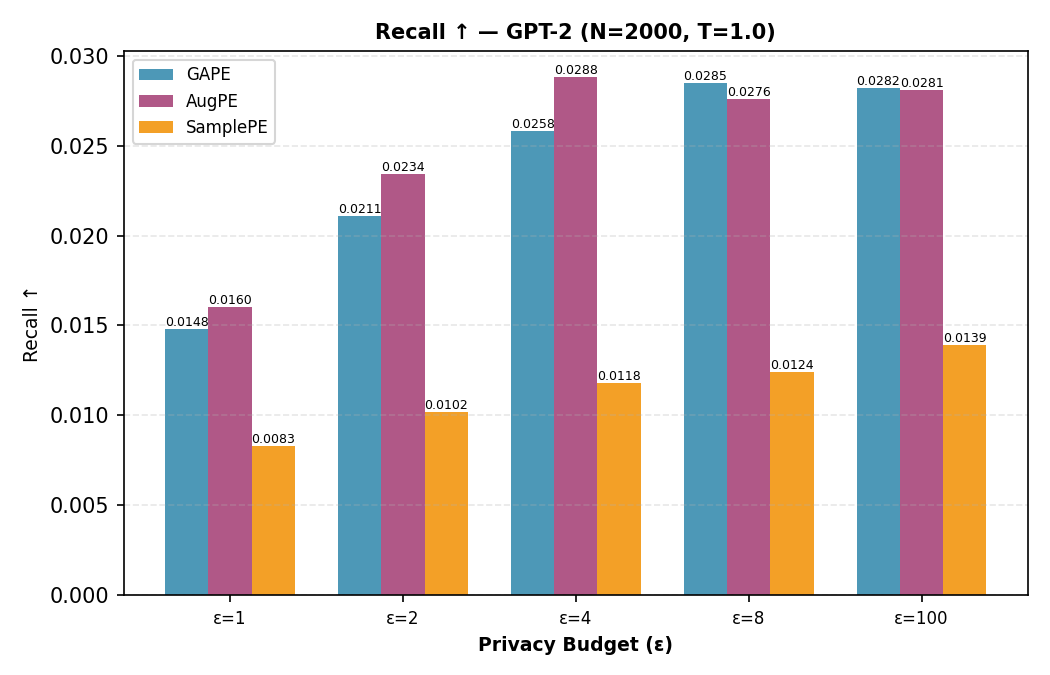}
    \end{subfigure}
    \caption{PubMed ($m=2000$): Sinkhorn $W_1$ distance (left) and recall (right).}
    \label{fig:pubmed}
\end{figure}

\begin{figure}[tbp]
    \centering
    \begin{subfigure}{0.49\linewidth}
        \centering
        \includegraphics[width=\linewidth]{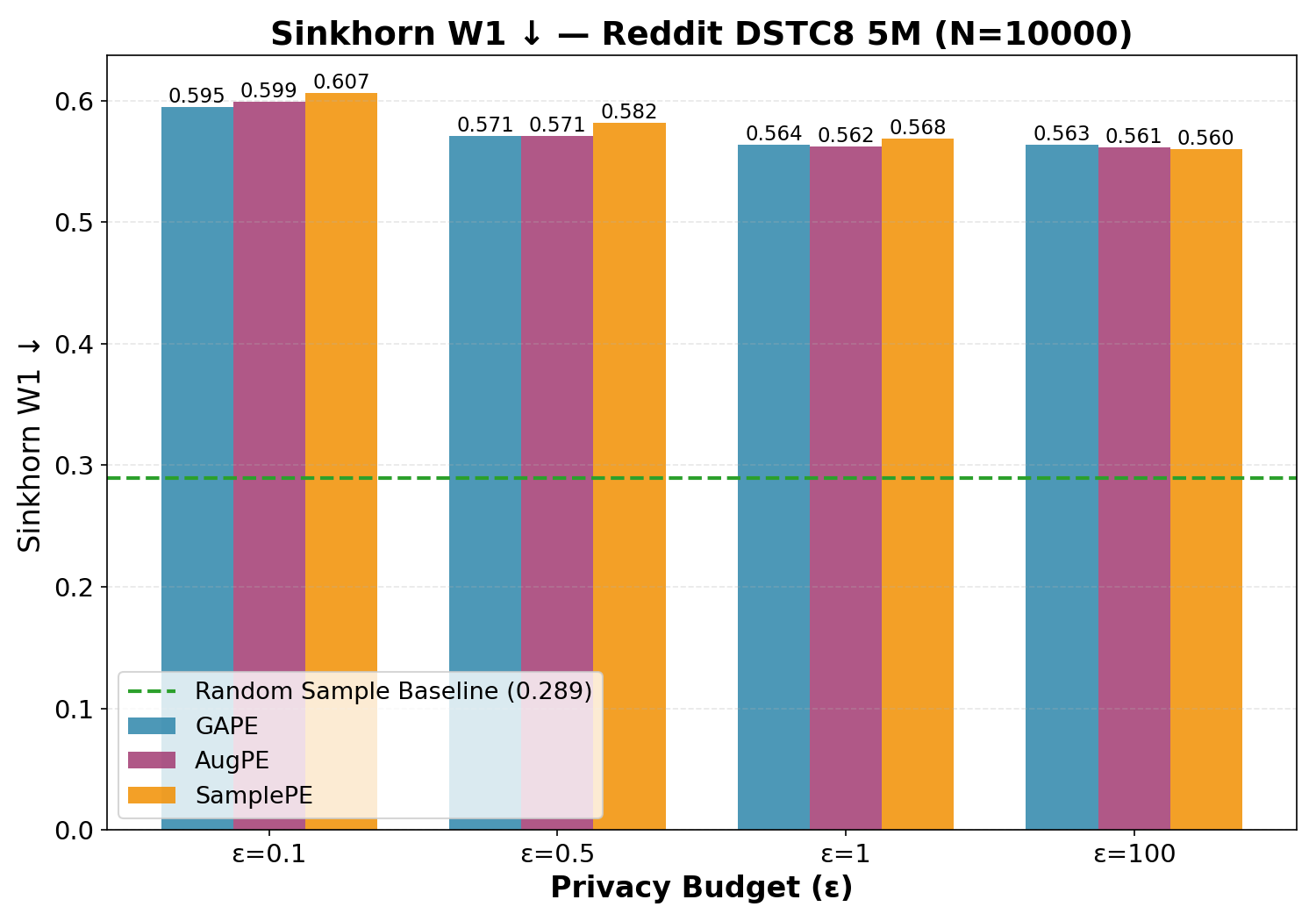}
    \end{subfigure}
    \hfill
    \begin{subfigure}{0.49\linewidth}
        \centering
        \includegraphics[width=\linewidth]{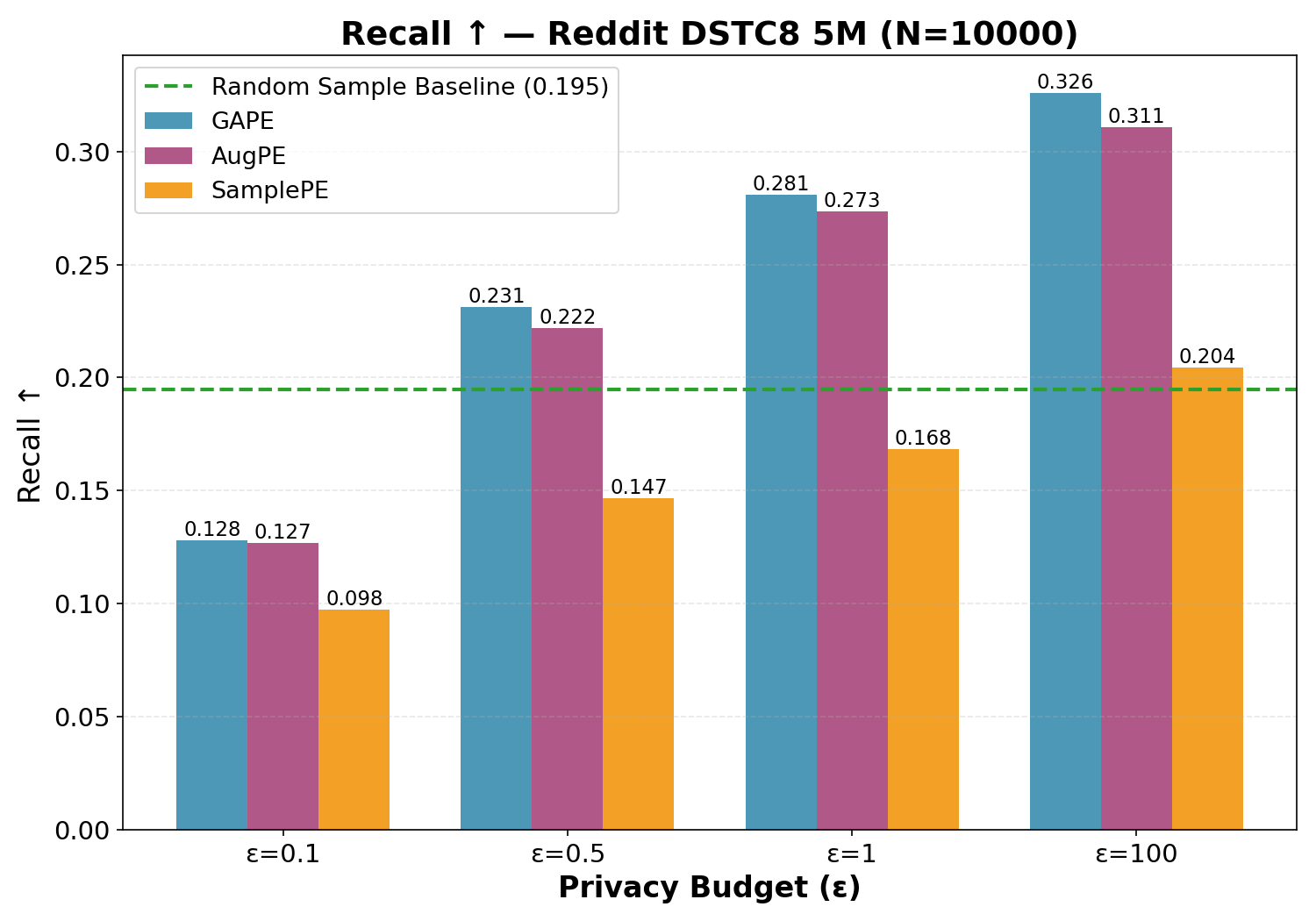}
    \end{subfigure}
    \caption{Reddit ($m=10{,}000$): Sinkhorn $W_1$ distance (left) and recall (right).}
    \label{fig:reddit}
\end{figure}

\printbibliography

\medskip

\appendix
\section{Differential Privacy Preliminaries}\label{DPprelims}
Differential privacy is a stability notion for randomized algorithms. Intuitively, it protects user's data by ensuring that the output of the algorithm does not depend too strongly on any single individual's data. 
As in \citet{xie2024text}, we will focus on approximate differential privacy. Two datasets of the same size are said to be neighboring if they differ in the data of a single individual.

\begin{definition}[$(\eps, \delta)$-Differential Privacy]\label{def:DP}
    An algorithm $\mcal A$ that takes as input an $n$-element dataset $D$ is said to be $(\eps, \delta)$-DP if for all neighboring datasets $D\sim D'$, and all events of the output space $R\sset \range(\mcal A)$,
    \[\Pr[\mcal A(D)\in R]\leq e^{\eps}\cdot \Pr[\mcal A(D')\in R] + \delta.\]
\end{definition}

Adding carefully calibrated Gaussian noise to the output of a function is a common method for achieving differential privacy.
As in other works on PE, the Gaussian mechanism is a subroutine in our implementation of PE.\footnote{As in ~\cite{xie2024text} we will use the analytic Gaussian mechanism~\cite{DBLP:conf/icml/BalleW18} in our experiments. It supports all $\eps>0$ and calibrates the noise directly to the desired privacy parameters.}

\begin{proposition}[{\citet[Gaussian Mechanism]{dwork2014algorithmic}}]\label{prop:gaussian-mechanism}
    Let $f_1, \dots, f_k$ be real-valued queries and define $\Delta_2$ to be the $\ell_2$ sensitivity over neighboring datasets: \[\Delta_2 = \max_{D,D'}\|(f_i(D))_{i\in[k]}-(f_i(D'))_{i\in[k]}\|_2,\] where the max is taken over all pairs of neighboring datasets. 
    If $\eps,\delta\in(0,1)$, and $\sigma\geq \frac{\Delta_2\sqrt{2\ln(\nfrac{1.25}\delta)}}{\eps}$, then the Gaussian mechanism $(f_i(D))_{i\in[k]}+\mcal N(0,\sigma^2 I_k)$ is $(\eps, \delta)$-DP.
\end{proposition}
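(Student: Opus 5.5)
This is the standard Gaussian mechanism (Dwork–Roth, Theorem A.1); the plan is to reproduce the usual privacy-loss argument. Let $v=(f_i(D))_i-(f_i(D'))_i$, so $\norm{v}_2\le\Delta_2$. Let $P=\mcal N(0,\sigma^2 I_k)$ and $P'=\mcal N(v,\sigma^2 I_k)$; these are, up to a common shift, the output laws on $D$ and $D'$.

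\textbf{Reduce to one dimension.} The privacy loss at an output $z$ is
\[
L(z)=\ln\frac{p(z)}{p'(z)}=\frac{\norm{z-v}_2^2-\norm{z}_2^2}{2\sigma^2}.
\]
Rotational invariance lets us take $v=\norm{v}e_1$, so under $P$ the loss depends only on the first coordinate $Z_1\sim\mcal N(0,\sigma^2)$:
\[
L=\frac{\norm{v}^2-2\norm{v}Z_1}{2\sigma^2}.
\]
Hence the loss has the law $\mcal N(\mu,2\mu)$ with $\mu=\norm{v}^2/(2\sigma^2)\le\Delta_2^2/(2\sigma^2)$.

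\textbf{Split on a bad event.} Let $B=\{z: L(z)>\eps\}$. For any event $R$,
\[
P(R)\le P(R\setminus B)+P(B)\le e^{\eps}P'(R\setminus B)+P(B)\le e^{\eps}P'(R)+P(B).
\]
The middle inequality holds because $p\le e^{\eps}p'$ pointwise off $B$. It therefore suffices to show $P(B)\le\delta$. Bad outputs have $|Z_1|$ large, so we need
\[
\Pr\left[Z_1<\frac{\norm v}{2}-\frac{\sigma^2\eps}{\norm v}\right]\le\delta.
\]
By symmetry this equals
\[
\Pr\left[\mcal N(0,1)>\frac{\sigma\eps}{\norm v}-\frac{\norm v}{2\sigma}\right].
\]

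\textbf{Main obstacle: the tail calculation.} The bound is monotone in $\norm v$, so it suffices to take $\norm v=\Delta_2$. Write $\sigma=c\Delta_2/\eps$ with $c\ge\sqrt{2\ln(1.25/\delta)}$. The threshold becomes
\[
t=c-\frac{\eps}{2c}.
\]
Apply the Gaussian tail bound $\Pr[\mcal N(0,1)>t]\le \frac{1}{t\sqrt{2\pi}}e^{-t^2/2}$. Expanding, $t^2\ge c^2-\eps$. Then use $\eps<1$ and $c^2\ge 2\ln(1.25/\delta)$; together with $\delta<1$ these give $c>\sqrt{2\ln 1.25}$, so $t$ is bounded below by a constant. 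The tail is then at most
\[
\frac{e^{\eps/2}}{t\sqrt{2\pi}}\cdot\frac{\delta}{1.25}.
\]
The last step, and the fiddly one, is checking that the prefactor $\frac{e^{\eps/2}}{1.25\,t\sqrt{2\pi}}$ is at most $1$. This is where the constant $1.25$ and the restriction $\eps<1$ are used. If the small-$\delta$ regime is tight, I would follow Dwork–Roth's explicit inequality $c^2>2\ln(\sqrt{2/\pi})+2\ln(1/\delta)+\ln(e^{(\ldots)})$, which holds since $c^2\ge2\ln(1.25/\delta)$.

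Alternatively, since the statement is cited verbatim from \citet{dwork2014algorithmic}, one may invoke their Theorem A.1 directly.
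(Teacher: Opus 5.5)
The paper gives no proof of its own; it only cites \citet{dwork2014algorithmic}. Your fallback of invoking their Theorem A.1 therefore matches the paper, and your reconstruction follows the same privacy-loss argument. Everything is correct up to the bound $\Pr[\mcal N(0,1)>t]\le\frac{e^{\eps/2}}{t\sqrt{2\pi}}\cdot\frac{\delta}{1.25}$ with $t=c-\eps/(2c)$. The final step, however, fails on part of the stated range.

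\textbf{Where it fails.} From $c>\sqrt{2\ln 1.25}\approx0.668$ you cannot conclude that $t$ is bounded below by a positive constant. The term $\eps/(2c)$ can be nearly $0.75$, so $t$ can be nonpositive. For example, with $\eps=\delta=0.99$ and minimal $\sigma$, one gets $c\approx0.683$ and $t\approx-0.04$, and there the Mills-ratio bound does not even apply.

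Even when $t>0$, the prefactor $\frac{e^{\eps/2}}{1.25\,t\sqrt{2\pi}}$ is at most $1$ only if $t\ge e^{\eps/2}/(1.25\sqrt{2\pi})$. For $\eps$ near $1$ this threshold is about $0.53$, which requires $c\gtrsim1.02$, i.e.\ $\delta\lesssim0.74$. At $\delta=0.9$, $\eps=0.99$ the prefactor is about $2.6$.

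So the delicate regime is large $\delta$, not small $\delta$ as you suggest. Dwork--Roth's explicit inequality does not cover it either: their derivation assumes $c\ge 3/2$ (to make $\ln(c-\eps/(2c))\ge0$), which only handles $\delta\le 1.25e^{-9/8}\approx0.41$.

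\textbf{The fix is a case split.}
\begin{itemize}
\item If $\delta\le0.74$, then $c\ge\sqrt{2\ln(1.25/0.74)}>1.02$. Hence $t>c-\frac1{2c}>0.53>e^{1/2}/(1.25\sqrt{2\pi})$, and your prefactor bound goes through.
\item If $\delta>0.74$, use only that $\eps<1$, that $c\mapsto c-\frac1{2c}$ is increasing, and that $c>\sqrt{2\ln 1.25}$. Together these give $t>-0.09$, hence $P(B)\le\Pr[\mcal N(0,1)>-0.09]<0.54<\delta$.
\end{itemize}
With this addition your argument proves the proposition for all $\eps,\delta\in(0,1)$.
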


A key property of differentially private algorithms is that the adaptive composition of a series of differentially private algorithms is differentially private. 
There are several ways to compute the privacy budget of the composed mechanism. 
As in other works on PE, the adaptive composition theorem of Gaussian mechanisms~\cite{dong2022gaussian} is used to compute the privacy budget of the composed mechanism in this work.
For $T$ adaptive Gaussian histogram releases, the per-round $\ell_2$ sensitivity is at most $\sqrt2$ for counts and $\sqrt2/n$ for normalized histograms, so by the composition theorem the $T$ releases satisfy the same privacy bound as a Gaussian mechanism with $\ell_2$ sensitivity $\sqrt{2T}$ (resp.\ $\sqrt{2T}/n$). Thus, for $\eps,\delta\in(0,1)$, the noise scales $\sigma=2\sqrt{T\log(\nfrac{1.25}\delta)}/\eps$ (\Cref{alg:private-evolution-template,alg:private-evolution-clustering}) and $\sigma=2\sqrt{T\log(\nfrac{1.25}\delta)}/(n\eps)$ (\Cref{alg:private-evolution-worst-case}) give $(\eps,\delta)$-DP by \Cref{prop:gaussian-mechanism}. The remaining operations are post-processing.

\section{Related Works}\label{relatedworks}

\myparagraph{DP Synthetic Data.}
The problem of private synthetic data generation has a long history, with early theoretical results showing that there are unlikely to be efficient private algorithms to generate synthetic data that accurately reflect simple statistics of the real data~\cite{DworkNRRV09,UllmanV20}. A long line of work showed oracle-efficient algorithms for such synthetic data generation tasks, under a variety of settings~\cite{BlumLR08,RothR10,HardtR10, HardtT10, GuptaRU12, HardtLM12,NikolovTZ13,GaboardiAHRW14,DworkNT14, MckennaSM19, MckennaPSM21, AydoreBKMRS21,MMSM22}. \citet{HeVZ23} phrased the problem as Wasserstein learning, and \citet{feldman2024instance} showed instance-optimality results for this formulation. ~\citet{LiuVSUW21, LiuVW21} first showed empirically that auxiliary ``public'' data can make synthetic data generation easier.
\ifnum\tpdp=0
On the practical side, there has been a flurry of activity bringing various tools to the task of private synthetic data generation~\citep{zhou2025private,Li2024privimage,hou2024PrEText,ghalebikesabi2023diffusion,harder2023pretrained,yue2023simple,kurakin2023harnessing,torfi2022medical,mattern2022secure,rosenblatt2020differentially,torkzadehmahani2019DPCGAN,abay2018privacy}. Some of these tools~\cite{kurakin-blogpost} have been used at large scale in practice to generate training data.
\fi
We refer the reader to \citep{hu2024SoK,chen2024unified} and references therein for a survey of recent developments.

\myparagraph{Private Evolution.} 
\ifnum\tpdp=1
The PE framework begins with a random dataset obtained
from a RandomAPI. It then iteratively improves the synthetic dataset, moving it closer to the private data, over T
rounds. At each iteration, it generates variations using the
VariationAPI%
Each true data point votes for the variation
most like it (its nearest variation in the embedding space) to
generate the nearest neighbor histogram. Gaussian noise is
added to achieve differential privacy. Finally, a Selection
mechanism is applied to the noisy histogram to choose the
next iteration of synthetic data.
\fi
\citet{lin2024images} designed the first iteration of private evolution for images and initiated the theoretical study of private evolution as a distribution learning algorithm under the Wasserstein metric, albeit under strong data assumptions and a data-oblivious Gaussian $\VarAPI$.
Their selection algorithm is based on subsampling.

\citet{xie2024text} adapted the private evolution framework for text and proposed a new ranking-based selection algorithm, which selects the candidates with the most noisy votes.
\ifnum\tpdp=1
\citet{gonzalez2025private} designed a variant of private evolution that is more amenable to theoretical analysis and provided worst-case $W_1$-distance guarantees under the same conditions about $\VarAPI$.
\else
\citet{gonzalez2025private} designed a variant of private evolution that is more amenable to theoretical analysis and provided worst-case $W_1$-distance guarantees under the same conditions about $\VarAPI$ as \citet{lin2024images}, but with much weaker data assumptions.
The practical success of private evolution has led to a flurry of research ~\citep{wang2025struct,gong25dpimagebench,wang2025synthesize,gonzalez2025private,zhang2025pcevolve,hou2025popri,swanberg2025api,lin2025differentially,Zou2025contrastive,hou2024PrEText,lin2024images,xie2024text}, and active industry interest~\citep{Apple2025,Microsoft2024}.
\fi

\section{Notation}\label{notation}
Throughout this work, we use the following notation.
\begin{itemize}[noitemsep,topsep=0em]
    \item $\Omega\sset\R^d$ is the data and generator domain.
    \item $n$ is the number of sensitive data points.
    \item $D$ denotes the sensitive dataset.
    \item $S$ denotes the synthetic dataset.
    \item $V\coloneqq \VarAPI(S)$ denotes the set of all variations generated from $S$.
    \item $m = m_S$ is the number of desired synthetic data points.
    \item $m_V$ is the public upper bound on the number of points generated by the $\VarAPI$ per iteration of PE.
    \item $d$ is the ambient (embedding) dimension.
    \item $\rho(x,y)=\norm{x-y}_2$ is the Euclidean distance.
    \item For a nonempty finite set $A$,
    $\rho(x,A)\coloneqq\min_{a\in A}\rho(x,a)$.
    \item $T$ is the number of iterations of PE.
    \item $\eps$ is the privacy loss parameter.
    \item $\delta$ is the additive privacy parameter.
    \item $\beta$ is the failure probability.
    \item $\sigma$ is the standard deviation of the
    Gaussian noise added to the nearest neighbor histogram.
    \item $H^{(t)}$ is the nearest neighbor histogram on $V^{(t)}$.
    \item $\tilde H^{(t)}$ is the noisy version of $H^{(t)}$.
    \item $\tilde H_\tau^{(t)}$ is the set of variations
    whose noisy count exceeds $\tau$.
    \item $\tau$ is the threshold for noisy histogram counts.
    \item $\RanAPI$ generates unconditional samples.
    \item $\VarAPI(y,s^2)$ generates variations of $y$ at scale $s^2$.
    \item $\lambda$ is the effective rank parameter.
    \item $c$ is the anticoncentration constant.
    \item $k$ is the doubling dimension parameter.
    \item $\alpha$ is the additive error of the per-iteration
    contraction in $W_1$-distance.
    \item $\kappa$ is the number of clusters.
    \item $C_i$ denotes the $i$th cluster.
    \item $r$ is the cluster diameter parameter.
    \item $R$ is the cluster separation parameter.
    \item $W_1$ denotes the 1-Wasserstein distance.
    \item $\BL$ denotes the bounded-Lipschitz distance.
    \item $\mu_D$ denotes the empirical distribution of the sensitive dataset.
    \item $\mu_S^{(t)}$ is the empirical distribution of the synthetic dataset after iteration $t$.
    \item $\mu_V$ is the distribution over the nearest neighbor histogram where each variation is assigned probability mass proportional to its vote count.
    \item $\tilde\mu_V$ is the signed measure obtained by adding Gaussian noise $\mcal N(0, \sigma^2 I_{\card{V}})$ to $\mu_V$.
    \item $\bar \mu_{V}$ is obtained by projecting $\tilde \mu_V$ onto probability measures in the bounded-Lipschitz distance.
\end{itemize}

\section{Intrinsic Dimensionality}\label{apx:intrinsic-dimensionality}
The usefulness of our work relies on the underlying assumption that for real datasets the effective dimension is significantly lower than the ambient dimension. In this section, we show that this is indeed the case for the PubMed dataset used in our experiments. Following the work of \citet{lin2024images}, we estimate the intrinsic dimension of text embeddings with the following process:
\begin{enumerate}[1),noitemsep,topsep=0em]
    \item Sample a PubMed text $x$. Compute its embedding vector $g$ with \texttt{sentence-t5-base}.
    \item We use GPT2 as the $\VarAPI$ to generate 3000 text variations of $x$ and compute their respective embeddings $g_1, \dots, g_{3000}$.
    \item We construct a matrix $M = [g_i - g]_{i\in [3000]}\in \R^{3000\times 768}$.
    \item We compute the singular values of $M: \sigma_1\geq \sigma_2\geq \dots\geq \sigma_{768}$
    \item We compute the minimum number of singular values $k$ so that the explained variance ratio satisfies
    \[
        \frac{\sum_{i=1}^k \sigma_i^2}{\sum_{i=1}^{768} \sigma_i^2} \geq 0.8\,.
    \]
\end{enumerate}

Intuitively, this captures the number of dimensions needed to accurately reconstruct the embedding differences $M$ with small error.
    We use it as an estimated intrinsic dimension of the text variations.

We repeat the above at 3 points $x$ from PubMed and at various temperature parameters for GPT2 generation.
The minimum, median, and maximum are plotted in \Cref{fig:variation-api-intrinsic-dim}.
As one might expect, the intrinsic dimension grows with the temperature but, even for higher temperatures, is an order of magnitude (\eg{}, $20 \ll 768$) less than the ambient embedding dimension.

\begin{figure}[H]
    \centering
    \includegraphics[width=0.5\linewidth]{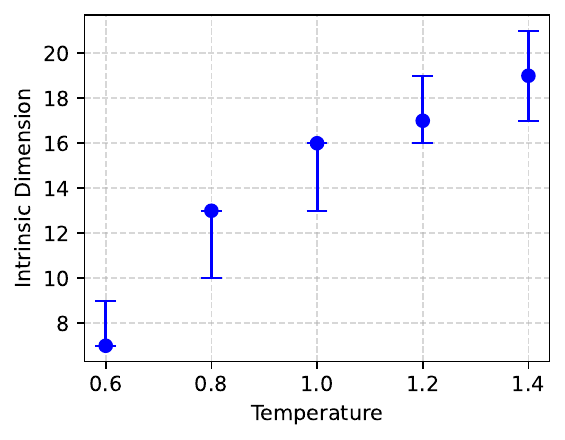}
    \caption{The estimated intrinsic dimension of GPT-2 as the $\VarAPI$ at different temperatures on random points in the PubMed dataset.}
    \label{fig:variation-api-intrinsic-dim}
\end{figure}

Next, we fix a single sample $x$ from PubMed and plot the singular value decay rate.
This is presented in \Cref{fig:variation-api-svd-decay} with the $x$-axis on a log-scale in order to distinguish the different temperatures.

\begin{figure}[H]
    \centering
    \includegraphics[width=0.5\linewidth]{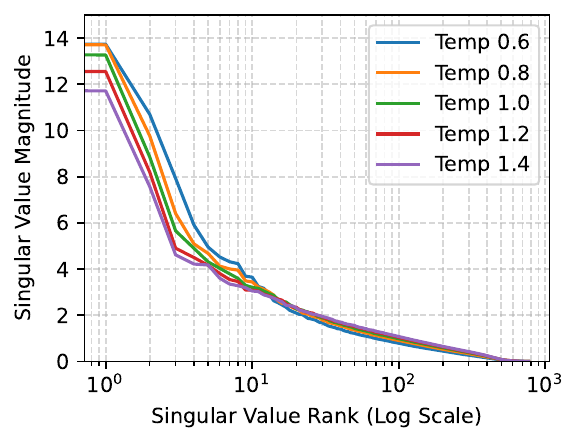}
    \caption{The singular value decay for the matrix of 3000 embedding difference vectors from a random entry in the PubMed dataset. Variations are generated by GPT-2 at various specified temperatures.}
    \label{fig:variation-api-svd-decay}
\end{figure}

To highlight that this phenomenon is not unique to GPT-2, we computed the intrinsic dimension for Mistral-7B-Instruct-v0.2 and Llama-3.2-1B. For Mistral-7B-Instruct-v0.2 (ambient dimension 768), the intrinsic dimension varied between 10 and 21 for temperature between 0.6 and 1.4. For Llama-3.2-1B (ambient dimension 768), the intrinsic dimension varied between 4 and 32 for temperature between 0.6 and 1.4. All substantially smaller than the ambient dimension of 768.

\section{Proof of \texorpdfstring{\Cref{thm:private-evolution-worst-case}}{Theorem}}\label{apx:private-evolution-worst-case}
We now prove \Cref{thm:private-evolution-worst-case},
which is restated below for convenience.
\thmPrivateEvolutionWorstCase*

\begin{example}\label{ex:reachability-2d}
    Consider a dataset consisting of a single point $x = (0, 1)^\top$, and initialize PE with a single synthetic point at the origin $y = (0, 0)^\top$.
    The generator $\VarAPI((y_1, y_2)^\top, s^2)$ is modeled by the diagonal Gaussian distribution $\mcal N\inparen{(y_1, y_2)^\top, \frac{s^2}{1+\gamma} \begin{bsmallmatrix} 1 & 0 \\ 0 & \gamma \end{bsmallmatrix}}$,
    where $\gamma\in [0, 1]$ is the amount of ``coverage'' of the generator in the second coordinate, with $\gamma = 0$ being no coverage.
    For $\gamma>0$, the effective rank (\Cref{def:reachable}) of the generator at the origin with respect to $x$ is
    \[
        \frac{\tr\inparen{\frac{s^2}{1+\gamma} \begin{bmatrix} 1 & 0 \\ 0 & \gamma \end{bmatrix}} \cdot \norm{x-y}_2^2}{(x-y)^\top \frac{s^2}{1+\gamma} \begin{bmatrix} 1 & 0 \\ 0 & \gamma \end{bmatrix} (x-y)}
        = \frac{s^2\cdot1}{s^2\gamma/(1+\gamma)}
        = \frac{1+\gamma}{\gamma}\,,
    \]
    so that $\lambda = \Theta(\nfrac1\gamma)$.
    If $\gamma = 0$, the generator only ever generates variations along the first coordinate, \ie{}, all variations have $0$ in their second coordinate.
    Hence no finite effective rank suffices, and PE cannot converge to $x$.
    This illustrates that without a compatibility condition on the generator, PE may not converge.
    On the other hand, for $\gamma\in (0, 1]$, the generator has variance $s^2\gamma/(1+\gamma)$ in the second coordinate. Intuitively, smaller variations in the second coordinate will require more steps to converge to $x$.
    This is reflected in the contraction bound of \Cref{lem:contraction-one-scale}, through $\lambda=\Theta(\nfrac1\gamma)$.
\end{example}

\subsection{Contraction via Variations}
In this subsection, we quantify the reduction in $W_1$-distance between the synthetic and sensitive datasets from applying the variation API in a single iteration.

\begin{lemma}\label{lem:contraction-one-scale}
    Fix $x, y\in \Omega$ and suppose
    \[
        s^2\in \left[ c\norm{x-y}_2^2/400\lambda, c\norm{x-y}_2^2/200\lambda \right].
    \]
    For $Z\sim\VarAPI(y,s^2)$, assume $\E[Z]=y$ and $\tr\Cov[Z]=s^2$.
    Suppose $x$ is reachable (\Cref{asmp:reachable}) from $\VarAPI(y, s^2)$ with effective rank $\lambda$ (\Cref{def:reachable}).
    Then with probability $\nicefrac14$ over the draw $z\sim \VarAPI(y, s^2)$,
    \[
        \norm{z-x}_2^2
        \leq \left( 1-\frac{c}{25\lambda} \right) \norm{x-y}_2^2\,.
    \]
\end{lemma}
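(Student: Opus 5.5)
Write $u \coloneqq x-y$ and $W \coloneqq Z-y$. The plan is to expand the squared distance and control its two random terms separately:
\[
    \norm{Z-x}_2^2 = \norm{u}_2^2 - 2\,u^\top W + \norm{W}_2^2\,.
\]
The contraction must come from the linear term $u^\top W$, which anticoncentration makes large with probability at least $\nicefrac13$. The quadratic term $\norm{W}_2^2$ must be shown to be small with high enough probability, via Markov's inequality. A union bound then leaves probability at least $\nicefrac13 - \text{(Markov failure)} \geq \nicefrac14$.

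\textbf{Linear term.} We have $\Var[u^\top Z] = u^\top \Cov[Z] u$. The effective-rank condition of \Cref{def:reachable} gives $u^\top\Cov[Z]u \geq s^2\norm{u}_2^2/\lambda$. The anticoncentration condition therefore yields, with probability at least $\nicefrac13$,
\[
    u^\top W \geq \sqrt{c\,s^2\norm{u}_2^2/\lambda} = \norm{u}_2\, s\sqrt{c/\lambda}\,.
\]
Using the lower bound $s^2\geq c\norm{u}_2^2/(400\lambda)$, i.e.\ $s\geq \norm{u}_2\sqrt{c/\lambda}/20$, this gives
\[
    2u^\top W \geq \frac{2c}{20\lambda}\norm{u}_2^2 = \frac{c}{10\lambda}\norm{u}_2^2\,.
\]

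\textbf{Quadratic term.} We have $\E\norm{W}_2^2 = \tr\Cov[Z] = s^2 \leq c\norm{u}_2^2/(200\lambda)$. By Markov's inequality, $\norm{W}_2^2 \leq 12 s^2 \leq \frac{12c}{200\lambda}\norm{u}_2^2 = \frac{3c}{50\lambda}\norm{u}_2^2$ holds with probability at least $1-\nicefrac1{12}$.

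\textbf{Combining.} On the intersection of the two events, which has probability at least $\nicefrac13-\nicefrac1{12}=\nicefrac14$,
\[
    \norm{Z-x}_2^2 \leq \inparen{1-\frac{c}{10\lambda}+\frac{3c}{50\lambda}}\norm{u}_2^2 = \inparen{1-\frac{2c}{50\lambda}}\norm{u}_2^2 = \inparen{1-\frac{c}{25\lambda}}\norm{u}_2^2\,,
\]
which is exactly the claimed bound.

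The main obstacle is making these constants line up. The window $[c/400\lambda,\, c/200\lambda]$ for $s^2$ has to make the gain from the linear term exceed the cost of the quadratic term by the required margin $c/(25\lambda)$, while the Markov slack must still leave probability $\nicefrac14$. The choices above (Markov threshold $12s^2$, the lower end of the window for the linear term, the upper end for the quadratic term) appear to work exactly.

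If the anticoncentration event is instead stated as $u^\top(Z-y)\geq \sqrt{c\Var}$ with a different normalization, I would retune the Markov threshold. Any threshold $a s^2$ with $a \leq 12$ keeps the probability at least $\nicefrac14$.
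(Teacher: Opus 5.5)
Your proof is correct and follows the paper's argument essentially line for line. It uses the same expansion, anticoncentration plus the effective-rank bound for the linear term (lower end of the $s^2$ window), Markov at threshold $12s^2$ for the quadratic term (upper end), and a union bound giving $\nicefrac13-\nicefrac1{12}=\nicefrac14$. One small slip is in your closing aside: Markov at threshold $a s^2$ can fail with probability up to $1/a$, so keeping probability $\nicefrac14$ needs $a\ge 12$ while the constants need $a\le 12$; hence $a=12$ is forced, and it is not true that any $a\le 12$ works.
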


\begin{pf}
    Write $z=y+\xi\sim \VarAPI(y, s^2)$ for some centered random variable $\xi$.
    We have
    \begin{align*}
        \norm{y+\xi-x}_2^2
        &= \norm{y-x}_2^2 - 2\iprod{\xi, x-y} + \norm{\xi}_2^2\,.
    \end{align*}
    By reachability and the moment assumptions,
    we know that with probability $\nicefrac13$,
    \[
        \iprod{\xi, x-y}
        \geq \sqrt{\frac{c\cdot \tr\Cov[\xi]}\lambda}\cdot \norm{x-y}_2
        = \sqrt{\frac{cs^2}\lambda}\cdot \norm{x-y}_2\,.
    \]
    On the other hand,
    we know that $\E[\norm{\xi}_2^2] = \tr\Cov[\xi] = s^2$.
    Hence, by a simple Markov inequality,
    $\norm{\xi}_2^2 \leq 12s^2$ with probability $1-\nicefrac1{12}$.

    Then, by a union bound, the following holds with probability $\nicefrac14$:
    \begin{align*}
        \norm{y+\xi-x}_2^2
        &\leq \norm{x-y}_2^2 - 2 \sqrt{\frac{cs^2}\lambda}\cdot \norm{x-y}_2 + 12s^2\,.
    \end{align*}
    For $s^2\in \left[ c\norm{x-y}_2^2/400\lambda, c\norm{x-y}_2^2/200\lambda \right]$, this expression simplifies to
        \begin{align*}
        \norm{y+\xi-x}_2^2
        &\leq \norm{x-y}_2^2 - \frac{2(\nicefrac1{20} - \nicefrac6{200}) c}{\lambda} \norm{x-y}_2^2 \\
        &= \inparen{1-\frac{c}{25\lambda}}\norm{x-y}_2^2\,. \qedhere
    \end{align*}
\end{pf}

\begin{corollary}\label{cor:expected-contraction-all-scale}
    Let $S\sset \Omega$ and fix $x\in\Omega$ and $y_x\in\argmin_{y\in S}\rho(x,y)$.
    If $x\notin S$, suppose that \Cref{asmp:reachable}
    holds for $x,y_x$ with parameters $\lambda,c$.
    Let $\ell^\star\coloneqq\left\lceil\log_2\inparen{\frac{200\lambda}{c\alpha^2}}\right\rceil$ and set $2^{-\ell^\star}$ to be the smallest scale of $\VarAPI$.
    If $V\sim \VarAPI(S)$ is the union of $S$ and a single variation from each scale, then
    \begin{align*}
        &\E_{V\sim \VarAPI(S)}\left[ \min_{v\in V}\norm{x-v}_2^2 \right] \leq \max\set*{ \alpha^2, \inparen{1-\frac{c}{100\lambda}}\norm{x-y_x}_2^2 }\,.
    \end{align*}
\end{corollary}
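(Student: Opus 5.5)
The plan is to split into cases according to the size of $\norm{x-y_x}_2$, pick the single scale matching that distance, and apply \Cref{lem:contraction-one-scale} there.

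\emph{Trivial case.} If $x\in S$, then $\min_{v\in V}\norm{x-v}_2^2=0$, because $S\sset V$ (the shorthand $\VarAPI(y)$ includes $y$ itself). The bound holds trivially.

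\emph{Small distance.} Assume $x\notin S$ and write $\Delta\coloneqq\norm{x-y_x}_2^2>0$. Since $y_x\in V$, we always have $\min_{v\in V}\norm{x-v}_2^2\le\Delta$ deterministically. If $\Delta\le\alpha^2$, the bound therefore holds immediately.

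\emph{Finding the matching scale.} Now assume $\Delta>\alpha^2$. The lemma needs a scale $s^2=2^{-\ell}$ in the dyadic window
\[
\Big[\tfrac{c\Delta}{400\lambda},\tfrac{c\Delta}{200\lambda}\Big].
\]
This window has ratio exactly $2$ between its endpoints, so it contains some power of two $2^{-\ell}$. We must check that $0\le\ell\le\ell^\star$.
\begin{itemize}
\item \emph{Upper end.} Since $\Omega$ has diameter $1$, $\Delta\le1$. Together with $c\le2$ and $\lambda\ge1$, this gives $c\Delta/(200\lambda)<1$, so $\ell\ge1\ge0$.
\item \emph{Lower end.} From $\Delta>\alpha^2$ we get $c\Delta/(400\lambda)>c\alpha^2/(400\lambda)$. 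We need a power of two in the window that is at least $2^{-\ell^\star}\le c\alpha^2/(200\lambda)$. The lower endpoint only exceeds $c\alpha^2/(400\lambda)$, so one might worry about an off-by-one. But $c\Delta/(200\lambda)>c\alpha^2/(200\lambda)\ge2^{-\ell^\star}$, so the upper endpoint exceeds $2^{-\ell^\star}$. Hence the largest power of two not exceeding the upper endpoint is at least $2^{-\ell^\star}$, and that power also lies in the window. This boundary check is the step I expect to need the most care.
\end{itemize}

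\emph{Applying the lemma.} Fix this scale and let $z$ be the single variation of $y_x$ drawn at it. By \Cref{asmp:reachable} for the pair $(x,y_x)$, \Cref{lem:contraction-one-scale} applies. So with probability at least $1/4$,
\[
\norm{x-z}_2^2\le\Big(1-\tfrac{c}{25\lambda}\Big)\Delta .
\]
On the complementary event we keep the fallback $\min_{v\in V}\norm{x-v}_2^2\le\Delta$, using $y_x\in V$. Taking the expectation,
\[
\E\Big[\min_{v\in V}\norm{x-v}_2^2\Big]\le\tfrac14\Big(1-\tfrac{c}{25\lambda}\Big)\Delta+\tfrac34\Delta=\Big(1-\tfrac{c}{100\lambda}\Big)\Delta .
\]
This is at most the claimed maximum, which completes the case analysis. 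Only the single variation at the matching scale is used; variations at all other scales can only decrease the minimum.
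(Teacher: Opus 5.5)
Your proposal is correct and follows essentially the same route as the paper. You use $y_x\in V$ as a deterministic fallback, dispose of the case $\norm{x-y_x}_2\le\alpha$, and otherwise apply \Cref{lem:contraction-one-scale} at the dyadic scale matching $\norm{x-y_x}_2^2$, averaging the probability-$\nicefrac14$ contraction against the fallback. You additionally check explicitly that this scale lies among $2^{0},\dots,2^{-\ell^\star}$ (using diameter one for the upper end and the choice of $\ell^\star$ for the lower end), and you carry out the computation $\tfrac14(1-\tfrac{c}{25\lambda})+\tfrac34=1-\tfrac{c}{100\lambda}$; the paper leaves both steps implicit.
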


\begin{proof}
    First we note that $y_x\in V$ so that $\min_{v\in V} \norm{x-v}_2^2\leq \norm{x-y_x}_2^2$.
    Thus if $\norm{x-y_x}_2 \leq \alpha$, there is nothing to prove.
    We proceed assuming $\norm{x-y_x}_2 > \alpha$.

    In this case, one of the scales falls into the interval $s^2\in \left[ c \norm{x-y_x}_2^2/400\lambda, c \norm{x-y_x}_2^2/200\lambda \right]$ required by \Cref{lem:contraction-one-scale}.
    The moment and reachability conditions at this scale
    follow from \Cref{def:variation-api,asmp:reachable}.
    Thus, with probability at least $\nfrac14$, the distance shrinks by $(1-\frac{c}{25\lambda})$.
    With the remaining probability, the distance remains the same.
    This concludes the proof.
\end{proof}

\begin{lemma}[Lemma E.2 in \citep{gonzalez2025private}]\label{lem:norm-to-W1-contraction}
    Fix $\gamma\in (0, 1)$.
    Let $D, S\sset \Omega$ be two datasets and $\mu_D, \mu_S$ be the empirical distributions over the respective datasets.
    Let $V = \VarAPI(S)\sset \Omega$ and $\mu_V$ be the empirical distribution induced by nearest neighborhood histogram of private evolution (\Cref{alg:private-evolution-template}).
    Suppose for $x\in D$,
    \[
        \E_{V\sim \VarAPI(S)}\left[ \min_{v\in V}\rho(x, v) \right]
        \leq \max\set*{ \alpha, (1-\gamma)\rho(x, y_x) }\,,
    \]
    where $y_x\in \argmin_{y\in S} \rho(y, x)$.
    Then
    \[
        \E_{V\sim \VarAPI(S)}\left[ W_1(\mu_V, \mu_D) \right]
        \leq (1-\gamma) W_1(\mu_S, \mu_D) + \alpha\,.
    \]
\end{lemma}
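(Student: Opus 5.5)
The plan is to construct an explicit coupling (transport plan) between $\mu_D$ and $\mu_V$ and bound its cost pointwise. By definition of the nearest neighbor histogram, $\mu_V$ is exactly the pushforward of $\mu_D$ under the map $\pi_V : x\mapsto v_x\in\argmin_{v\in V}\rho(x,v)$, with ties broken as in \Cref{alg:nearest-neighbors-histogram}. Each private point carries mass $\nicefrac1n$ and moves it to its nearest variation. This gives a valid coupling, so deterministically, for every realization of $V$,
\[
    W_1(\mu_V,\mu_D)\leq \frac1n\sum_{x\in D}\rho(x,\pi_V(x)) = \frac1n\sum_{x\in D}\min_{v\in V}\rho(x,v)\,.
\]

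Next I take expectation over $V\sim\VarAPI(S)$ and use linearity together with the hypothesis for each $x\in D$:
\[
    \E\insquare{W_1(\mu_V,\mu_D)}\leq \frac1n\sum_{x\in D}\max\set*{\alpha,(1-\gamma)\rho(x,y_x)}\leq \alpha+(1-\gamma)\frac1n\sum_{x\in D}\rho(x,S)\,,
\]
using $\max\{a,b\}\leq a+b$ for nonnegative reals and $\rho(x,y_x)=\rho(x,S)$.

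The remaining and main step is to compare $\frac1n\sum_{x}\rho(x,S)$ with $W_1(\mu_S,\mu_D)$. In general one only has $\frac1n\sum_x\rho(x,S)\leq W_1(\mu_S,\mu_D)$: any coupling $\pi$ of $\mu_D$ and $\mu_S$ sends the mass at $x$ to points of $S$, each at distance at least $\rho(x,S)$, so $\int\rho\,\dd\pi\geq\frac1n\sum_x\rho(x,S)$. Taking the infimum over couplings gives the inequality, which has exactly the direction we need. Combining the three displays then yields $\E[W_1(\mu_V,\mu_D)]\leq(1-\gamma)W_1(\mu_S,\mu_D)+\alpha$.

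I expect no real obstacle. The only subtle point is to check that the bound is applied in the right direction: $W_1(\mu_V,\mu_D)$ is bounded above by the nearest-neighbor cost, while $W_1(\mu_S,\mu_D)$ is bounded below by the nearest-neighbor cost, since $\mu_S$ has prescribed (uniform) weights whereas $\mu_V$'s weights are chosen by the votes.

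For use with \Cref{cor:expected-contraction-all-scale}, whose bound is stated for squared distances, I would translate via Jensen: $\E[\min_v\rho]\leq\sqrt{\E[\min_v\rho^2]}\leq\max\{\alpha,\sqrt{1-\nicefrac{c}{100\lambda}}\,\rho(x,y_x)\}$. Since $\sqrt{1-u}\leq1-\nicefrac u2$, this gives $\gamma=\nicefrac{c}{200\lambda}$. That is a remark on application rather than part of this lemma's proof.
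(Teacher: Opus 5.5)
Your proof is correct. Bounding $W_1(\mu_V,\mu_D)$ above via the voting map, of which $\mu_V$ is the pushforward of $\mu_D$, and bounding $W_1(\mu_S,\mu_D)$ below by $\frac1n\sum_{x\in D}\rho(x,S)$ is the standard argument behind the cited Lemma E.2 of \citet{gonzalez2025private}, which the paper invokes without reproducing; your closing Jensen remark also correctly supplies the step the paper leaves implicit when converting the squared-distance bound of \Cref{cor:expected-contraction-all-scale} into this lemma's hypothesis with $\gamma=\nicefrac{c}{200\lambda}$.
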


Using the fact that $\sqrt{1-t}\leq 1-\nfrac{t}2$ for $t\leq 1$,
\Cref{cor:expected-contraction-all-scale,lem:norm-to-W1-contraction} imply a $(1-\gamma)$ contraction in the $W_1$-distance for $\gamma = \Theta(\nfrac1\lambda)$ after a single private evolution iteration, up to an additive factor of $\alpha$.

\subsection{Private Histogram Error}
Next,
we bound the (dual) $W_1$-error from adding noise to obtain privacy
and the projection step.

Let $\mcal F\sset \R^\Omega$ be a real-valued function class over the domain $\Omega$.
Recall the following notions of the complexity of $\mcal F$.
\begin{definition}[Complexity of $\mcal F$]\label{def:function-complexity}
    Let $P$ be the standard Gaussian ($\mcal N(0, 1)$),
    standard Laplace ($\Lap(1)$),
    or Rademacher ($\Rad(\pm1)$) distribution.
    The $P_n$-complexity of $\mcal F$ is given by
    \[
        \sup_{z_1, \dots, z_n\in \Omega} \E_{\xi_1, \dots, \xi_n\sim_{i.i.d.} P} \left[ \sup_{f\in\mcal F}\frac1n \sum_{i\in [n]} \xi_i f(z_i) \right]\,.
    \]
\end{definition}
We are specifically interested in the following dual definition of the $W_1$ distance,
\[
    W_1(\mu, \nu) = \sup_{f\in \mcal F_{\BL}} \int_{\Omega} f(z) \mu(z) - f(z) \nu(z)
    \eqqcolon \BL(\mu, \nu)
\]
where $\mcal F_{\BL}$ is the collection of bounded $1$-Lipschitz functions over $\Omega$ defined in \Cref{sec:proofoutline}.
When $\mu = \mu_V$ is the distribution induced by the nearest neighbors histogram and $\nu = \mu_V + \mcal N(0, \sigma^2 I_{\card{V}})$ is the signed measure obtained by adding noisy to ensure privacy, the $W_1$-distance is no longer defined but the $\BL$ (bounded-Lipschitz) distance extends naturally to this setting.

\begin{lemma}[Lemma E.3 in \citep{gonzalez2025private}]\label{lem:noisy-histogram-error}
    Let $V\in \Omega^n$,
    $\mu\in \Delta_V$ be a discrete distribution supported on $V$,
    and $Z\sim \mcal N(0, \sigma^2 I)$.
    Write $\mcal G_n(\cdot)$ to denote the Gaussian complexity.
    Then
    \[
        \E_Z\insquare{\BL(\mu, \mu+Z)}
        \leq n\sigma \mcal G_n(\mcal F_{\BL})\,.
    \]
\end{lemma}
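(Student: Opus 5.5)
We need to prove $\E_Z[\BL(\mu,\mu+Z)]\le n\sigma\,\mcal G_n(\mcal F_{\BL})$, where $V=(v_1,\dots,v_n)$ and $\mu$ is a vector of weights on $V$. The plan is to unfold the definition of $\BL$ for the signed measure $\mu+Z$ and observe that the $\mu$ terms cancel exactly, leaving a supremum of a Gaussian process indexed by $\mcal F_{\BL}$. We then match that supremum to the Gaussian complexity in \Cref{def:function-complexity}, evaluated at the particular points $v_1,\dots,v_n$.

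First, identify $\mu$ with its weight vector $(\mu_i)_{i\in[n]}$ on the atoms $v_i$. The perturbed measure $\mu+Z$ assigns weight $\mu_i+Z_i$ to $v_i$. For any $f\in\mcal F_{\BL}$,
\[
\int f\dd(\mu+Z)-\int f\dd\mu=\sum_{i\in[n]} Z_i f(v_i).
\]
Therefore
\[
\BL(\mu,\mu+Z)=\sup_{f\in\mcal F_{\BL}}\Bigl|\sum_{i} Z_i f(v_i)\Bigr|.
\]
The absolute value is harmless because $\mcal F_{\BL}$ is symmetric: $f\in\mcal F_{\BL}$ implies $-f\in\mcal F_{\BL}$, so the supremum of the absolute value equals the supremum without it.

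Next, write $Z_i=\sigma\xi_i$ with $\xi_i\sim_{\iid}\mcal N(0,1)$. Then
\[
\E_Z\insquare{\BL(\mu,\mu+Z)}=\sigma\, n\,\E_\xi\Bigl[\sup_{f\in\mcal F_{\BL}}\frac1n\sum_i\xi_i f(v_i)\Bigr].
\]
This expectation is bounded by the supremum over all choices $z_1,\dots,z_n\in\Omega$, which is exactly $\mcal G_n(\mcal F_{\BL})$ by \Cref{def:function-complexity}. That gives the claim. Since the bound comes from an exact identity followed by one supremum, the result holds with equality before the final step.

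The only potential obstacle is measurability and integrability of the supremum over an uncountable class. It can be handled in two ways. One is to restrict to the finite-dimensional set $\set{(f(v_i))_i : f\in\mcal F_{\BL}}\subseteq[-1,1]^n$, which is compact, so the supremum is a continuous function of $\xi$. The other is to note that only the values of $f$ at $V$ matter. Repeated atoms in $V$ cause no difficulty, since the identity above holds atom by atom.
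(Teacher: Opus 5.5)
Your proof is correct. The $\mu$ terms cancel, the symmetry of $\mcal F_{\BL}$ removes the absolute value, and writing $Z=\sigma\xi$ and bounding the expectation at the fixed atoms $v_1,\dots,v_n\in\Omega$ by the supremum over all $z_1,\dots,z_n$ gives exactly $n\sigma\,\mcal G_n(\mcal F_{\BL})$. The paper does not reprove this lemma but imports it from \citet{gonzalez2025private}, and your argument is the standard one behind that result.
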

The rest of the section is dedicated to determining $\mcal G_n(\mcal F_{\BL})$

For a set $T$ with metric $\rho$,
recall that the \emph{$\beta$-covering number} $N(T, \rho; \beta)$
is the smallest cardinality of a $\beta$-cover of $T$.
It is known that we can control the function complexity via chaining arguments.
\begin{lemma}[See \eg{}, Lemma E.4 in \citep{gonzalez2025private}]\label{lem:chaining}
    Let $(\Omega, \rho)$ be a metric space
    and $\mcal F\sset \R^\Omega$ be some class of real-valued functions over $\Omega$ with $0\in\mcal F$ and $\norm{f}_\infty\leq1$ for every $f\in\mcal F$.
    There is an absolute constant $C > 0$ such that
    \[
        \mcal G_n(\mcal F)
        \leq C\cdot \inf_{\alpha\in[0,1]} \insquare{\alpha + \frac1{\sqrt n}\int_\alpha^1 \sqrt{\log N(\mcal F, \norm{\cdot}_\infty; \beta)} d\beta}\,.
    \]
\end{lemma}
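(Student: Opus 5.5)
The plan is to prove the bound by the standard truncated Dudley chaining argument. The sample points are fixed throughout, and we use that the Gaussian process indexed by $\mcal F$ has sub-Gaussian increments with respect to the sup norm.

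\textbf{Setup.} Fix $z_1,\dots,z_n\in\Omega$ and define the centered Gaussian process $X_f\coloneqq\frac1n\sum_{i\in[n]}\xi_i f(z_i)$ for $f\in\mcal F$, with $\xi_i\sim\mcal N(0,1)$ i.i.d. For $f,g\in\mcal F$, the increment $X_f-X_g$ is Gaussian with variance $\frac1{n^2}\sum_i (f-g)(z_i)^2\le \norm{f-g}_\infty^2/n$. Hence the process is sub-Gaussian with respect to the metric $d(f,g)\coloneqq\norm{f-g}_\infty/\sqrt n$. To avoid measurability issues, we bound $\E\sup_{f\in\mcal F'}X_f$ for every finite $\mcal F'\sset\mcal F$ containing $0$, and take the supremum over such $\mcal F'$ at the end.

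\textbf{Step 1: the fine scale.} Fix $\alpha\in(0,1/2]$. If $\alpha>1/2$ there is nothing to prove, since $\E\sup_f X_f\le \frac1n\sum_i\E\abs{\xi_i}\le 1\le 2\alpha$. Let $\pi_\alpha(f)$ be the nearest element of a minimal $\alpha$-cover of $\mcal F$ in $\norm{\cdot}_\infty$. Then pointwise
\[
    \abs{X_f-X_{\pi_\alpha f}}\le \alpha\cdot\frac1n\sum_i\abs{\xi_i},
\]
so $\E\sup_f\abs{X_f-X_{\pi_\alpha f}}\le\alpha$. This produces the $\alpha$ term in the bound.

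\textbf{Step 2: chaining from scale $1$ down to $\alpha$.} Set $\beta_j=2^{-j}$ for $j=0,\dots,J$, where $J$ is the smallest integer with $\beta_J\le\alpha$; we chain from $\pi_{\beta_J}$, and a cover at scale $\beta_J\le\alpha$ works equally well in Step 1. Let $\mcal N_j$ be minimal $\beta_j$-covers with projections $\pi_j$, and telescope
\[
    X_{\pi_J f}=X_{\pi_0 f}+\sum_{j=1}^{J}\bigl(X_{\pi_j f}-X_{\pi_{j-1}f}\bigr).
\]
The top term is bounded using $\norm{f}_\infty\le1$. Each $X_g$ with $g\in\mcal N_0$ has variance at most $1/n$, so the Gaussian maximal inequality gives
\[
    \E\max_{g\in\mcal N_0}X_g\le\sqrt{2\log\card{\mcal N_0}/n}.
\]
For the $j$-th link there are at most $\card{\mcal N_j}^2$ distinct pairs, and each pair has $\norm{\pi_jf-\pi_{j-1}f}_\infty\le 3\beta_j$. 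The maximal inequality then bounds the $j$-th link by $3\beta_j\sqrt{4\log\card{\mcal N_j}/n}$. By monotonicity of covering numbers, $\beta_j\sqrt{\log N(\beta_j)}\le 2\int_{\beta_{j+1}}^{\beta_j}\sqrt{\log N(\beta)}\,d\beta$. Summing over $j$ turns the series into $C\cdot\frac1{\sqrt n}\int_{\alpha/2}^{1}\sqrt{\log N(\mcal F,\norm{\cdot}_\infty;\beta)}\,d\beta$. The shift from $\alpha/2$ to $\alpha$ is absorbed by applying the argument with $2\alpha$ in place of $\alpha$, and the $j=0$ term is absorbed the same way.

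\textbf{Conclusion and main obstacle.} Combining the two steps and using $0\in\mcal F$, which ensures $\E\sup_f X_f\ge0$ and allows the supremum to be taken directly, gives the stated bound uniformly over $z_1,\dots,z_n$. Taking the infimum over $\alpha$ then finishes the proof. The only delicate part is the bookkeeping in converting the dyadic sum to the integral: the endpoints $\alpha$ and $1$ must each contribute only a constant factor. This is handled by the doubling-of-$\alpha$ trick together with the trivial bound for $\alpha>1/2$.
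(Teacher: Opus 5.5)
Your proposal is correct and is the standard truncated Dudley chaining argument; the paper gives no proof of this lemma and imports it from Lemma E.4 of Gonzalez et al., and the usual proof of that result is exactly this argument. There is one bookkeeping slip, which affects only constants. With $J$ the smallest index such that $\beta_J\le\alpha$, the telescoped integral starts at $\beta_{J+1}=\beta_J/2$, which can fall below $\alpha/2$ (only $\beta_{J+1}>\alpha/4$ is guaranteed), so you should rescale with $4\alpha$ rather than $2\alpha$ and use the trivial bound for $\alpha>1/8$. Also, since $0\in\mcal F$ and $\norm{f}_\infty\le1$, one has $N(\mcal F,\norm{\cdot}_\infty;1)=1$, so the top term $\E\max_{g\in\mcal N_0}X_g$ is simply $0$.
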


Our goal is now to bound the covering number given the additional information that the domain $\Omega$ has bounded doubling dimension $\ddim(\Omega) = k$.

\begin{lemma}\label{lem:domain-covering-number}
    Let $(\Omega, \rho)$ be a metric with doubling dimension at most $k$ and diameter 1.
    Write $\mcal F_{\BL}$ to denote the class of bounded 1-Lipschitz functions over $\Omega$ defined in \Cref{sec:proofoutline}.
    Then for any $\beta\in (0, 1)$,
    \[
        \log N(\mcal F_{\BL}, \norm{\cdot}_\infty; \beta)
        \leq \inparen{\frac5\beta}^k\log\inparen{\frac8\beta}\,.
    \]
\end{lemma}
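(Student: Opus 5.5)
The plan is the classical Kolmogorov--Tikhomirov construction: build a finite $\beta$-net of $\mcal F_{\BL}$ in sup-norm by discretizing both the domain and the range.

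\textbf{Step 1: a net of the domain.} Fix a scale $\eta = \beta/4$ (the constant will be tuned at the end). Because $\Omega$ has diameter $1$ and doubling dimension at most $k$, it sits in one ball of radius $1$. Applying the doubling property $\lceil\log_2(1/\eta)\rceil$ times yields a cover of $\Omega$ by at most $2^{k\lceil\log_2(1/\eta)\rceil}\le (2/\eta)^k$ balls of radius $\eta$. Take the centers $z_1,\dots,z_N$ with $N\le (2/\eta)^k$, and assume each $z_j\in\Omega$; this costs at most a factor-$2$ change in the radius. Let $\pi:\Omega\to\{z_j\}$ map each point to its nearest center.

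\textbf{Step 2: approximate by discretized values.} Given $f\in\mcal F_{\BL}$, round each $f(z_j)$ to the grid $\eta\mathbb Z\cap[-1,1]$, obtaining values $a_j$ with $|a_j-f(z_j)|\le\eta$. Define $g(x)=a_{\pi(x)}$. Then
\[
|f(x)-g(x)|\le |f(x)-f(\pi(x))|+\eta\le 2\eta .
\]
So the functions of this form give a $2\eta$-cover in $\norm{\cdot}_\infty$. The covering number is defined with net points that need not lie in $\mcal F$, or else one can pass to the internal version at the cost of a factor of $2$ in $\beta$.

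\textbf{Step 3: the counting step, which is the main obstacle.} The naive count gives $(2/\eta+1)^N$ choices, so $\log N \le N\log(3/\eta)$. With $\eta=\Theta(\beta)$ this already has the right form $(C/\beta)^k\log(C'/\beta)$, so a refined Lipschitz-chain count is not needed. What remains is bookkeeping of constants to land exactly on $(5/\beta)^k\log(8/\beta)$.

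Take $\eta=\beta/2$ for the rounding and $\beta/2$ for the domain radius. The Lipschitz error is then $\beta/2$ and the rounding error is at most $\beta/4$ if a grid of spacing $\beta/2$ is used (maximum rounding error is half the spacing). The total error is below $\beta$. The number of centers is $(2\cdot 2/\beta)^k\le(5/\beta)^k$, after absorbing the slack from ceilings and from moving centers into $\Omega$, which can be handled by a radius-$\beta/2$ cover built with careful doubling. The grid on $[-1,1]$ with spacing $\beta/2$ has at most $4/\beta+1\le 8/\beta$ points for $\beta<1$. Hence $\log N\le (5/\beta)^k\log(8/\beta)$.

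The only delicate point is the arithmetic in the doubling count, namely the ceiling in $\log_2$ and centers possibly lying outside $\Omega$. If needed, I would instead use a maximal $\beta/2$-packing of $\Omega$, whose size is bounded by the doubling property at the same order, to keep the constant at $5$.
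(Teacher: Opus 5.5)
Your proposal is correct and is essentially the paper's proof. The paper cites Lemma 4.2 of Gottlieb et al.\ for $N(\mcal F_{\BL},\|\cdot\|_\infty;\beta)\le(8/\beta)^{N(\Omega,\rho;\beta/2)}$, which is exactly the $\beta/2$-net plus value-grid count you carry out by hand (grid size $2\lfloor 2/\beta\rfloor+1\le 8/\beta$). It then bounds $N(\Omega,\rho;\beta/2)\le(2^k)^{\lceil\log_2(2/\beta)\rceil}\le(4/\beta)^k\le(5/\beta)^k$ by iterated doubling, just as you do.

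Your worry about centers outside $\Omega$ is moot, because the doubling property of the metric space $(\Omega,\rho)$ already produces balls centered in $\Omega$. You should also drop the maximal-packing fallback: the usual argument for it only gives $(8/\beta)^k$, not $(5/\beta)^k$.
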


\begin{proof}
    By \citet[Lemma 4.2]{gottlieb2016adaptive},
    we know that
    \[
        N(\mcal F_{\BL}, \norm{\cdot}_\infty; \beta)
        \leq \inparen{\frac8\beta}^{N(\Omega, \rho; \nicefrac\beta2)}\,.
    \]
    Now, $\Omega$ lies in a ball of radius $1$.
    Hence, it can be covered by $2^k$ ball of radius $\nfrac12$, each of which can also be covered by $2^k$ balls of radius $\nfrac14$.
    Thus, recursively applying the definition of the doubling dimension yields that 
    \[
        N(\Omega, \rho; \nicefrac\beta2)
        \leq (2^k)^{\lceil\log_2(\nfrac2\beta)\rceil}
        \leq \inparen{\frac{5}\beta}^k\,. \qedhere
    \]
\end{proof}

Combining \Cref{lem:chaining,lem:domain-covering-number} yields the following bound on the Gaussian complexity of $\mcal F_{\BL}$.
\begin{proposition}\label{prop:gaussian-complexity}
    Let $(\Omega, \rho)$ be a metric with doubling dimension at most $k$ and diameter 1.
    Write $\mcal F_{\BL}$ to denote the class of bounded 1-Lipschitz functions over $\Omega$ defined in \Cref{sec:proofoutline}.
    Then
    \[
        \mcal G_n(\mcal F_{\BL}) \leq
        O\inparen{\frac{\log^{3/2}(8n)}{n^{1/\max(2,k)}}}\,.
    \]
\end{proposition}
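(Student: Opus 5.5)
The plan is to substitute the covering bound of \Cref{lem:domain-covering-number} into the chaining bound of \Cref{lem:chaining}, evaluate the entropy integral, and choose the truncation level $\alpha$ to balance the two terms. The hypotheses of \Cref{lem:chaining} hold for $\mcal F_{\BL}$: the zero function is bounded and $1$-Lipschitz, and every $f\in\mcal F_{\BL}$ satisfies $\norm{f}_\infty\le1$. For $\beta\in[\alpha,1]$ with $\alpha\ge 1/n$ we have $\log(8/\beta)\le\log(8n)$, so \Cref{lem:domain-covering-number} gives
\[
    \sqrt{\log N(\mcal F_{\BL},\norm{\cdot}_\infty;\beta)}\le 5^{k/2}\beta^{-k/2}\sqrt{\log(8n)}\,.
\]

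Hence it suffices to bound $\int_\alpha^1\beta^{-k/2}\,d\beta$, splitting by the value of $k$.
\begin{itemize}
    \item For $k>2$, the integral is at most $\frac{2}{k-2}\alpha^{1-k/2}$. The choice $\alpha=n^{-1/k}$ makes $\alpha^{1-k/2}/\sqrt n=n^{-1/k}$, which balances the additive $\alpha$ term.
    \item For $k=2$, the integral equals $\log(1/\alpha)$. The choice $\alpha=n^{-1/2}$ gives the two terms $n^{-1/2}$ and $n^{-1/2}\log n\cdot\sqrt{\log(8n)}$.
    \item For $k<2$, the integral is $O(1)$ and $\alpha=n^{-1/2}$ gives $O(n^{-1/2}\sqrt{\log n})$. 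This case does not arise under the hypothesis $k\ge2$ of \Cref{thm:private-evolution-worst-case}, but the $\max(2,k)$ in the statement covers it.
\end{itemize}
In every case $\alpha\ge1/n$, which justifies the logarithmic bound above. Each case yields at most $n^{-1/\max(2,k)}\log^{3/2}(8n)$, where the extra $\log n$ factor comes from the $k=2$ case.

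The only delicate point is the constants. The factor $5^{k/2}$ and the factor $1/(k-2)$, which blows up as $k\downarrow2$, are not absolute. One option is to use $\log(8/\beta)\le\log(8n)$ together with $\int_\alpha^1\beta^{-k/2}\,d\beta\le\alpha^{1-k/2}\log(1/\alpha)$. This bound holds for all $k\ge2$ and removes the $1/(k-2)$ blowup, at the cost of the same $\log n$ factor already present in the $\log^{3/2}$. The $5^{k/2}$ factor can be absorbed by rescaling $\alpha$ to $5\,n^{-1/k}$: then the integral term is $O\bigl(n^{-1/k}\log(8n)\sqrt{\log(8n)}\bigr)$ and the $\alpha$ term is $O(n^{-1/k})$. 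The resulting constant is absolute provided $5n^{-1/k}\le1$; otherwise the trivial bound $\mcal G_n\le1$ already suffices. The result is $\mcal G_n(\mcal F_{\BL})=O\bigl(\log^{3/2}(8n)\,n^{-1/\max(2,k)}\bigr)$, as claimed.
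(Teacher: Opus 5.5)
Your proposal is correct and follows the paper's proof essentially step for step. Like the paper, you plug \Cref{lem:domain-covering-number} into \Cref{lem:chaining}, bound $\int_\alpha^1\beta^{-k/2}\,d\beta\le\alpha^{1-k/2}\log(1/\alpha)$, and take $\alpha=\min\{1,5n^{-1/\max(2,k)}\}$ so that $5^{k/2}\alpha^{1-k/2}/\sqrt n=5n^{-1/k}$, falling back on $\mcal G_n\le1$ when $\alpha=1$.

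One small point concerns the case $k<2$. Your claim that the integral is $O(1)$ hides a factor $1/(1-k/2)$, which blows up as $k\uparrow2$. To keep the constant absolute, use $\beta^{-k/2}\le\beta^{-1}$ there, as the paper does. Alternatively, note that doubling dimension at most $k<2$ is also at most $2$, so that case reduces to $k=2$.
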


\begin{proof}
    By \Cref{lem:chaining,lem:domain-covering-number},
    \[
        \mcal G_n(\mcal F_{\BL})
        \leq C\cdot\inf_{\alpha\in(0,1]}\insquare{
            \alpha+\frac{5^{k/2}}{\sqrt n}\int_\alpha^1
            \beta^{-k/2}\sqrt{\log(8/\beta)}\,d\beta}\,.
    \]
    Take $\alpha=\min\set{1,5n^{-1/\max(2,k)}}$.
    If $\alpha=1$, the bound follows from $\mcal G_n(\mcal F_{\BL})\leq1$.
    Otherwise, for $k\leq2$, use $5^{k/2}\leq5$ and
    $\beta^{-k/2}\leq\beta^{-1}$.
    For $k>2$, use
    \[
        \int_\alpha^1\beta^{-k/2}\,d\beta
        \leq\alpha^{1-k/2}\log(1/\alpha),
        \qquad
        \frac{5^{k/2}}{\sqrt n}\alpha^{1-k/2}=5n^{-1/k}\,.
    \]
    In both cases the claimed bound follows, with an absolute implicit constant.

\end{proof}

\Cref{lem:noisy-histogram-error,prop:gaussian-complexity} together yield a bound on the expected $\BL$-distance of the noiseless and noisy histogram distributions on the order of
$
    \tilde O\inparen{\sigma m_V^{1-\frac1{\max(2, k)}}}\,,
$
where $\card{V}=\card{\VarAPI(S)}\leq m_V$ and $m_V$ is a public upper bound on the number of variations generated by the variation API.
In this section we take $m_V=m(\ell^\star+2)$.

\subsection{Subsampling Error}
We complete the proof by determining the error introduced by subsampling to reduce the size of the support of $S$.
By the Kantorovich-Rubinstein duality,
This is essentially a question of convergence of the empirical measure $\mu_n$ in $W_1$-distance for a bounded distribution $\mu$.
\begin{align*}
    &\E\insquare{W_1(\mu_n, \mu)} \\
    &= \E_{X\sim \mu_n} \insquare{\sup_{f\in \mcal F_{\BL}} \abs*{\frac1n \sum_{i=1}^n f(X_i) - \E_{Y\sim \mu}[f(Y)]}}\,.
\end{align*}

We show that this quantity is in fact bounded above by the Gaussian complexity $\mcal G_n(\mcal F_{\BL})$.
\begin{lemma}\label{lem:subsampling-error}
    Let $\mu$ be a probability measure and $\mu_n$ be its $n$-sample empirical measure.
    The following holds.
    \[
        \E\insquare{W_1(\mu_n, \mu)}
        \leq \sqrt{2\pi}\cdot \mcal G_n(\mcal F_{\BL})\,.
    \]
\end{lemma}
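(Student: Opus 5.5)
The plan is the standard symmetrization argument followed by a comparison of Rademacher and Gaussian averages. By Kantorovich--Rubinstein duality, as displayed just above the statement,
\[
\E\insquare{W_1(\mu_n,\mu)} = \E_{X}\insquare{\sup_{f\in\mcal F_{\BL}}\abs*{\frac1n\sum_{i=1}^n f(X_i)-\E_{Y\sim\mu}[f(Y)]}}.
\]
Here $X_1,\dots,X_n\sim\mu$ \iid. The first step introduces an independent ghost sample $X_1',\dots,X_n'\sim\mu$ and writes $\E[f(Y)]=\E_{X'}[\frac1n\sum_i f(X_i')]$. Jensen's inequality, applied to the convex map taking a function to the supremum of its absolute value, pulls $\E_{X'}$ outside. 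This bounds the quantity by $\E_{X,X'}\sup_f\abs{\frac1n\sum_i (f(X_i)-f(X_i'))}$.

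Since $(X_i,X_i')$ is exchangeable, each difference $f(X_i)-f(X_i')$ has the same joint law as $\eps_i(f(X_i)-f(X_i'))$ for independent Rademacher signs $\eps_i$. The triangle inequality then gives at most $2\,\E_{X,\eps}\sup_f\abs{\frac1n\sum_i\eps_i f(X_i)}$.

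Next I remove the absolute value. The class $\mcal F_{\BL}$ is symmetric, since $f\in\mcal F_{\BL}$ implies $-f\in\mcal F_{\BL}$. Hence $\sup_f\abs{\sum_i\eps_i f(X_i)}=\sup_f\sum_i\eps_i f(X_i)$, and this is nonnegative because $0\in\mcal F_{\BL}$. So the bound becomes twice the Rademacher average at the points $X_i$. Taking the supremum over points gives at most $2\mcal R_n(\mcal F_{\BL})$ in the sense of \Cref{def:function-complexity}.

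The final step compares Rademacher to Gaussian complexity. Fix the points $z_i$ and let $g_i\sim\mcal N(0,1)$. Write $g_i=\eps_i\abs{g_i}$, where $\eps_i$ is independent of $\abs{g_i}$ and $\E\abs{g_i}=\sqrt{2/\pi}$. Conditioning on $\eps$ and applying Jensen to the convex supremum over $\abs{g}$ yields
\[
\E_\eps\sup_f\sum_i\eps_i f(z_i)\,\E\abs{g_i}\le \E_{g}\sup_f\sum_i g_i f(z_i).
\]
Therefore $\mcal R_n\le\sqrt{\pi/2}\,\mcal G_n$. Combining the pieces gives $\E[W_1(\mu_n,\mu)]\le 2\sqrt{\pi/2}\,\mcal G_n=\sqrt{2\pi}\,\mcal G_n(\mcal F_{\BL})$, which is exactly the claimed constant.

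The only delicate points are measurability of the supremum over an uncountable class and the handling of the absolute value. For measurability, I would restrict to a countable dense subclass: the functions in $\mcal F_{\BL}$ are Lipschitz on a separable domain, so this is harmless. The absolute value is dispatched by the symmetry of $\mcal F_{\BL}$ and the fact that $0\in\mcal F_{\BL}$. I expect the constant bookkeeping in the Gaussian comparison to be the main thing to get right.
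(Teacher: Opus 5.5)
Your proposal is correct and follows the same route as the paper. Both arguments use ghost-sample symmetrization via Jensen, then Rademacher signs and the triangle inequality to reach $2\mcal R_n(\mcal F_{\BL})$, then the comparison $\mcal R_n\le\sqrt{\pi/2}\,\mcal G_n$. You also correctly spell out two steps the paper glosses over or merely cites: dropping the absolute value via the symmetry of $\mcal F_{\BL}$, which is needed to match \Cref{def:function-complexity}, and proving the Rademacher--Gaussian comparison through $g_i=\eps_i\abs{g_i}$ and Jensen.
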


The proof is via a standard symmetrization argument.
\begin{proof}
    We first bound the uniform convergence error using the Rademacher complexity (\Cref{def:function-complexity}).
    \begin{align*}
        &\E\insquare{W_1(\mu_n, \mu)} \\
        &= \E_{X\sim \mu_n} \insquare{\sup_{f\in \mcal F_{\BL}} \abs*{\frac1n \sum_{i=1}^n f(X_i) - \E_{Y\sim \mu}[f(Y)]}} \\
        &\leq \E_{X, X'\sim_{i.i.d.} \mu_n} \insquare{\sup_{f\in \mcal F_{\BL}} \abs*{\frac1n \sum_{i=1}^n f(X_i) - f(X_i')}} \tag{Jensen's inequality} \\
        &= \E_{\eps\sim \Rad_n(\pm1)} \E_{X, X'\sim_{i.i.d.} \mu_n} \insquare{\sup_{f\in \mcal F_{\BL}} \abs*{\frac1n \sum_{i=1}^n \eps_i[f(X_i) - f(X_i')]}} \tag{symmetry} \\
        &\leq 2\E_{\eps, X} \insquare{\sup_{f\in \mcal F_{\BL}} \abs*{\frac1n \sum_{i=1}^n \eps_if(X_i)}} \tag{triangle inequality} \\
        &\leq 2\cdot \mcal R_n(\mcal F_{\BL}).
    \end{align*}
    Now,
    by from elementary high-dimensional probability,
    we know that $\mcal R_n(\mcal F) \leq \sqrt{\pi/2}\cdot \mcal G_n(\mcal F)$ for any function class $\mcal F$~\citep{wainwright2019high,bartlett2002rademacher,tomczak1989banach}.
    This concludes the proof.
\end{proof}

\Cref{lem:subsampling-error} allows us to control the error incurred by sparsifying the variations using the Gaussian complexity of $\mcal F_{\BL}$.
To bound the latter, we can simply reuse the bound that we already developed.

\subsection{Putting it Together}
In this subsection, we combine the ingredients from previous subsections to obtain a worst-case guarantee for private evolution in $W_1$-distance, which depends on the intrinsic dimension of $\VarAPI$ rather than the ambient dimension.

We are now ready to prove \Cref{thm:private-evolution-worst-case}.
\begin{proof}
    Let $\Gamma_t \coloneqq \E[W_1(\mu_S^{(t)}, \mu)]$,
    where the expectation is taken over the randomness of the algorithm.
    If $\sigma>1$, the claimed bound follows from the diameter-one assumption. Hence suppose that $\sigma\leq1$.
    For $\gamma\coloneqq \frac{c}{200\lambda}$,
    we have
    \begin{align*}
        \Gamma_{t+1}
        &\leq \E[W_1(\mu, \mu_V^{(t+1)})] + 2\cdot \E[\BL(\tilde \mu_V, \mu_V)] + \E[W_1(\mu_S^{(t+1)}, \bar\mu_{V}^{(t+1)})] \tag{BL-projection} \\
        &\leq (1-\gamma) \Gamma_t + \alpha + 2m_V \sigma \mcal G_{m_V}(\mcal F_{\BL}) + \sqrt{2\pi} \mcal G_{m}(\mcal F_{\BL}) \tag{\Cref{cor:expected-contraction-all-scale,lem:norm-to-W1-contraction,lem:noisy-histogram-error,lem:subsampling-error}} \\
        &\leq (1-\gamma)^{t+1} \Gamma_0 + \frac1\gamma \inparen{\alpha + 2m_V \sigma \mcal G_{m_V}(\mcal F_{\BL}) + \sqrt{2\pi} \mcal G_{m}(\mcal F_{\BL})} \tag{geometric series} \\
        &\leq (1-\gamma)^{t+1} \Gamma_0 + \tilde O\inparen{\gamma^{-1}\inparen{\alpha+\sigma m^{1-\frac1{\max(2,k)}}+m^{-\frac1{\max(2,k)}}}}\,. \tag{$m_V\leq m(\ell^\star+2)=O\inparen{m\log(\nicefrac1{\gamma\alpha^2})}$}
    \end{align*}
    Roughly speaking, we would like to balance $m \sigma \mcal G_{m}(\mcal F_{\BL}) + \mcal G_{m}(\mcal F_{\BL})$.
    Setting $m = \lceil\sigma^{-1}\rceil$ gives $\sigma^{-1}\leq m\leq2\sigma^{-1}$ and yields the upper bound
    \begin{align*}
        \Gamma_t
        &\leq(1-\gamma)^t\Gamma_0+\tilde O\inparen{\gamma^{-1}\inparen{\alpha+\sigma^{\frac1{\max(2,k)}}}}\,.
    \end{align*}

    Finally,
    choosing $\alpha = \sigma^{\frac1{\max(2, k)}}$
    and $T\geq \gamma^{-1}\log\inparen{\gamma/\sigma^{\frac1{\max(2, k)}}}$ yields
    \begin{align*}
        \Gamma_T
        &\leq\tilde O\inparen{\gamma^{-1}\sigma^{\frac1{\max(2,k)}}}\,.
    \end{align*}

    Now, the normalized nearest neighbor histogram has $\ell_2$-sensitivity $\frac{\sqrt2}{n}$.
    Thus in order to ensure that the algorithm is $(\eps, \delta)$-DP,
    it suffices to set
    $
        \sigma 
        = \frac{2\sqrt{T \log(\nicefrac{1.25}{\delta})}}{n\eps}
    $
    (\Cref{prop:gaussian-mechanism}).
    Since we also require $T\geq \gamma^{-1}\log\inparen{\gamma/\sigma^{\frac1{\max(2, k)}}}$,
    it suffices to set
    \begin{align*}
        T 
        &\coloneqq \left\lceil\gamma^{-1}\log(2+n\eps)\right\rceil \\
        &\geq \frac{\log(\gamma / \sigma^{\frac1{\max(2, k)}})}{\gamma}\,. \tag{$\gamma\leq1/100$, $k\geq2$, $T\geq1$}
    \end{align*}
    
    Indeed, writing $a=2\sqrt{\log(1.25/\delta)}>0.94$, we have
    $\gamma/\sigma^{1/k}\leq\gamma(n\eps/a)^{1/k}\leq2+n\eps$:
    for $n\eps\leq1$ the middle expression is less than one,
    and for $n\eps\geq1$ it is at most $n\eps$.

    All in all,
    we conclude that
    \begin{align*}
        \Gamma_T
        &\leq\tilde O\inparen{\gamma^{-1}\sigma^{\frac1{\max(2,k)}}}\\
        &=\tilde O\inparen{\lambda^{1+\frac1{2\max(2,k)}}\inparen{\frac{\sqrt{\log(2+n\eps)\log(\nicefrac{1.25}\delta)}}{\eps n}}^{\frac1{\max(2,k)}}}\,.
    \end{align*}

\end{proof}

\section{Proof of \texorpdfstring{\Cref{thm:private-evolution-clustering}}{Theorem}}\label{apx:private-evolution-clustering}
In this section, we present the deferred proof of \Cref{thm:private-evolution-clustering}, which is restated below for convenience.
\thmPrivateEvolutionClustering*

We also detail the exact local search procedure in \Cref{alg:local-search}.

\subsection{Cluster Discovery}

We first record the generator event used below. Suppose every cluster has
a synthetic point within distance $\nfrac R8$.
For a point $x\in D$ with $\rho(x,S)>r$, choose a
closest synthetic point $y_x$ satisfying \Cref{asmp:reachable}.
One of the sampled scales lies between $c\rho(x,S)^2/(400\lambda)$ and
$c\rho(x,S)^2/(200\lambda)$.
By \Cref{def:variation-api,asmp:reachable}, the moment and reachability
conditions hold at this scale.
By \Cref{lem:contraction-one-scale}, the independent draws at that scale
fail to yield the stated contraction with probability at most
$(3/4)^{\lceil4\log(3Tn/\beta)\rceil}\leq\beta/(3Tn)$.
For $\rho(x,S)\leq r$, the retained original candidate already has distance
at most $r$. Thus, with probability at least $1-\nfrac\beta{3T}$,
\[
    \rho(x,V)\leq\max\inbrace{r,
      \inparen{1-\frac{c}{50\lambda}}\rho(x,S)}
    \qquad\text{for every }x\in D\,.
\]
Since $\lambda\geq1$ and $c\leq2$ (by Cantelli's inequality), the assumption $R\geq400r\lambda/c$
implies $R\geq200r$ and
\[
    \max\inbrace{r,\inparen{1-\frac{c}{50\lambda}}
      \inparen{\frac R8+r}}\leq\frac R8\,.
\]
On this event, every noiseless vote goes to a candidate within distance
$\nfrac R8$ of its voter. Conditional on the generated candidates, Gaussian
concentration gives $\|\tilde H-H\|_\infty\leq\tau$ with probability at
least $1-\nfrac\beta{3T}$ for
$\tau=\sigma\sqrt{2\log(\nicefrac{6Tm_V}{\beta})}$.
The next lemma is deterministic on these two events.

\begin{lemma}\label{lem:localSearch:retainClusters}
    Suppose the dataset is $(r,R)$-well-clustered with $R\geq200r$,
    and every $x\in D$ votes for a candidate within distance $\nfrac R8$.
    Assume $\card V\leq m_V$ and $\|\tilde H-H\|_\infty\leq\tau$, and
    \[
        \card{C_i}>\frac{8n}{m}+3m_V\tau
        \qquad\text{for every cluster }C_i\,.
    \]
    Then local search over thresholded candidates $\tilde H_\tau$ with
    the minimum cost flow objective selects at most $m$ candidates $S'$
    such that for every cluster $C_i$, there is some $y_i\in S'$ with
    $\rho(y_i,C_i)\leq\nicefrac R8$.

\end{lemma}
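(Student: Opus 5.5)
We argue by contradiction at a local optimum, following the sketch given in the Analysis paragraph. First, two preliminary facts hold on the assumed events. (i) Every thresholded candidate $v\in\tilde H_\tau$ has $H(v)\geq\tilde H(v)-\tau>0$, so it received at least one true vote. Since every voter votes for a candidate within $\nicefrac R8$ of itself, $v$ lies within $\nicefrac R8$ of some cluster. Because $R\geq 200r$, that cluster is unique, and we call it the cluster of $v$. (ii) For each cluster $C_i$, the $\card{C_i}$ votes of $C_i$ all go to candidates within $\nicefrac R8$ of $C_i$, and there are at most $m_V$ such candidates. Hence some candidate $v_i$ near $C_i$ has $H(v_i)\geq\card{C_i}/m_V>3\tau$, so $\tilde H(v_i)>2\tau$ and $v_i\in\tilde H_\tau$. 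In particular $\tilde H_\tau\neq\varnothing$, and $\sum_v\tilde H(v)\geq n-m_V\tau>0$, so Local Search does not take the early-return branch. The output consists of at most $m$ points of $\tilde H_\tau$, and local search terminates because the cost strictly decreases over a finite set of solutions.

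Now suppose the final solution $S'$ has no point within $\nicefrac R8$ of some cluster $C_j$. First I would lower-bound the cost of $S'$ by the contribution of mass near $C_j$. Every candidate $u$ near $C_j$ is at distance at least $R-\nicefrac R4-r\geq\nicefrac R3$ from every selected point, since selected points lie near other clusters. Likewise $u$ is at distance at least $\nicefrac R3$ from every candidate near another cluster. So any flow out of positive mass near $C_j$ costs exactly $\nicefrac R3$ per unit, except the part that cancels against negative mass near $C_j$, which has total magnitude at most $m_V\tau$. The positive weight near $C_j$ is at least $\card{C_j}-m_V\tau$, so at least $\card{C_j}-2m_V\tau$ units pay $\nicefrac R3$ each.

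Second, I would construct an improving swap. Some $z\in S'$ receives incoming flow $F_z\leq(\text{total positive mass})/\card{S'}$ in an optimal flow. I would split into cases. If $\card{S'}<m$, then adding $v_j$ cannot increase the cost, and strictly decreases it. The cleaner route is to argue that the initial solution and all swaps keep $\card{S'}=\min\{m,\card{\tilde H_\tau}\}$, and that $\card{\tilde H_\tau}\leq m$ would force all candidates to be selected, including $v_j$. So assume $\card{S'}=m$. Then $F_z\leq (n+m_V\tau)/m$. Swapping $z$ for $v_j$ works as follows: reroute $z$'s incoming flow to other destinations at cost at most $\nicefrac R3$ per unit, which adds at most $\frac R3\cdot\frac{n+m_V\tau}{m}$. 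Then send the mass near $C_j$ to $v_j$ at cost at most $\nicefrac R4+r$ rather than $\nicefrac R3$. This saves at least $(\nicefrac R3-\nicefrac R4-r)(\card{C_j}-2m_V\tau)\geq \frac R{14}(\card{C_j}-2m_V\tau)$.

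The main obstacle is making these constants close. The hypothesis $\card{C_j}>8n/m+3m_V\tau$ must dominate $\frac{14}{3}\cdot\frac{n+m_V\tau}{m}+2m_V\tau$, and with the crude constants above it nearly but not quite does. I expect the fix to be a finer accounting. One refinement is that the rerouted flow from $z$ only needs to go to the nearest remaining option, and within its own cluster that option costs at most $\nicefrac R4+r$ if another selected point shares $z$'s cluster. Another is to choose $z$ as a point in a cluster containing at least two selected points. Such a cluster exists by pigeonhole when $m$ exceeds the number of clusters covered, and otherwise $\card{S'}<m$ would be contradicted by the cluster-counting bound $n\geq\sum_i\card{C_i}>\kappa\cdot 8n/m$, which gives $\kappa<m/8$. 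With that refinement, the rerouting cost per unit drops well below $\nicefrac R3$, and the inequality closes with the stated constants. This contradicts local optimality, so every cluster has a selected point within $\nicefrac R8$.
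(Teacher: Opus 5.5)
Your argument is the paper's proof step for step. Thresholded candidates lie in $\supp H$, pigeonhole gives a candidate near each cluster with more than $2\tau$ noisy votes, and the solution size stays $\min\{m,|\tilde H_\tau|\}$. At a local optimum that misses $C_j$, you close the selected candidate with incoming flow at most $(n+m_V\tau)/m$, costing at most $\frac R3\cdot\frac{n+m_V\tau}{m}$. You then open a thresholded candidate near $C_j$, gaining at least $(\frac R{12}-r)(|C_j|-2m_V\tau)$.

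The only error is your last paragraph: the constants you already have close, so no refinement is needed. From $n\ge|C_j|>8n/m$ you get $m>8$, which you derive yourself later. Hence
\[
\frac{14}{3}\cdot\frac{n+m_V\tau}{m}+2m_V\tau\le\frac{14}{3}\cdot\frac nm+\frac{7}{12}m_V\tau+2m_V\tau\le\frac{8n}{m}+3m_V\tau<|C_j|,
\]
so your gain $\frac R{14}(|C_j|-2m_V\tau)$ strictly exceeds the closing cost. The paper does exactly this with an even cruder gain factor $\frac R{16}$. It uses $\frac R{16}\cdot\frac{8(n+m_V\tau)}{m}=\frac R2\cdot\frac{n+m_V\tau}{m}>\frac R3\cdot\frac{n+m_V\tau}{m}$.

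Drop the refinement. As sketched it is also incomplete. Choosing $z$ merely as a point in a cluster with two selected points loses the bound $F_z\le(n+m_V\tau)/m$. To keep a usable bound, you would have to take $z$ of minimum incoming flow among the more than $m-\kappa>7m/8$ such points.

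Two cosmetic points. The separation bound is $R-\frac R4=\frac{3R}4$, with no $-r$. Also, negative-weight candidates need not be near any cluster. Only their total magnitude $m_V\tau$ matters, and that is what your count uses.
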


\begin{proof}
Every thresholded candidate received a noiseless vote, since
$H(v)=0$ implies $\tilde H(v)\leq\tau$.
Each candidate in $\supp H$ receives votes from only one cluster:
two voters for the same candidate are at distance at most $\nfrac R4$.
Every cluster has at most $m_V$ voted-for candidates, all within
distance $\nfrac R8$ of it. Since $\card{C_i}>2m_V\tau$, the
pigeonhole principle guarantees a candidate with more than $2\tau$
noiseless votes, and hence more than $\tau$ noisy votes.
The cluster-size assumption implies $m>8$ and $n>3m_V\tau$.
The total negative weight is at most $m_V\tau$, while
$\sum_{v\in V}\tilde H(v)\geq n-m_V\tau>0$, so the flow problem
is feasible.
If there are at most $m$ thresholded candidates,
selecting all of them gives the conclusion.

Otherwise, suppose towards a contradiction that local search selects
$m$ candidates $S'$ but none within distance $\nfrac R8$ of $C_i$.
Fix an optimal flow for $S'$. The total incoming flow to selected
candidates is $\sum_{v\in V}\tilde H(v)\leq n+m_V\tau$.
Thus some selected candidate $y_2$ receives at most
$(n+m_V\tau)/m$ flow.
Let $y_1$ be any thresholded candidate within distance $\nfrac R8$
of $C_i$, and let $y_3$ be any other selected candidate.

It remains to exhibit a re-selection and feasible flow with strictly
lower cost. We close $y_2$ by reassigning its incoming flow to $y_3$,
then open $y_1$ and redirect to it the flow sent to selected
candidates by positive-weighted candidates receiving noiseless
votes from $C_i$.
We do not change any flow into negative-weighted candidates.
Since costs are truncated at $\nfrac R3$, the cost of closing $y_2$
is at most
\[
    \frac R3\frac{n+m_V\tau}{m}\,.
\]

The candidates receiving noiseless votes from $C_i$ have total
positive weight at least $\card{C_i}-m_V\tau$.
At most $m_V\tau$ of this weight flows to negative-weighted
candidates, leaving at least $\card{C_i}-2m_V\tau$ flowing to
selected candidates. Each such source is within $\nfrac R8$ of
some voter in $C_i$, as is $y_1$. Its distance to $y_1$ is therefore
at most $\nfrac R4+r$, whereas its distance to every previously
selected candidate is at least $\nfrac{3R}4$, since such a candidate lies in
$\supp H$ and hence within $\nfrac R8$ of a voter outside $C_i$.
On the other hand, the gain from opening $y_1$ is at least
\[
    \inparen{\frac R3-\frac R4-r}
    \inparen{\card{C_i}-2m_V\tau}
    \geq\frac R{16}\inparen{\card{C_i}-2m_V\tau}\,.
\]
Finally, since $m>8$, our assumption gives
\[
    \card{C_i}>\frac{8n}{m}+3m_V\tau
    \geq\frac{8(n+m_V\tau)}m+2m_V\tau\,.
\]
Thus the gain strictly exceeds the closing cost, contradicting
local optimality.

\end{proof}

\subsection{Putting it Together}
We now prove \Cref{thm:private-evolution-clustering}.
\begin{proof}
Condition on the generator and coordinate-wise noise events in each
iteration.
The warm-start condition gives a synthetic point within
distance $\nfrac R8$ of each cluster initially. Inductively,
the generator bound above and \Cref{lem:localSearch:retainClusters}
show that this property persists after each iteration.
For $t\in[T]$, let $S^{(t-1)}$ denote the synthetic
dataset from iteration $t-1$, and let $V=\VarAPI(S^{(t-1)})$.
By \Cref{lem:localSearch:retainClusters},
the selected dataset contains a candidate within distance $\nfrac R8$
of each cluster. Furthermore, any selected candidate corresponding
to cluster $C_i$, say $y_i$, satisfies
$\rho(x_i,y_i)=\rho(x_i,V)$ for some $x_i\in C_i$ that voted for it.
In other words, for any $x\in C_i$,
\begin{align*}
    \rho(x,S^{(t)})
    &\leq \rho(x,x_i)+\rho(x_i,y_i) \\
    &\leq r+\rho(x_i,V) \\
    &\leq r+\max\inbrace{r,
        \inparen{1-\frac{c}{50\lambda}}\rho(x_i,S^{(t-1)})}\,.
\end{align*}
Hence
\[
    \max_{x\in D}\rho(x,S^{(t)})
    \leq\max\inbrace{2r,\,
        r+\inparen{1-\frac{c}{50\lambda}}
        \max_{x\in D}\rho(x,S^{(t-1)})}\,.
\]
Since $50r\lambda/c\geq2r$, iterating this bound and using the
warm-start condition yields
\[
    \max_{x\in D}\rho(x,S^{(t)})
    \leq\frac{50r\lambda}{c}
       +e^{-\frac{ct}{50\lambda}}R\,.
\]
Taking
$T\geq\left\lceil\frac{50\lambda}{c}
\log\inparen{\frac Rr}\right\rceil$ yields
\[
    \max_{x\in D}\rho(x,S^{(T)})
    \leq\frac{50r\lambda}{c}+r
    \leq\frac{100r\lambda}{c}\,.
\]
Each of the two events fails with probability at most
$\nfrac\beta{3T}$ conditional on preceding successful iterations.
Taking a union bound over the $2T$ failure events yields the desired
failure probability.
\end{proof}

\section{Synthetic Data Experiments}\label{apx:synthetic-data-experiments}
We consider the following simple model for PE in 2D Euclidean space.
\begin{enumerate}[1),noitemsep,topsep=0em]
    \item A well-clustered dataset of size $n=2000$ is generated from the Gaussian mixture model
    \[
        0.9 \cdot \mcal N([-1, 0], 0.1^2 I_2) + 0.1\cdot \mcal N([1, 0], 0.1^2 I_2)\,.
    \]
    \item We initialize PE at $m=3$ points $[0, -2], [0, 0], [0, 2]$.
    \item $\VarAPI(y, s^2)$ is modeled as a standard Gaussian distribution, and we generate 2 variations per synthetic data point at each generation with fixed $s^2 = 0.1^2$.
    \item After computing the noisy nearest neighbor histogram, we use either ranking, subsampling, or clustering to select $m=3$ points for the next iteration.
    \item We run PE for 20 iterations and depict the trajectory of the $m=3$ synthetic data points throughout.
\end{enumerate}

\paragraph{Ranking Fails.}
We illustrate the trajectory of PE with ranking in \Cref{fig:ranking-fails}.
Noticeably, the algorithm immediately drops all points near the smaller cluster.
From there, it never recovers.
\begin{figure}[htbp]
    \centering
    \includegraphics[width=\linewidth]{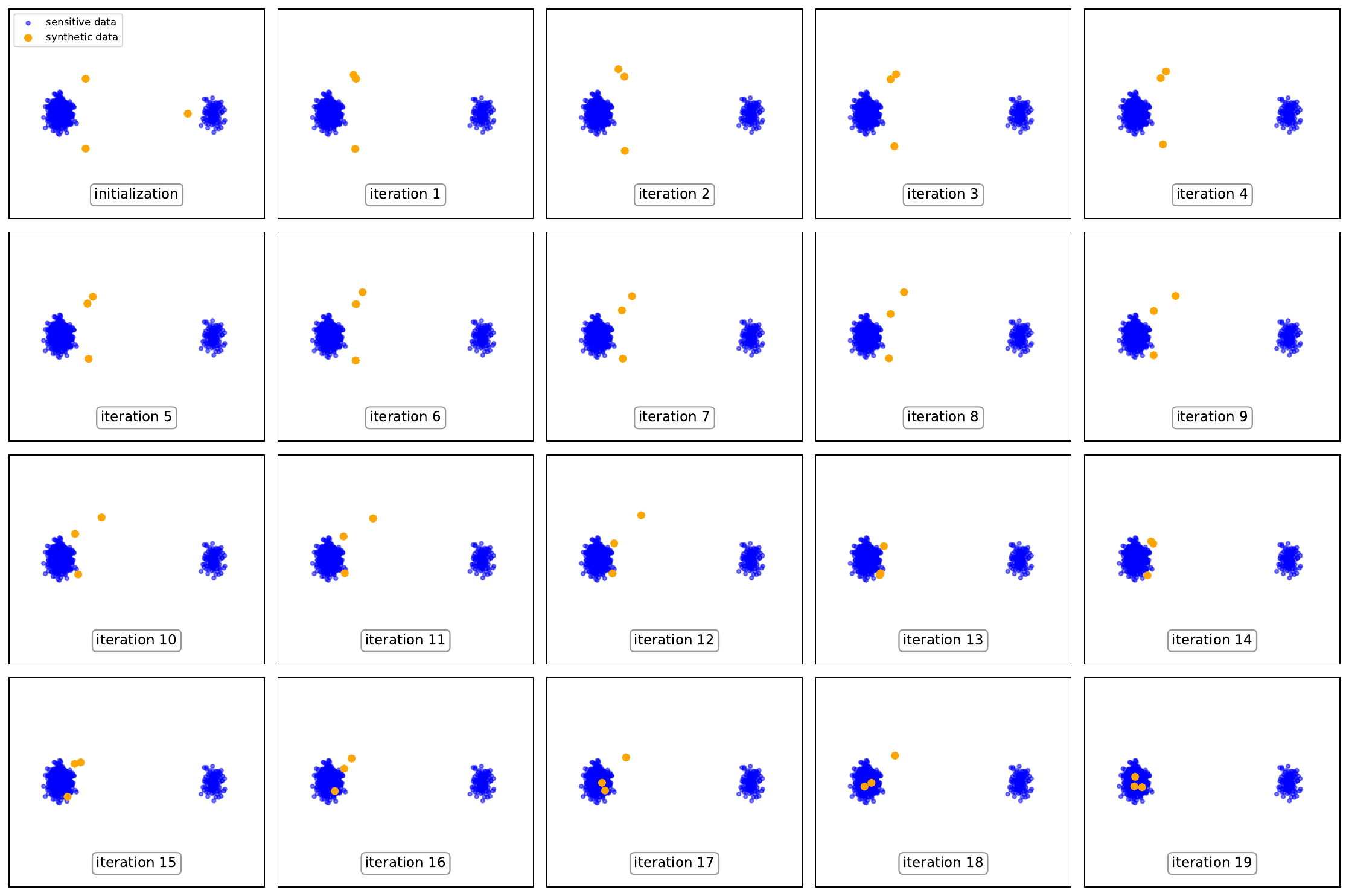}
    \caption{The trajectory of PE with ranking selection (\augPE) on generated 2D Euclidean data.}
    \label{fig:ranking-fails}
\end{figure}

\paragraph{Subsampling Fails.}
The trajectory of PE with subsampling is depicted in \Cref{fig:subsampling-fails}.
We notice that subsampling also drops points near the smaller cluster, and also does not quite converge even to the bigger cluster.
\begin{figure}[hbtp]
    \centering
    \includegraphics[width=\linewidth]{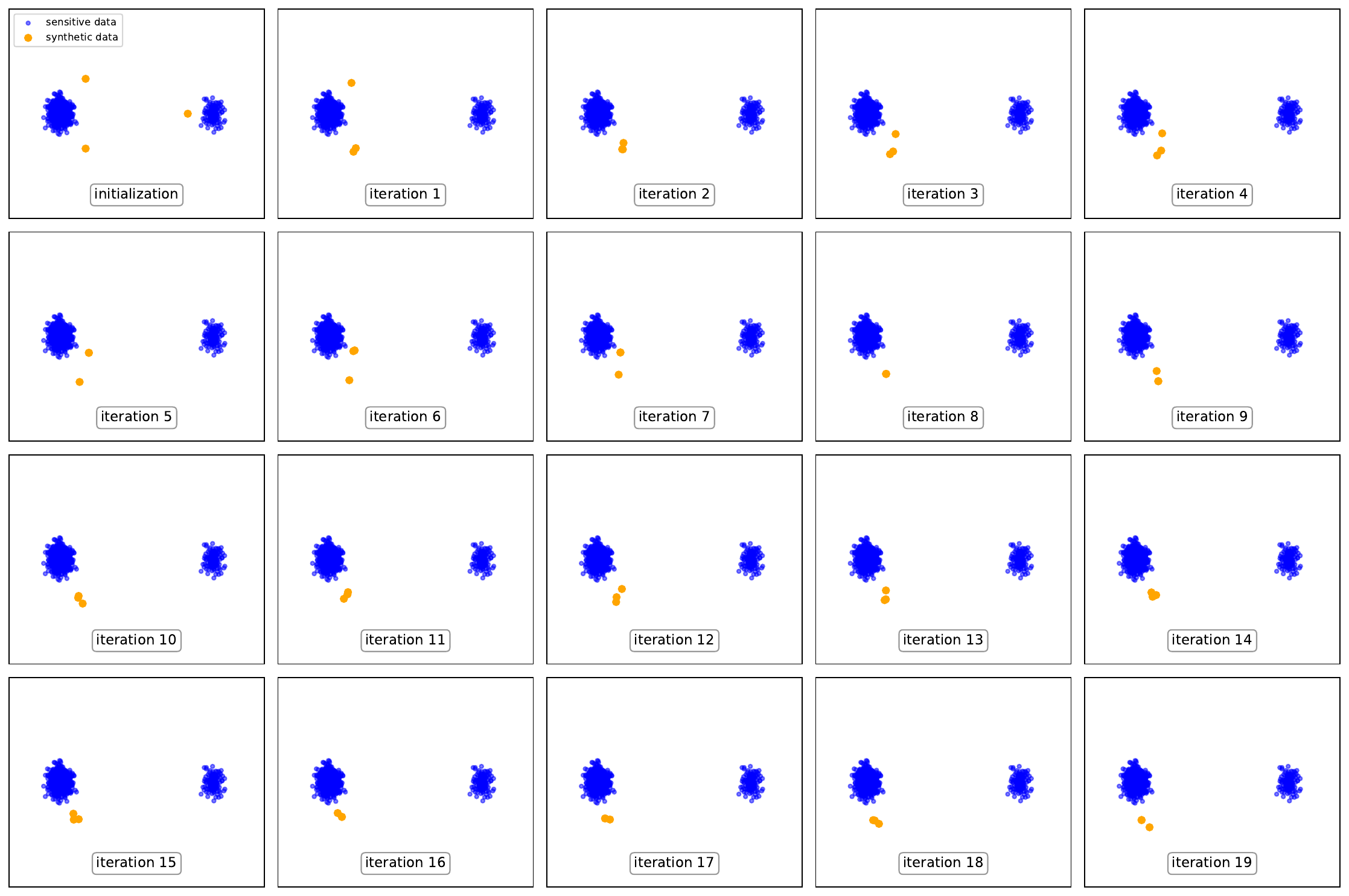}
    \caption{The trajectory of PE with subsampling selection (\subsamplePE) on generated 2D Euclidean data.}
    \label{fig:subsampling-fails}
\end{figure}

\paragraph{Clustering Succeeds.}
Finally, we implement a simplified version of \GAPE\ using the sklearn implementation\footnote{\url{https://scikit-learn-extra.readthedocs.io/en/stable/generated/sklearn_extra.cluster.KMedoids.html}} of the $k$-medoids clustering~\citep{maranzana1963location,park2009simple} as a selection mechanism.
The trajectory for this algorithm is depicted in \Cref{fig:clustering-succeeds}.
We observe that the algorithm consistently selects at least one point near each cluster, even though the location of the third point may oscillate between the two clusters.
\begin{figure}[htbp]
    \centering
    \includegraphics[width=\linewidth]{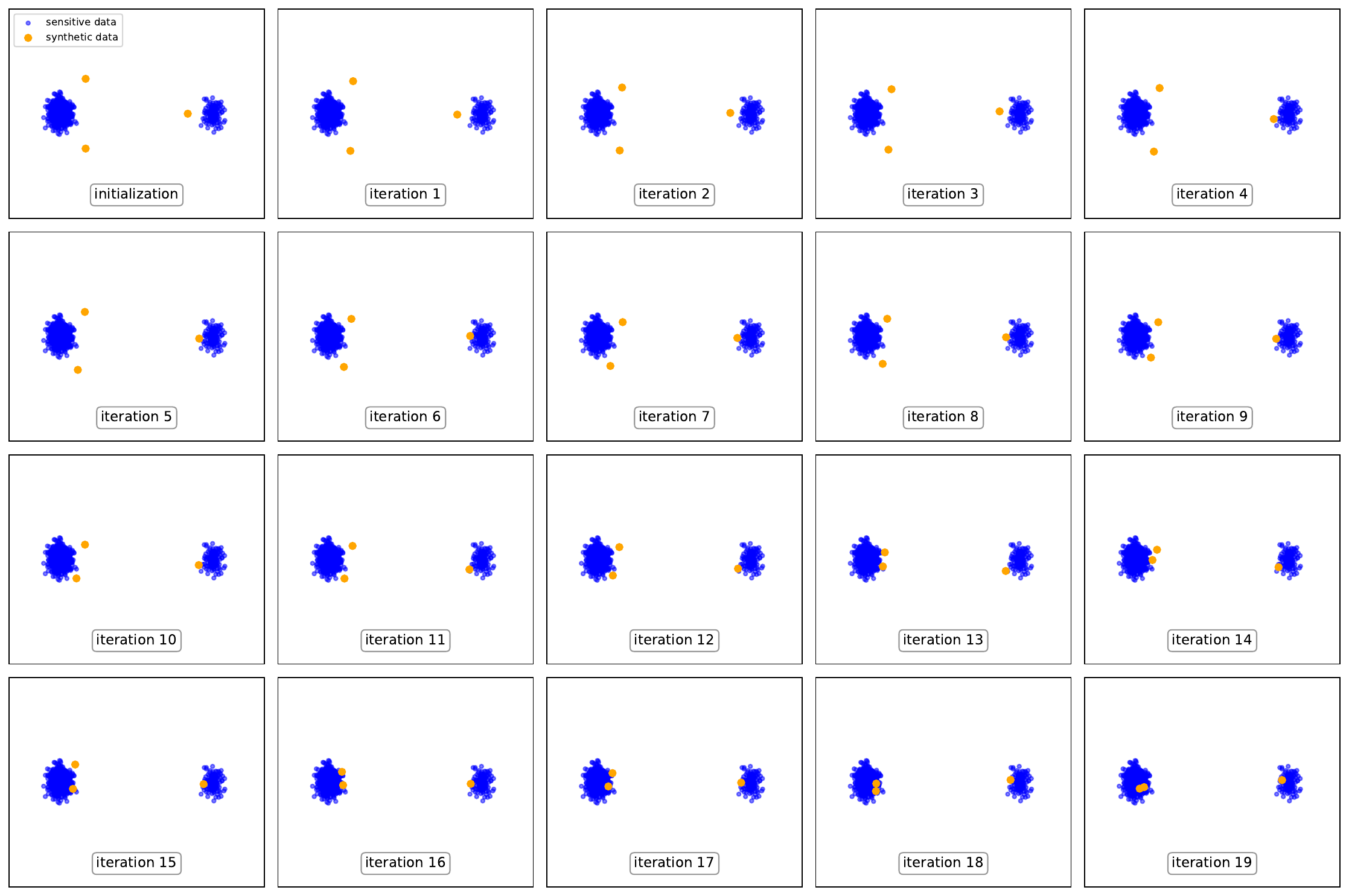}
    \caption{The trajectory of our proposed variant of PE (\GAPE) on generated 2D Euclidean data.}
    \label{fig:clustering-succeeds}
\end{figure}

\section{Additional Experimental Details}\label{apx:experimental-details}
\subsection{Hyperparameter Tuning} 
All of the hyperparameters except those specific to \GAPE\ are common. Thus, we expect any improvement by tuning these parameters to be roughly shared by all three mechanisms. For PubMed, we fix these parameters to be those that appear in the main experiments~\cite{xie2024text}. That is, $m=2000$, $T=10$, and number of variations of each synthetic point per iteration is 7. The number of initial calls to $\RanAPI$ is $2m$. For Reddit, we use $m=10{,}000$ synthetic samples.
In all experiments, the parameter $R$ is set by performing kmeans with $k=m$ on the dataset, computing the maximum distance from a true data point to its assigned center then multiplying this distance by 10. Further hyperparameter tuning of \GAPE\ could result in better relative performance.

\subsection{Precision \& Recall}
A popular implementation of the precision metric\footnote{\url{https://github.com/marcojira/fld/blob/main/fld/metrics/PrecisionRecall.py}} by \citet{jiralerspong2023feature} is based on forming a small ball of radius $r_x > 0$ about each point in the dataset $x\in D$ and checking the fraction of synthetic data points that fall into at least one ball.
$r_x$ is defined to be the distance to the $k$-th nearest neighbor of $x$ in $D$ for some small $k$ (\eg{}, $k=3, 4$).

On the other hand, the recall metric simply exchanges the operation by computing nearest neighbor radius balls around each synthetic data point $s\in S$ and asking for the fraction of true data points $x\in D$ that fall in some ball.

While the nearest neighbor distance of the dataset needed for precision is determined only by the data, the same cannot be said for the nearest neighbor distance of the synthetic dataset, which is controlled by the algorithm.
For example, the trivial algorithm that always outputs two points extremely far away will have perfect recall under this metric!

Thus, we modify the implementation of recall to be data-based: we form the nearest neighbor radius ball about each point in the \emph{dataset} $x\in D$ and compute the fraction of balls that intersect the synthetic dataset.
As a sanity check, the trivial algorithm above should no longer be able to ``game'' this metric.

We emphasize that this modification is consistent when evaluating all algorithms for a fair comparison. Further, note that the metric as defined is somewhat tuned for the case when $m$ is close to $n$, as the value $k$ is small. In settings where $m \ll n$, the recall is likely to be pretty small, and thus other metrics may be more appropriate measures.

\subsection{API Prompts}

\subsubsection{PubMed}

$\RanAPI$: ``Using a variety of sentence structures, write an abstract for a medical research paper:''

$\VarAPI$: ``Please rephrase the following sentences \{tone\} as an abstract for medical research paper: \{sample\}"

  The \{tone\} parameter is randomly selected from these options:
  \begin{itemize}[noitemsep,topsep=0em]
      \item ``in a professional way''
      \item ``in a professional tone''
      \item ``in a professional style''
      \item ``in a concise manner''
      \item ``in a creative style''
      \item ``using imagination''
      \item ``in a storytelling tone''
      \item ``in a formal manner''
      \item ``using a variety of sentence structures''
  \end{itemize}
  The \{sample\} is replaced with the source text to be rephrased.

\subsubsection{Reddit}

$\RanAPI$: ``Given the category, you are required to provide a succinct Reddit post title in 30-100 words.''

$\VarAPI$: ``Please rephrase the following sentences \{tone\} as a Reddit poster:
  \{sample\}''

  The \{tone\} parameter is randomly selected from:
  \begin{itemize}[noitemsep,topsep=0em]
      \item ``in a detailed way''
      \item ``in a professional way''
      \item ``with more details''
      \item ``with a professional tone''
      \item ``in a professional style''
      \item ``in a concise manner''
  \end{itemize}

  \subsection{Further Results}\label{furtherresults}

  \begin{figure}[htbp]
    \centering
    \begin{subfigure}{0.45\textwidth}
        \centering
        \includegraphics[width=\textwidth]{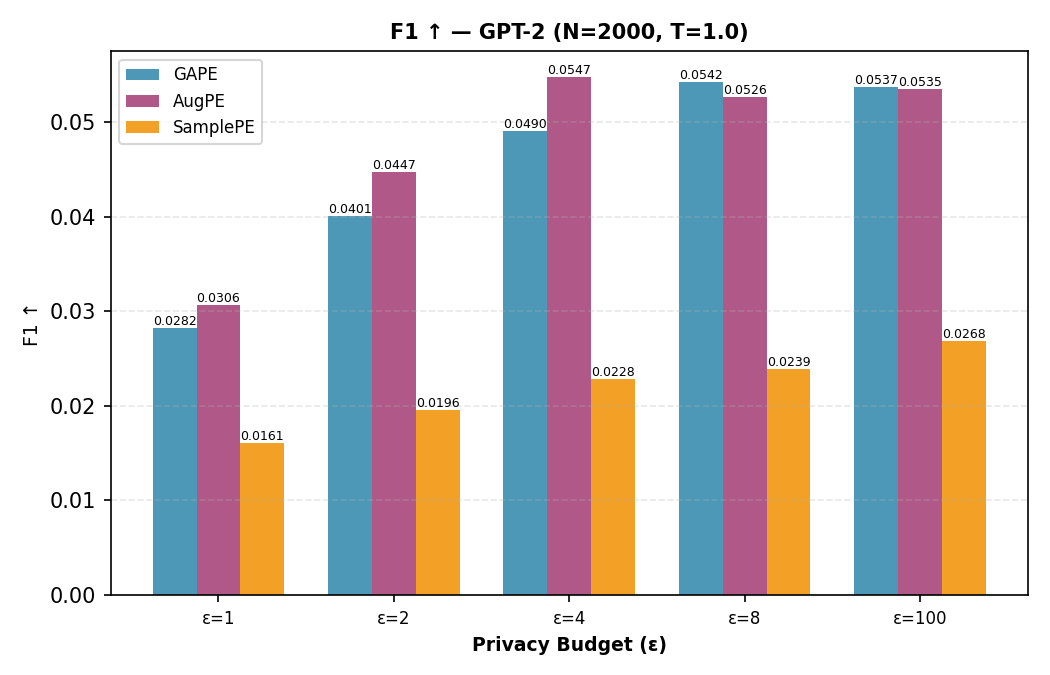}
    \end{subfigure}
    \hfill
    \begin{subfigure}{0.45\textwidth}
        \centering
        \includegraphics[width=\textwidth]{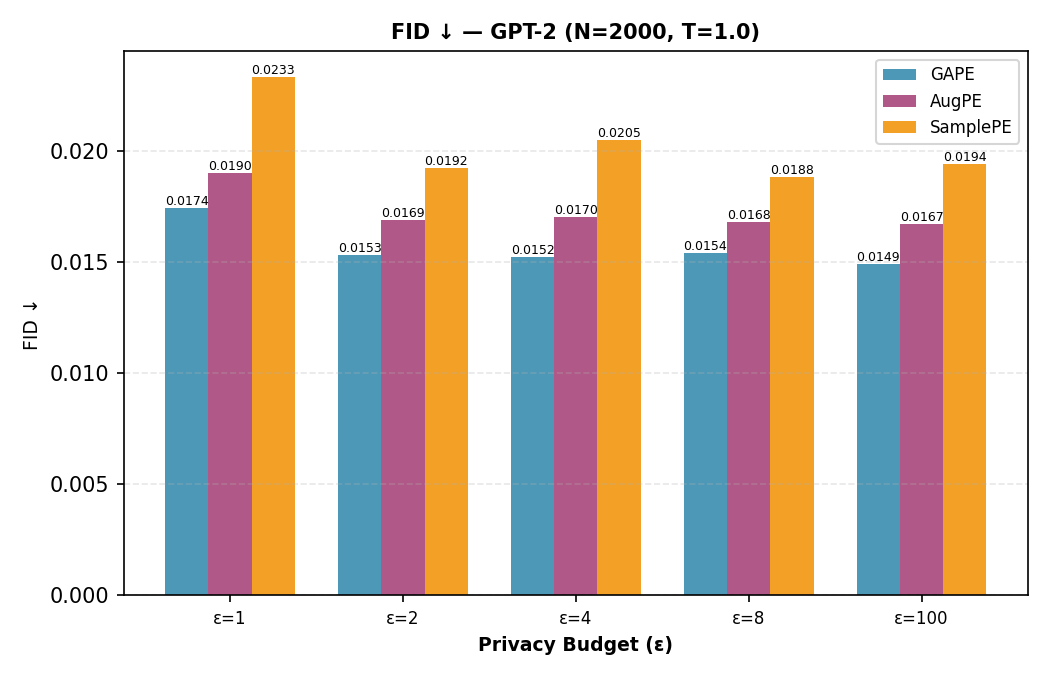}
    \end{subfigure}
        \hfill
    \begin{subfigure}{0.45\textwidth}
        \centering
        \includegraphics[width=\textwidth]{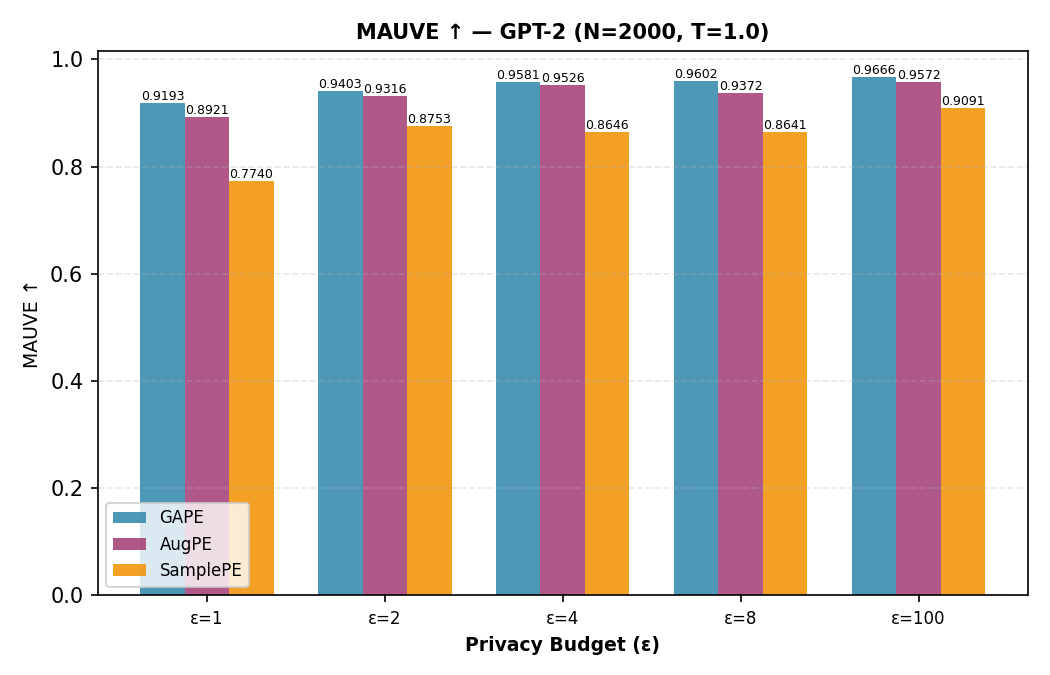}
    \end{subfigure}
            \hfill
    \begin{subfigure}{0.45\textwidth}
        \centering
        \includegraphics[width=\textwidth]{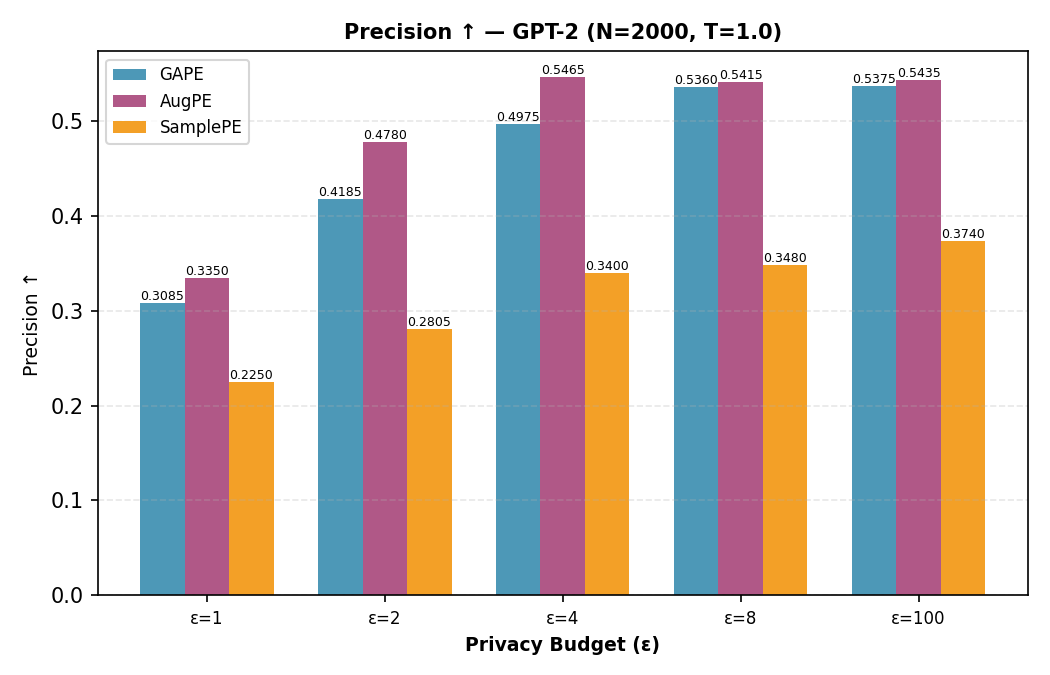}
    \end{subfigure}
    \caption{PubMed, $m=2000$}
    \label{fig:pubmed-additional-metrics}
\end{figure}

  \begin{figure}[htbp]
    \centering
    \begin{subfigure}{\textwidth}
        \centering
        \includegraphics[width=\linewidth,height=0.21\textheight,keepaspectratio]{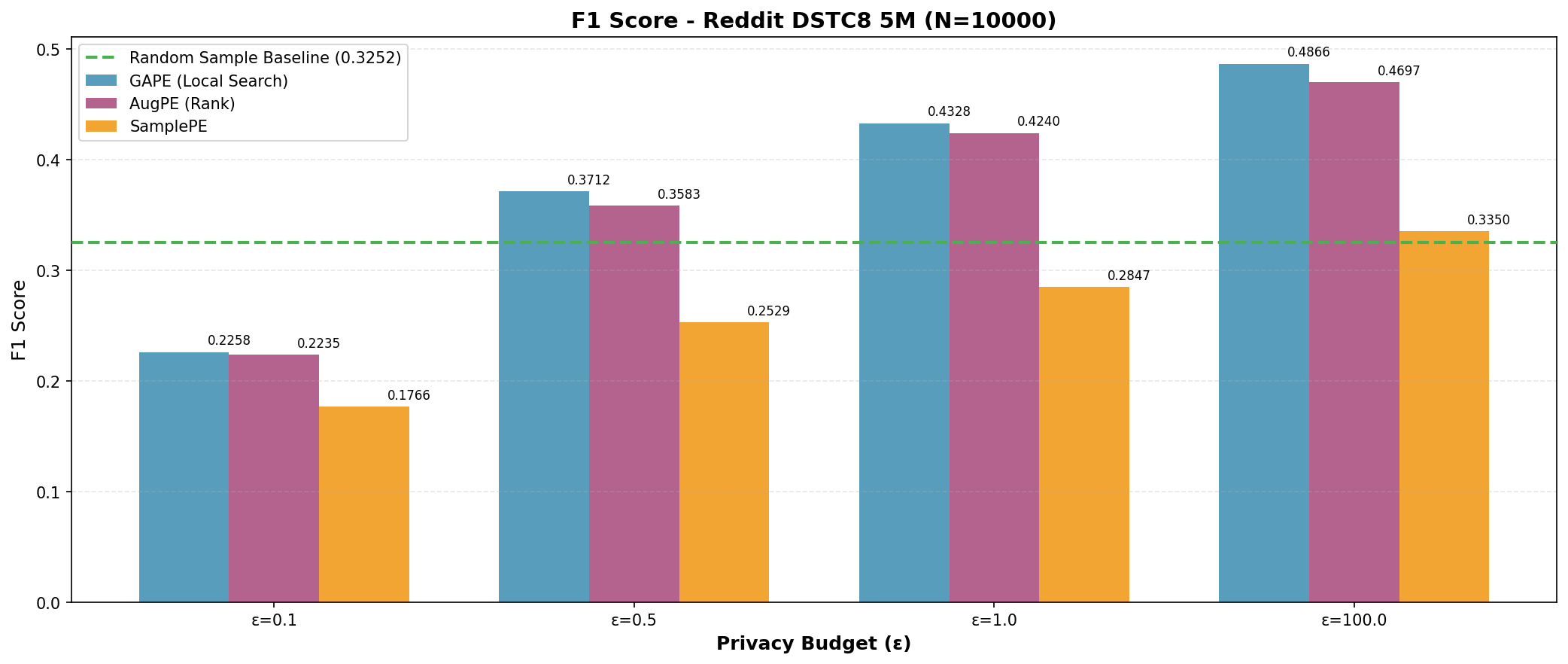}
    \end{subfigure}
    \begin{subfigure}{\textwidth}
        \centering
        \includegraphics[width=\linewidth,height=0.21\textheight,keepaspectratio]{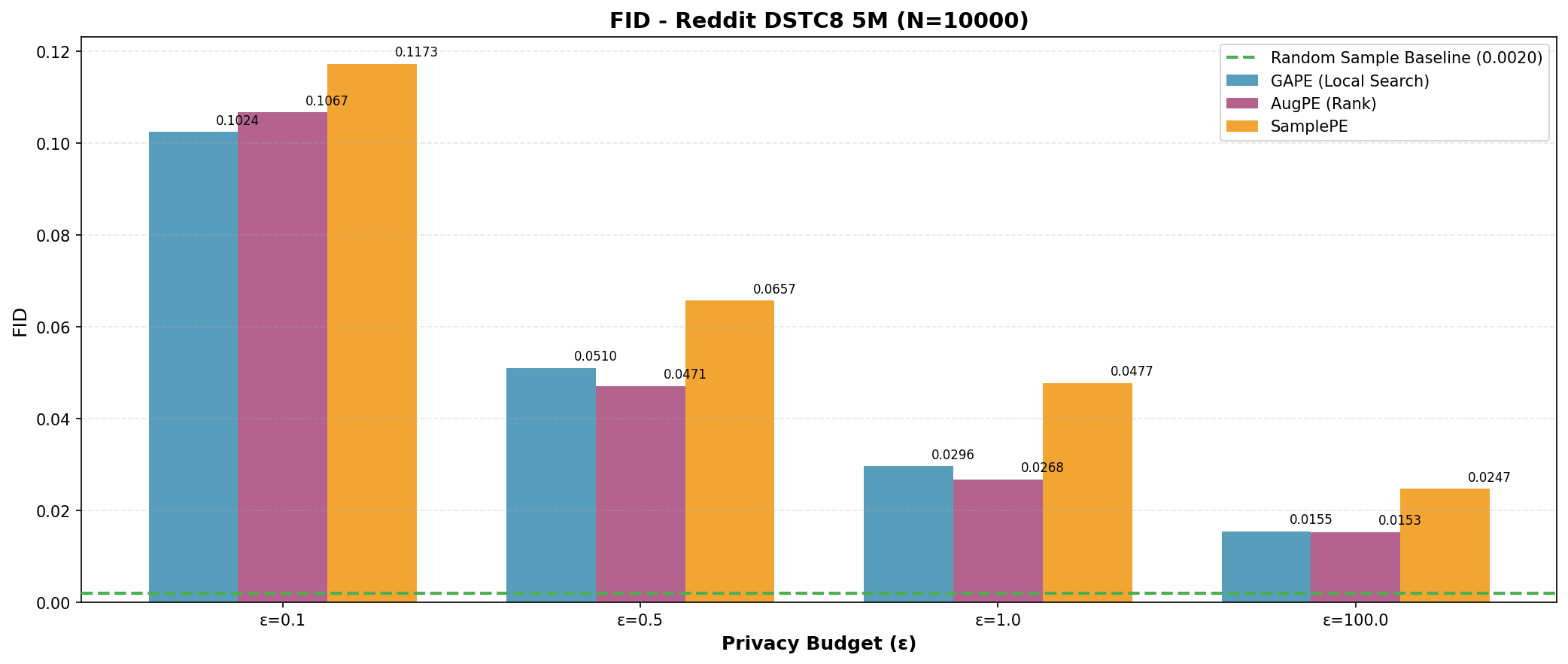}
    \end{subfigure}
    \begin{subfigure}{\textwidth}
        \centering
        \includegraphics[width=\linewidth,height=0.21\textheight,keepaspectratio]{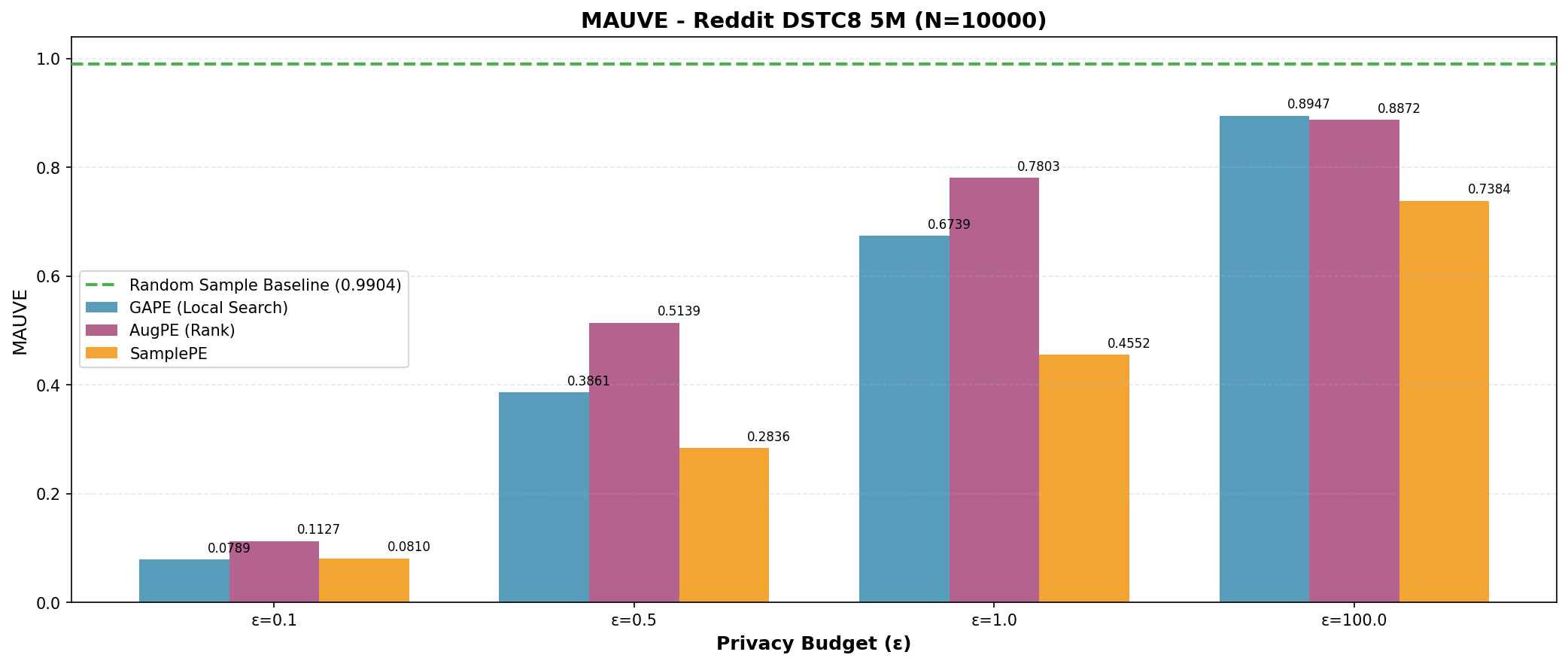}
    \end{subfigure}
    \begin{subfigure}{\textwidth}
        \centering
        \includegraphics[width=\textwidth,height=0.21\textheight,keepaspectratio]{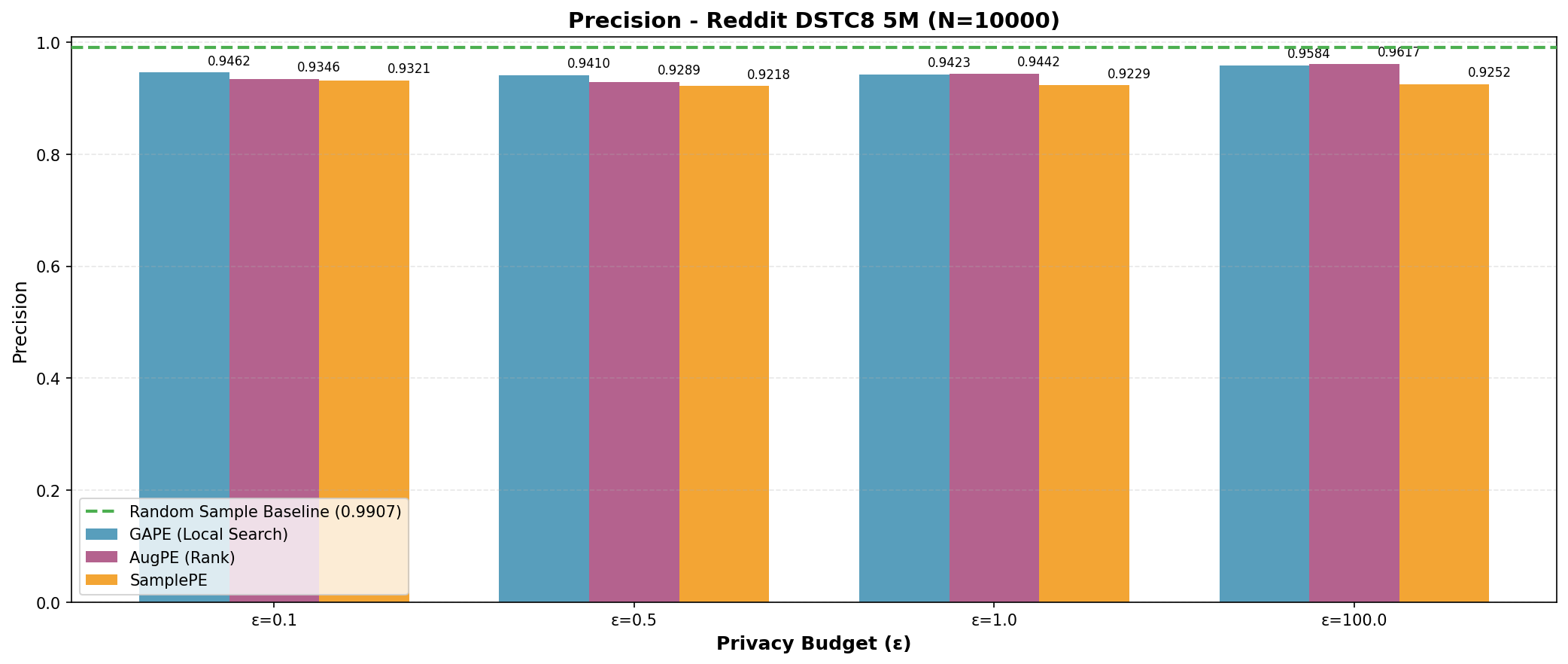}
    \end{subfigure}
    \caption{Reddit, $m=10,000$}
    \label{fig:reddit-additional-metrics}
\end{figure}

\begin{table}[htb]
    \centering
    \small
    \caption{Recall for a wider range of $\eps$ on PubMed ($m=2000$) and Reddit ($m=10{,}000$).}
    \label{tab:recall-eps}
    \begin{tabular}{lccccccc}
        \toprule
        Method & $\eps=0.1$ & $\eps=0.5$ & $\eps=1$ & $\eps=2$ & $\eps=4$ & $\eps=8$ & $\eps=100$ \\
        \midrule
        \multicolumn{8}{l}{\emph{PubMed}} \\
        \augPE & 0.0040 & 0.0105 & 0.0161 & 0.0234 & 0.0288 & 0.0276 & 0.0281 \\
        \GAPE & 0.0036 & 0.0104 & 0.0148 & 0.0211 & 0.0258 & 0.0286 & 0.0282 \\
        \subsamplePE & 0.0032 & 0.0062 & 0.0083 & 0.0102 & 0.0118 & 0.0124 & 0.0139 \\
        \midrule
        \multicolumn{8}{l}{\emph{Reddit}} \\
        \augPE & 0.1269 & 0.2220 & 0.2734 & 0.3216 & 0.3320 & 0.3354 & 0.3107 \\
        \GAPE & 0.1282 & 0.2312 & 0.2809 & 0.3204 & 0.3246 & 0.3241 & 0.3260 \\
        \subsamplePE & 0.0975 & 0.1465 & 0.1683 & 0.1982 & 0.2086 & 0.2156 & 0.2045 \\
        Random Sample Baseline & -- & -- & -- & -- & -- & -- & 0.1946 \\
        \bottomrule
    \end{tabular}
\end{table}

\end{document}